\newcommand{\safemath}[2]{\newcommand{#1}{\ensuremath{#2}\xspace}}
\safemath{\bma}{\mathbf{a}}
\safemath{\bmb}{\mathbf{b}}
\safemath{\bmc}{\mathbf{c}}
\safemath{\bmd}{\mathbf{d}}
\safemath{\bme}{\mathbf{e}}
\safemath{\bmf}{\mathbf{f}}
\safemath{\bmg}{\mathbf{g}}
\safemath{\bmh}{\mathbf{h}}
\safemath{\bmi}{\mathbf{i}}
\safemath{\bmj}{\mathbf{j}}
\safemath{\bmk}{\mathbf{k}}
\safemath{\bml}{\mathbf{l}}
\safemath{\bmm}{\mathbf{m}}
\safemath{\bmn}{\mathbf{n}}
\safemath{\bmo}{\mathbf{o}}
\safemath{\bmp}{\mathbf{p}}
\safemath{\bmq}{\mathbf{q}}
\safemath{\bmr}{\mathbf{r}}
\safemath{\bms}{\mathbf{s}}
\safemath{\bmt}{\mathbf{t}}
\safemath{\bmu}{\mathbf{u}}
\safemath{\bmv}{\mathbf{v}}
\safemath{\bmw}{\mathbf{w}}
\safemath{\bmx}{\mathbf{x}}
\safemath{\bmy}{\mathbf{y}}
\safemath{\bmz}{\mathbf{z}}
\safemath{\bmzero}{\mathbf{0}}
\safemath{\bmone}{\mathbf{1}}
\bmdefine{\biad}{a}
\bmdefine{\bibd}{b}
\bmdefine{\bicd}{c}
\bmdefine{\bidd}{d}
\bmdefine{\bied}{e}
\bmdefine{\bifd}{f}
\bmdefine{\bigd}{g}
\bmdefine{\bihd}{h}
\bmdefine{\biid}{i}
\bmdefine{\bijd}{j}
\bmdefine{\bikd}{k}
\bmdefine{\bild}{l}
\bmdefine{\bimd}{m}
\bmdefine{\bind}{n}
\bmdefine{\biod}{o}
\bmdefine{\bipd}{p}
\bmdefine{\biqd}{q}
\bmdefine{\bird}{r}
\bmdefine{\bisd}{s}
\bmdefine{\bitd}{t}
\bmdefine{\biud}{u}
\bmdefine{\bivd}{v}
\bmdefine{\biwd}{w}
\bmdefine{\bixd}{x}
\bmdefine{\biyd}{y}
\bmdefine{\bizd}{z}
\bmdefine{\bixid}{\xi}
\bmdefine{\bilambdad}{\lambda}
\bmdefine{\bimud}{\mu}
\bmdefine{\bithetad}{\theta}
\bmdefine{\biphid}{\phi}
\bmdefine{\bideltad}{\delta}
\safemath{\bmia}{\biad}
\safemath{\bmib}{\bibd}
\safemath{\bmic}{\bicd}
\safemath{\bmid}{\bidd}
\safemath{\bmie}{\bied}
\safemath{\bmif}{\bifd}
\safemath{\bmig}{\bigd}
\safemath{\bmih}{\bihd}
\safemath{\bmii}{\biid}
\safemath{\bmij}{\bijd}
\safemath{\bmik}{\bikd}
\safemath{\bmil}{\bild}
\safemath{\bmim}{\bimd}
\safemath{\bmin}{\bind}
\safemath{\bmio}{\biod}
\safemath{\bmip}{\bipd}
\safemath{\bmiq}{\biqd}
\safemath{\bmir}{\bird}
\safemath{\bmis}{\bisd}
\safemath{\bmit}{\bitd}
\safemath{\bmiu}{\biud}
\safemath{\bmiv}{\bivd}
\safemath{\bmiw}{\biwd}
\safemath{\bmix}{\bixd}
\safemath{\bmiy}{\biyd}
\safemath{\bmiz}{\bizd}
\safemath{\bmxi}{\bixid}
\safemath{\bmlambda}{\bilambdad}
\safemath{\bmmu}{\bimud}
\safemath{\bmtheta}{\bithetad}
\safemath{\bmphi}{\biphid}
\safemath{\bmdelta}{\bideltad}
\safemath{\bA}{\mathbf{A}}
\safemath{\bB}{\mathbf{B}}
\safemath{\bC}{\mathbf{C}}
\safemath{\bD}{\mathbf{D}}
\safemath{\bE}{\mathbf{E}}
\safemath{\bF}{\mathbf{F}}
\safemath{\bG}{\mathbf{G}}
\safemath{\bH}{\mathbf{H}}
\safemath{\bI}{\mathbf{I}}
\safemath{\bJ}{\mathbf{J}}
\safemath{\bK}{\mathbf{K}}
\safemath{\bL}{\mathbf{L}}
\safemath{\bM}{\mathbf{M}}
\safemath{\bN}{\mathbf{N}}
\safemath{\bO}{\mathbf{O}}
\safemath{\bP}{\mathbf{P}}
\safemath{\bQ}{\mathbf{Q}}
\safemath{\bR}{\mathbf{R}}
\safemath{\bS}{\mathbf{S}}
\safemath{\bT}{\mathbf{T}}
\safemath{\bU}{\mathbf{U}}
\safemath{\bV}{\mathbf{V}}
\safemath{\bW}{\mathbf{W}}
\safemath{\bX}{\mathbf{X}}
\safemath{\bY}{\mathbf{Y}}
\safemath{\bZ}{\mathbf{Z}}
\safemath{\bZero}{\mathbf{0}}
\safemath{\bOne}{\mathbf{1}}
\safemath{\bDelta}{\mathbf{\Delta}}
\safemath{\bLambda}{\mathbf{\UpLambda}}
\safemath{\bPhi}{\mathbf{\Upphi}}
\safemath{\bSigma}{\mathbf{\Upsigma}}
\safemath{\bOmega}{\mathbf{\Upomega}}
\safemath{\bTheta}{\mathbf{\Uptheta}}
\bmdefine{\biAd}{A}
\bmdefine{\biBd}{B}
\bmdefine{\biCd}{C}
\bmdefine{\biDd}{D}
\bmdefine{\biEd}{E}
\bmdefine{\biFd}{F}
\bmdefine{\biGd}{G}
\bmdefine{\biHd}{H}
\bmdefine{\biId}{I}
\bmdefine{\biJd}{J}
\bmdefine{\biKd}{K}
\bmdefine{\biLd}{L}
\bmdefine{\biMd}{M}
\bmdefine{\biNd}{N}
\bmdefine{\biOd}{O}
\bmdefine{\biPd}{P}
\bmdefine{\biQd}{Q}
\bmdefine{\biRd}{R}
\bmdefine{\biSd}{S}
\bmdefine{\biTd}{T}
\bmdefine{\biUd}{U}
\bmdefine{\biVd}{V}
\bmdefine{\biWd}{W}
\bmdefine{\biXd}{X}
\bmdefine{\biYd}{Y}
\bmdefine{\biZd}{Z}
\bmdefine{\biDelta}{\Delta}
\bmdefine{\biLambda}{\Lambda}
\bmdefine{\biPhi}{\Phi}
\bmdefine{\biSigma}{\Sigma}
\bmdefine{\biOmega}{\Omega}
\bmdefine{\biTheta}{\Theta}
\safemath{\bimA}{\biAd}
\safemath{\bimB}{\biBd}
\safemath{\bimC}{\biCd}
\safemath{\bimD}{\biDd}
\safemath{\bimE}{\biEd}
\safemath{\bimF}{\biFd}
\safemath{\bimG}{\biGd}
\safemath{\bimH}{\biHd}
\safemath{\bimI}{\biId}
\safemath{\bimJ}{\biJd}
\safemath{\bimK}{\biKd}
\safemath{\bimL}{\biLd}
\safemath{\bimM}{\biMd}
\safemath{\bimN}{\biNd}
\safemath{\bimO}{\biOd}
\safemath{\bimP}{\biPd}
\safemath{\bimQ}{\biQd}
\safemath{\bimR}{\biRd}
\safemath{\bimS}{\biSd}
\safemath{\bimT}{\biTd}
\safemath{\bimU}{\biUd}
\safemath{\bimV}{\biVd}
\safemath{\bimW}{\biWd}
\safemath{\bimX}{\biXd}
\safemath{\bimY}{\biYd}
\safemath{\bimZ}{\biZd}
\safemath{\bimDelta}{\biDelta}
\safemath{\bimLambda}{\biLambda}
\safemath{\bimPhi}{\biPhi}
\safemath{\bimSigma}{\biSigma}
\safemath{\bimOmega}{\biOmega}
\safemath{\bimTheta}{\biTheta}
\safemath{\setA}{\mathcal{A}}
\safemath{\setB}{\mathcal{B}}
\safemath{\setC}{\mathcal{C}}
\safemath{\setD}{\mathcal{D}}
\safemath{\setE}{\mathcal{E}}
\safemath{\setF}{\mathcal{F}}
\safemath{\setG}{\mathcal{G}}
\safemath{\setH}{\mathcal{H}}
\safemath{\setI}{\mathcal{I}}
\safemath{\setJ}{\mathcal{J}}
\safemath{\setK}{\mathcal{K}}
\safemath{\setL}{\mathcal{L}}
\safemath{\setM}{\mathcal{M}}
\safemath{\setN}{\mathcal{N}}
\safemath{\setO}{\mathcal{O}}
\safemath{\setP}{\mathcal{P}}
\safemath{\setQ}{\mathcal{Q}}
\safemath{\setR}{\mathcal{R}}
\safemath{\setS}{\mathcal{S}}
\safemath{\setT}{\mathcal{T}}
\safemath{\setU}{\mathcal{U}}
\safemath{\setV}{\mathcal{V}}
\safemath{\setW}{\mathcal{W}}
\safemath{\setX}{\mathcal{X}}
\safemath{\setY}{\mathcal{Y}}
\safemath{\setZ}{\mathcal{Z}}
\safemath{\emptySet}{\varnothing}
\safemath{\colA}{\mathscr{A}}
\safemath{\colB}{\mathscr{B}}
\safemath{\colC}{\mathscr{C}}
\safemath{\colD}{\mathscr{D}}
\safemath{\colE}{\mathscr{E}}
\safemath{\colF}{\mathscr{F}}
\safemath{\colG}{\mathscr{G}}
\safemath{\colH}{\mathscr{H}}
\safemath{\colI}{\mathscr{I}}
\safemath{\colJ}{\mathscr{J}}
\safemath{\colK}{\mathscr{K}}
\safemath{\colL}{\mathscr{L}}
\safemath{\colM}{\mathscr{M}}
\safemath{\colN}{\mathscr{N}}
\safemath{\colO}{\mathscr{O}}
\safemath{\colP}{\mathscr{P}}
\safemath{\colQ}{\mathscr{Q}}
\safemath{\colR}{\mathscr{R}}
\safemath{\colS}{\mathscr{S}}
\safemath{\colT}{\mathscr{T}}
\safemath{\colU}{\mathscr{U}}
\safemath{\colV}{\mathscr{V}}
\safemath{\colW}{\mathscr{W}}
\safemath{\colX}{\mathscr{X}}
\safemath{\colY}{\mathscr{Y}}
\safemath{\colZ}{\mathscr{Z}}
\safemath{\opA}{\mathbb{A}}
\safemath{\opB}{\mathbb{B}}
\safemath{\opC}{\mathbb{C}}
\safemath{\opD}{\mathbb{D}}
\safemath{\opE}{\mathbb{E}}
\safemath{\opF}{\mathbb{F}}
\safemath{\opG}{\mathbb{G}}
\safemath{\opH}{\mathbb{H}}
\safemath{\opI}{\mathbb{I}}
\safemath{\opJ}{\mathbb{J}}
\safemath{\opK}{\mathbb{K}}
\safemath{\opL}{\mathbb{L}}
\safemath{\opM}{\mathbb{M}}
\safemath{\opN}{\mathbb{N}}
\safemath{\opO}{\mathbb{O}}
\safemath{\opP}{\mathbb{P}}
\safemath{\opQ}{\mathbb{Q}}
\safemath{\opR}{\mathbb{R}}
\safemath{\opS}{\mathbb{S}}
\safemath{\opT}{\mathbb{T}}
\safemath{\opU}{\mathbb{U}}
\safemath{\opV}{\mathbb{V}}
\safemath{\opW}{\mathbb{W}}
\safemath{\opX}{\mathbb{X}}
\safemath{\opY}{\mathbb{Y}}
\safemath{\opZ}{\mathbb{Z}}
\safemath{\opZero}{\mathbb{O}}
\safemath{\identityop}{\opI}
\safemath{\veca}{\bma}
\safemath{\vecb}{\bmb}
\safemath{\vecc}{\bmc}
\safemath{\vecd}{\bmd}
\safemath{\vece}{\bme}
\safemath{\vecf}{\bmf}
\safemath{\vecg}{\bmg}
\safemath{\vech}{\bmh}
\safemath{\veci}{\bmi}
\safemath{\vecj}{\bmj}
\safemath{\veck}{\bmk}
\safemath{\vecl}{\bml}
\safemath{\vecm}{\bmm}
\safemath{\vecn}{\bmn}
\safemath{\veco}{\bmo}
\safemath{\vecp}{\bmp}
\safemath{\vecq}{\bmq}
\safemath{\vecr}{\bmr}
\safemath{\vecs}{\bms}
\safemath{\vect}{\bmt}
\safemath{\vecu}{\bmu}
\safemath{\vecv}{\bmv}
\safemath{\vecw}{\bmw}
\safemath{\vecx}{\bmx}
\safemath{\vecy}{\bmy}
\safemath{\vecz}{\bmz}
\safemath{\veczero}{\bmzero}
\safemath{\vecone}{\bmone}
\safemath{\vecxi}{\bmxi}
\safemath{\veclambda}{\bmlambda}
\safemath{\vecmu}{\bmmu}
\safemath{\vectheta}{\bmtheta}
\safemath{\vecphi}{\bmphi}
\safemath{\vecdelta}{\bmdelta}
\safemath{\matA}{\bA}
\safemath{\matB}{\bB}
\safemath{\matC}{\bC}
\safemath{\matD}{\bD}
\safemath{\matE}{\bE}
\safemath{\matF}{\bF}
\safemath{\matG}{\bG}
\safemath{\matH}{\bH}
\safemath{\matI}{\bI}
\safemath{\matJ}{\bJ}
\safemath{\matK}{\bK}
\safemath{\matL}{\bL}
\safemath{\matM}{\bM}
\safemath{\matN}{\bN}
\safemath{\matO}{\bO}
\safemath{\matP}{\bP}
\safemath{\matQ}{\bQ}
\safemath{\matR}{\bR}
\safemath{\matS}{\bS}
\safemath{\matT}{\bT}
\safemath{\matU}{\bU}
\safemath{\matV}{\bV}
\safemath{\matW}{\bW}
\safemath{\matX}{\bX}
\safemath{\matY}{\bY}
\safemath{\matZ}{\bZ}
\safemath{\matzero}{\bmzero}
\safemath{\matDelta}{\bDelta}
\safemath{\matLambda}{\bLambda}
\safemath{\matPhi}{\bPhi}
\safemath{\matSigma}{\bSigma}
\safemath{\matOmega}{\bOmega}
\safemath{\matTheta}{\bTheta}
\safemath{\matidentity}{\matI}
\safemath{\matone}{\matO}
\safemath{\rnda}{A}
\safemath{\rndb}{B}
\safemath{\rndc}{C}
\safemath{\rndd}{D}
\safemath{\rnde}{E}
\safemath{\rndf}{F}
\safemath{\rndg}{G}
\safemath{\rndh}{H}
\safemath{\rndi}{I}
\safemath{\rndj}{J}
\safemath{\rndk}{K}
\safemath{\rndl}{L}
\safemath{\rndm}{M}
\safemath{\rndn}{N}
\safemath{\rndo}{O}
\safemath{\rndp}{P}
\safemath{\rndq}{Q}
\safemath{\rndr}{R}
\safemath{\rnds}{S}
\safemath{\rndt}{T}
\safemath{\rndu}{U}
\safemath{\rndv}{V}
\safemath{\rndw}{W}
\safemath{\rndx}{X}
\safemath{\rndy}{Y}
\safemath{\rndz}{Z}
\safemath{\rveca}{\bimA}
\safemath{\rvecb}{\bimB}
\safemath{\rvecc}{\bimC}
\safemath{\rvecd}{\bimD}
\safemath{\rvece}{\bimE}
\safemath{\rvecf}{\bimF}
\safemath{\rvecg}{\bimG}
\safemath{\rvech}{\bimH}
\safemath{\rveci}{\bimI}
\safemath{\rvecj}{\bimJ}
\safemath{\rveck}{\bimK}
\safemath{\rvecl}{\bimL}
\safemath{\rvecm}{\bimM}
\safemath{\rvecn}{\bimN}
\safemath{\rveco}{\bomO}
\safemath{\rvecp}{\bimP}
\safemath{\rvecq}{\bimQ}
\safemath{\rvecr}{\bimR}
\safemath{\rvecs}{\bimS}
\safemath{\rvect}{\bimT}
\safemath{\rvecu}{\bimU}
\safemath{\rvecv}{\bimV}
\safemath{\rvecw}{\bimW}
\safemath{\rvecx}{\bimX}
\safemath{\rvecy}{\bimY}
\safemath{\rvecz}{\bimZ}
\safemath{\rvecxi}{\bmxi}
\safemath{\rveclambda}{\bmlambda}
\safemath{\rvecmu}{\bmmu}
\safemath{\rvectheta}{\bmtheta}
\safemath{\rvecphi}{\bmphi}
\safemath{\rmatA}{\bimA}
\safemath{\rmatB}{\bimB}
\safemath{\rmatC}{\bimC}
\safemath{\rmatD}{\bimD}
\safemath{\rmatE}{\bimE}
\safemath{\rmatF}{\bimF}
\safemath{\rmatG}{\bimG}
\safemath{\rmatH}{\bimH}
\safemath{\rmatI}{\bimI}
\safemath{\rmatJ}{\bimJ}
\safemath{\rmatK}{\bimK}
\safemath{\rmatL}{\bimL}
\safemath{\rmatM}{\bimM}
\safemath{\rmatN}{\bimN}
\safemath{\rmatO}{\bimO}
\safemath{\rmatP}{\bimP}
\safemath{\rmatQ}{\bimQ}
\safemath{\rmatR}{\bimR}
\safemath{\rmatS}{\bimS}
\safemath{\rmatT}{\bimT}
\safemath{\rmatU}{\bimU}
\safemath{\rmatV}{\bimV}
\safemath{\rmatW}{\bimW}
\safemath{\rmatX}{\bimX}
\safemath{\rmatY}{\bimY}
\safemath{\rmatZ}{\bimZ}
\safemath{\rmatDelta}{\bimDelta}
\safemath{\rmatLambda}{\bimLambda}
\safemath{\rmatPhi}{\bimPhi}
\safemath{\rmatSigma}{\bimSigma}
\safemath{\rmatOmega}{\bimOmega}
\safemath{\rmatTheta}{\bimTheta}
\newenvironment{textbmatrix}{	\setlength{\arraycolsep}{2.5pt}%
								\big[\begin{matrix}}{\end{matrix}\big]%
								\raisebox{0.08ex}{\vphantom{M}}}
\def\be{\begin{equation}}
\def\ee{\end{equation}}
\def\een{\nonumber \end{equation}}
\def\mat{\begin{bmatrix}}
\def\emat{\end{bmatrix}}
\def\btm{\begin{textbmatrix}}
\def\etm{\end{textbmatrix}}
\def\ba#1\ea{\begin{align}#1\end{align}}
\def\bas#1\eas{\begin{align*}#1\end{align*}}
\def\bs#1\es{\begin{split}#1\end{split}} 
\def\bg#1\eg{\begin{gather}#1\end{gather}}
\def\bml#1\eml{\begin{multline}#1\end{multline}}
\def\bi#1\ei{\begin{itemize}#1\end{itemize}}
\newcommand{\lefto}{\mathopen{}\left}
\DeclareMathOperator{\sign}{sign}			
\newcommand{\abs}[1]{\lefto\lvert#1\right\rvert}		
\newcommand{\vecnorm}[1]{\lefto\lVert#1\right\rVert}		
\safemath{\dirac}{\delta}					
\safemath{\krond}{\dirac}					
\safemath{\upto}{\uparrow}
\safemath{\downto}{\downarrow}
\safemath{\iu}{j}							
\safemath{\ev}{\lambda}						
\safemath{\hilseqspace}{l^{2}}				
\newcommand{\banachfunspace}[1]{\setL^{#1}}	
\safemath{\hilfunspace}{\banachfunspace{2}}	
\safemath{\SNR}{\text{\sc snr}} 				
\safemath{\No}{N_0}							
\safemath{\Es}{E_s}							
\safemath{\Eb}{E_b}							
\safemath{\EbNo}{\frac{\Eb}{\No}}
\safemath{\EsNo}{\frac{\Es}{\No}}
\DeclareMathOperator{\CHop}{\ensuremath{\opH}} 
\safemath{\tvir}{\rndh_{\CHop}}				
\safemath{\tvtf}{\rndl_{\CHop}}				
\safemath{\spf}{\rnds_{\CHop}}				
\safemath{\bff}{H_{\CHop}}					
\safemath{\ircf}{r_{h}}						
\safemath{\tftvcf}{r_{s}}					
\safemath{\tfcf}{r_{l}}						
\safemath{\bfcf}{r_{H}}						
\safemath{\tcorr}{c_h}						
\safemath{\scf}{c_{s}}						
\safemath{\tfcorr}{c_{l}}					
\safemath{\fcorr}{c_{H}}						
\safemath{\mi}{I}							
\safemath{\capacity}{C}						
\safemath{\normal}{\mathcal{N}}			
\safemath{\jpg}{\mathcal{CN}}			
\safemath{\mchain}{\leftrightarrow}		
\safemath{\dB}{\,\mathrm{dB}}
\safemath{\dBm}{\,\mathrm{dBm}}
\safemath{\Hz}{\,\mathrm{Hz}}
\safemath{\kHz}{\,\mathrm{kHz}}
\safemath{\MHz}{\,\mathrm{MHz}}
\safemath{\GHz}{\,\mathrm{GHz}}
\safemath{\s}{\,\mathrm{s}}
\safemath{\ms}{\,\mathrm{ms}}
\safemath{\mus}{\,\mathrm{\text{\textmu}s}}
\safemath{\ns}{\,\mathrm{ns}}
\safemath{\ps}{\,\mathrm{ps}}
\safemath{\meter}{\,\mathrm{m}}
\safemath{\mm}{\,\mathrm{mm}}
\safemath{\cm}{\,\mathrm{cm}}
\safemath{\m}{\,\mathrm{m}}
\safemath{\W}{\,\mathrm{W}}
\safemath{\mW}{\, \mathrm{mW}}
\safemath{\J}{\,\mathrm{J}}
\safemath{\K}{\,\mathrm{K}}
\safemath{\bit}{\,\mathrm{bit}}
\safemath{\nat}{\,\mathrm{nat}}
\safemath{\define}{\triangleq}			
\safemath{\equivalent}{\sim}
\safemath{\distas}{\sim}					
\safemath{\sdiff}{\Delta}				
\safemath{\reals}{\mathbb{R}}
\safemath{\positivereals}{\reals_{+}}
\safemath{\integers}{\mathbb{Z}}
\safemath{\posint}{\integers_{+}}
\safemath{\naturals}{\mathbb{N}}
\safemath{\posnaturals}{\naturals_{+}}
\safemath{\complexset}{\mathbb{C}}
\safemath{\rationals}{\mathbb{Q}}
\newcommand*{\fancyrefapplabelprefix}{app}		
\newcommand*{\fancyrefthmlabelprefix}{thm}		
\newcommand*{\fancyreflemlabelprefix}{lem}		
\newcommand*{\fancyrefcorlabelprefix}{cor}		
\newcommand*{\fancyrefdeflabelprefix}{def}		
\newcommand*{\fancyrefalglabelprefix}{alg}		
\newcommand*{\fancyrefproplabelprefix}{prop}		
\newcommand*{\fancyrefexmpllabelprefix}{exmpl}
\newcommand*{\fancyreftbllabelprefix}{tbl}
 \newtheorem{thm}{Theorem}
 \newtheorem{lem}[thm]{Lemma}
\safemath{\dictab}{[\,\dicta\,\,\dictb\,]}
\safemath{\ysig}{\bmy}
\safemath{\ysighat}{\hat{\ysig}}
\safemath{\ysigdim}{M}
\safemath{\xsig}{\bmx}
\safemath{\xsigdim}{N}
\safemath{\nx}{n_x}
\safemath{\zsig}{\bmz}
\safemath{\zsigdim}{\ysigdim}
\safemath{\rsig}{\bmr}
\safemath{\Adict}{\bA}
\safemath{\Adicttilde}{\widetilde{\Adict}}
\safemath{\Adictdim}{\outputdim\times\xsigdim}
\safemath{\avec}{\bma}
\safemath{\avectilde}{\tilde{\avec}}
\safemath{\Bdict}{\bB}
\safemath{\Bdicttilde}{\widetilde{\Bdict}}
\safemath{\Cdict}{\bC}
\safemath{\cvec}{\bmc}
\safemath{\Ddict}{\bD}
\safemath{\Ddictdim}{\ysigdim\times\xsigdim}
\safemath{\dvec}{\bmd}
\safemath{\Ddicttilde}{\widetilde{\bD}}
\safemath{\Bonb}{\bB}
\safemath{\bvec}{\bmb}
\safemath{\Bonbdim}{\ysigdim\times\ysigdim}
\safemath{\noise}{\bmn}
\safemath{\noisedim}{\ysigim}
\safemath{\err}{\bme}
\safemath{\errdim}{\ysigdim}
\safemath{\errset}{\setE}
\safemath{\nerr}{n_e}
\safemath{\delop}{\bP_\errset}
\safemath{\delopc}{\bP_{{\errset}^c}}
\safemath{\cplxi}{\imath}
\safemath{\cplxj}{\jmath}
\safemath{\dict}{\matD}
\safemath{\inputdim}{N}		
\safemath{\outputdim}{M}		
\safemath{\sparsity}{S}	
\safemath{\inputdimA}{{N_a}}	
\safemath{\inputdimB}{{N_b}}	
\safemath{\elemA}{{n_a}}	
\safemath{\elemB}{{n_b}}	
\safemath{\resA}{\matR_a}	
\safemath{\resB}{\matR_b}	
\safemath{\subD}{\matS} 
\safemath{\subA}{\matS_a} 
\safemath{\subB}{\matS_b} 
\safemath{\dicta}{\matA} 	
\safemath{\dictb}{\matB} 	
\safemath{\hollowS}{H}
\safemath{\hollowA}{H_a}
\safemath{\hollowB}{H_b}
\safemath{\cross}{Z}
\safemath{\coh}{\mu_d}			
\safemath{\coha}{\mu_a}			
\safemath{\cohb}{\mu_b}			
\safemath{\mubs}{\nu}	
\safemath{\cohm}{\mu_m} 
\safemath{\dictset}{\setD}	
\safemath{\dictsetp}{\dictset(\coh,\coha,\cohb)}	
\safemath{\dictsetgen}{\dictset_\text{gen}}
\safemath{\dictsetgenp}{\dictsetgen(\coh)}
\safemath{\dictsetonb}{\dictset_\text{onb}}
\safemath{\dictsetonbp}{\dictsetonb(\coh)}
\safemath{\leftside}{U}
\safemath{\rightsideA}{R_a}
\safemath{\rightsideB}{R_b}
\safemath{\indexS}{\setI_S} 
\safemath{\na}{n_a}			
\safemath{\nb}{n_b}			
\safemath{\coeffa}{p_i}	
\safemath{\coeffb}{q_j}	
\safemath{\seta}{\setP}		
\safemath{\setb}{\setQ}     
\safemath{\setw}{\setW}	
\safemath{\setz}{\setZ}	
\safemath{\cola}{\veca}		
\safemath{\colb}{\vecb}		
\safemath{\cold}{\vecd}		
\safemath{\inputvec}{\vecx} 	
\safemath{\error}{\vece}	
\safemath{\noiseout}{\vecz} 	
\safemath{\inputvecel}{x}
\safemath{\inputveca}{\vecx_a}
\safemath{\inputvecb}{\vecx_b}
\safemath{\outputvec}{\vecy}	
\safemath{\lambdamin}{\lambda_{\mathrm{min}}}
\newcommand{\normtwo}[1]{\vecnorm{#1}_2}
\newcommand{\normone}[1]{\vecnorm{#1}_1}
\newcommand{\normfro}[1]{\vecnorm{#1}_\text{F}}
\safemath{\elltwo}{\ell_2}
\safemath{\ellone}{\ell_1}
\safemath{\ellzero}{\ell_0}
\safemath{\ellinf}{\ell_\infty}
\safemath{\licard}{Z(\coh,\coha,\cohb)}
\safemath{\xsol}{\hat{x}}
\safemath{\xbord}{x_b}		
\safemath{\xstat}{x_s}		
\safemath{\xstatLone}{\tilde{x}_s}
\safemath{\order}{\mathcal{O}} 
\safemath{\scales}{\Theta} 
\safemath{\ones}{\mathbf{1}} 
\safemath{\zeroes}{\mathbf{0}} 
\safemath{\thlone}{\kappa(\coh,\cohb)} 
\safemath{\constoneA}{\delta} 
\safemath{\constoneB}{\epsilon} 
\safemath{\nlarge}{L}				   
\safemath{\sumlarge}{S_\nlarge}
\safemath{\maxlarger}{P_\nlarge}	   
\safemath{\Pzero}{\textrm{P0}}	
\safemath{\Pone}{\textrm{P1}}
\safemath{\vecfir}{\vecw}			 
\safemath{\vecsec}{\vecz}
\safemath{\elvecfir}{w}              
\safemath{\elvecsec}{z}				 
\safemath{\nlargefir}{n}
\safemath{\normout}{\gamma}
\safemath{\auxfun}{h}
\safemath{\supp}{\textrm{supp}}
\safemath{\indexa}{\ell}
\safemath{\indexb}{r}
\safemath{\indexc}{i}
\safemath{\indexd}{j}
\safemath{\project}{P}
\renewcommand{\vecw}{\bar{\mathbf{w}}}
\begin{document}

\title{Sparse Factor Analysis for Learning and Content Analytics}

\author{\name Andrew\ S.\ Lan \email mr.lan@sparfa.com \\
       \name Andrew E. Waters \email waters@sparfa.com \\
	\name Christoph Studer \email studer@sparfa.com \\
	\name Richard G. Baraniuk \email richb@sparfa.com \\
       \addr Dept.~Electrical and Computer Engineering\\
       Rice University\\
       Houston, TX 77005, USA}

\editor{TBD}

\maketitle

\begin{abstract}
We develop a new model and algorithms for machine learning-based {\em{learning analytics}}, which estimate a learner's knowledge of the concepts underlying a domain, and {\em{content analytics}}, which estimate the relationships among a collection of questions and those concepts.
Our model represents the probability that a learner provides the correct response to a question in terms of three factors: their understanding of a set of underlying concepts, the concepts involved in each question, and each question's intrinsic difficulty.
We estimate these factors given the graded responses to a collection of questions.  
The underlying estimation problem is ill-posed in general, especially when only a subset of the questions are answered. 
The key observation that enables a well-posed solution is the fact that typical educational domains of interest involve only a small number of key concepts.  
Leveraging this observation, we develop both a bi-convex maximum-likelihood-based solution and a Bayesian solution to the resulting {\em SPARse Factor Analysis} (SPARFA) problem. 
We also incorporate user-defined tags on questions to facilitate the interpretability of the estimated factors.
Experiments with synthetic and real-world data demonstrate the efficacy of our approach.  
Finally, we make a connection between SPARFA and noisy, binary-valued~(\mbox{1-bit}) dictionary learning that is of independent interest.
\end{abstract}

\begin{keywords}
factor analysis, sparse probit regression, sparse logistic regression, Bayesian latent factor analysis, personalized learning.
\end{keywords}

\section{Introduction} \label{sec:intro}

Textbooks, lectures, and homework assignments were the answer to the main educational challenges of the 19th century, but they are the main bottleneck of the 21st century. 
Today's textbooks are static, linearly organized, time-consuming to develop, soon out-of-date, and expensive.  
Lectures remain a primarily passive experience of copying down what an instructor says and writes on a board (or projects on a screen).  
Homework assignments that are not graded for weeks provide poor feedback to learners (e.g., students) on their learning progress.  
Even more importantly, today's courses provide only a ``one-size-fits-all'' learning experience that does not cater to the background, interests, and goals of individual learners.  

\subsection{The Promise of Personalized Learning}

We envision a world where access to high-quality, personally tailored educational experiences is affordable to all of the world's learners. 
The key is to integrate textbooks, lectures, and homework assignments into a {\em personalized learning system}~(PLS) that closes the learning feedback loop by 
\begin{inparaenum}[(i)] 
\item continuously monitoring and analyzing learner interactions with learning resources in order to assess their learning progress and 
\item providing timely remediation, enrichment, or practice based on that analysis.
 \end{inparaenum}
See \cite{adaptivetest}, \cite{lookaheaddecision}, \cite{stampersm}, \cite{pomdplearn}, \cite{cohen},  and \cite{knewton} for various visions and examples. 

Some progress has been made over the past few decades on personalized learning; see, for example, the sizable literature on \emph{intelligent tutoring systems} discussed in~\cite{itslesson}. 
To date, the lionshare of fielded, intelligent tutors have been rule-based systems that are hard-coded by domain experts to give learners feedback for pre-defined scenarios (e.g., \cite{rule}, \cite{pittcmu}, \cite{andes2005}, and \cite{butzregina}).  
The specificity of such systems is counterbalanced by their high development cost in terms of both time and money, which has limited their scalability and impact in practice.  

In a fresh direction, recent progress has been made on applying \emph{machine learning} algorithms to mine learner interaction data and educational content (see the overview articles by \cite{edm} and \cite{edm2009}). 
In contrast to rule-based approaches, machine learning-based PLSs promise to be rapid and inexpensive to deploy, which will enhance their scalability and impact.  Indeed, the dawning age of ``big data'' provides new opportunities to build PLSs based on data rather than rules.
We conceptualize the architecture of a generic machine learning-based PLS to have three interlocking components:
\begin{itemize}
\item \emph{Learning analytics}: Algorithms that estimate what each learner does and does not understand based on data obtained from tracking their interactions with learning content. 
\item \emph{Content analytics}: Algorithms that organize learning content such as text, video, simulations, questions, and feedback hints.
\item \emph{Scheduling}: Algorithms that use the results of learning and content analytics to suggest to each learner at each moment what they should be doing in order to maximize their learning outcomes, in effect closing the learning feedback loop.

\end{itemize}

\subsection{Sparse Factor Analysis (SPARFA)}
\label{sec:SPARFASectionref}

In this paper, we develop a new model and a suite of algorithms for joint machine learning-based {\em learning analytics} and  {\em content analytics}.  
Our model (developed in \fref{sec:model}) represents the probability that a learner provides the correct response to a given question in terms of three factors: their knowledge of the underlying concepts, the concepts involved in each question, and each question's intrinsic difficulty.

Figure \ref{fig:graphex} provides a graphical depiction of our approach.
As shown in Figure \ref{fig:graphex}(a), we are provided with data relating to the correctness of the learners' responses to a collection of questions.
We encode these graded responses in a ``gradebook,'' a source of information commonly used in the context of classical test theory (\cite{ctt}).
Specifically, the ``gradebook'' is a matrix with entry $Y_{i,j} = 1$ or 0 depending on whether learner $j$ answers question~$i$ correctly or incorrectly, respectively.
Question marks correspond to incomplete data due to unanswered or unassigned questions. 
Working left-to-right in Figure \ref{fig:graphex}(b), we assume that the collection of questions (rectangles) is related to a small number of abstract concepts (circles) by a bipartite graph, where the edge weight $W_{i,k}$ indicates the degree to which question $i$ involves concept~$k$. 
We also assume that question $i$ has intrinsic difficulty~$\mu_i$.
Denoting learner $j$'s knowledge of concept $k$ by $C_{k,j}$, we calculate the probabilities that  the learners answer the questions correctly in terms of $\bW\bC+\bM$, where $\bW$ and $\bC$ are matrix versions of $W_{i,k}$ and $C_{k,j}$, respectively, and $\bM$ is a matrix containing  the intrinsic question difficulty~$\mu_i$ on row $i$.
We transform the probability of a correct answer to an actual $1/0$ correctness via a standard probit or logit link function (see \mbox{\cite{gpml}}).  

\begin{figure}[t]
\vspace{-1.0cm}
\centering
\subfigure[Graded learner--question responses.]{\includegraphics[scale=1.25]{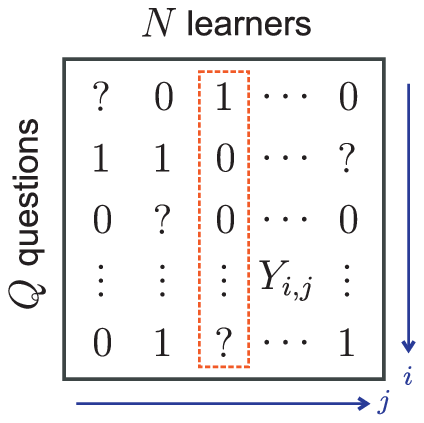}\label{fig:graphex_a}}\hspace{-0.2cm}
\subfigure[Inferred question--concept association graph.]{\includegraphics[scale=1.25]{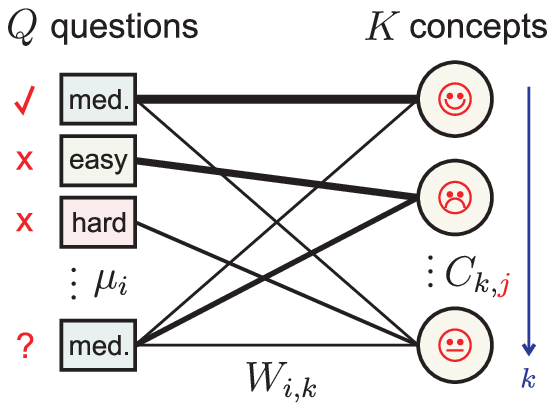}\label{fig:graphex2_b}}
\vspace{-0.2cm}
\caption{(a) The SPARFA framework processes a (potentially incomplete) binary-valued dataset of graded learner--question responses to (b) estimate the underlying questions-concept association graph and the abstract conceptual knowledge of each learner (illustrated here by smiley faces for learner $j=3$, the column in (a) selected by the red dashed box).}
\label{fig:graphex}
\vspace{-0.7cm}
\end{figure}

Armed with this model and given incomplete observations of the graded learner--question responses $Y_{i,j}$, our goal is to estimate the factors~$\bW$,~$\bC$, and $\bM$.
Such a {\em factor-analysis problem} is ill-posed in general, especially when each learner answers only a small subset of the collection of questions (see \cite{factoran} for a factor analysis overview).
Our first key observation that enables a well-posed solution is the fact that typical educational domains of interest involve only a small number of key concepts (i.e., we have $K \ll N,Q$ in \fref{fig:graphex}).  
Consequently, $\bW$ becomes a tall, narrow $Q \times K$ matrix that relates the questions to a small set of abstract concepts, while $\bC$ becomes a short, wide $K \times N$ matrix that relates learner knowledge to that same small set of abstract concepts.  
Note that the concepts are ``abstract'' in that they will be estimated from the data rather than dictated by a subject matter expert. 
Our second key observation is that each question involves only a small subset of the abstract concepts.
Consequently, the matrix $\bW$ is sparsely populated. 
Our third observation is that the entries of $\bW$ should be non-negative, since we postulate that having strong concept knowledge should never hurt a learner's chances to answer questions correctly. This constraint on $\bW$ ensures that large positive values in~$\bC$ represent strong knowledge of the associated abstract concepts, which is crucial for a PLS to generate human-interpretable feedback to learners on their strengths and weaknesses.

Leveraging these observations, we propose below a suite of new algorithms for solving the {\em SPARse Factor Analysis} (SPARFA) problem.
Section \ref{sec:matrix} develops SPARFA-M, which uses an efficient bi-convex optimization approach to produce point estimates of the factors.
Section \ref{sec:bayes} develops SPARFA-B, which uses Bayesian factor analysis to produce posterior distributions of the factors.  
Since the concepts are abstract mathematical quantities estimated by the SPARFA algorithms, we develop a  \emph{post-processing} step in \fref{sec:taganalysis} to facilitate interpretation of the estimated latent concepts by associating user-defined tags for each question with each abstract concept.

In Section \ref{sec:experiments}, we report on a range of experiments with a variety of synthetic and real-world data that demonstrate the wealth of information provided by the estimates of $\bW$, $\bC$ and $\bM$. 
As an example, \fref{fig:stemg14j} provides the results for a dataset collected from learners using \cite{stemwebsite}, a science curriculum platform.  The dataset consists of 145 Grade~8 learners from a single school district answering a manually tagged set of 80 questions on Earth science; only 13.5\% of all graded learner--question responses were observed.  
We applied the \mbox{SPARFA-B} algorithm to retrieve the factors $\bW$, $\bC$, and $\bM$ using 5 latent concepts. 
The resulting sparse matrix $\bW$ is displayed as a bipartite graph in \fref{fig:stems_a}; circles denote the abstract concepts and boxes denote questions. 
Each question box is labeled with its estimated intrinsic difficulty $\mu_i$, with large positive values denoting easy questions. Links between the concept and question nodes represent the active (non-zero) entries of $\bW$, with thicker links denoting larger values $W_{i,k}$. Unconnected questions are those for which no concept explained the learners' answer pattern; such questions typically have either very low or very high intrinsic difficulty, resulting in nearly all learners answering them correctly or incorrectly. The tags provided in \fref{fig:stems_b} enable human-readable interpretability of the estimated abstract concepts.

\begin{figure}[tp]
\vspace{-1.2cm}
\subfigure[Inferred question--concept association graph.]{
\centering
\includegraphics[width=0.65\columnwidth]{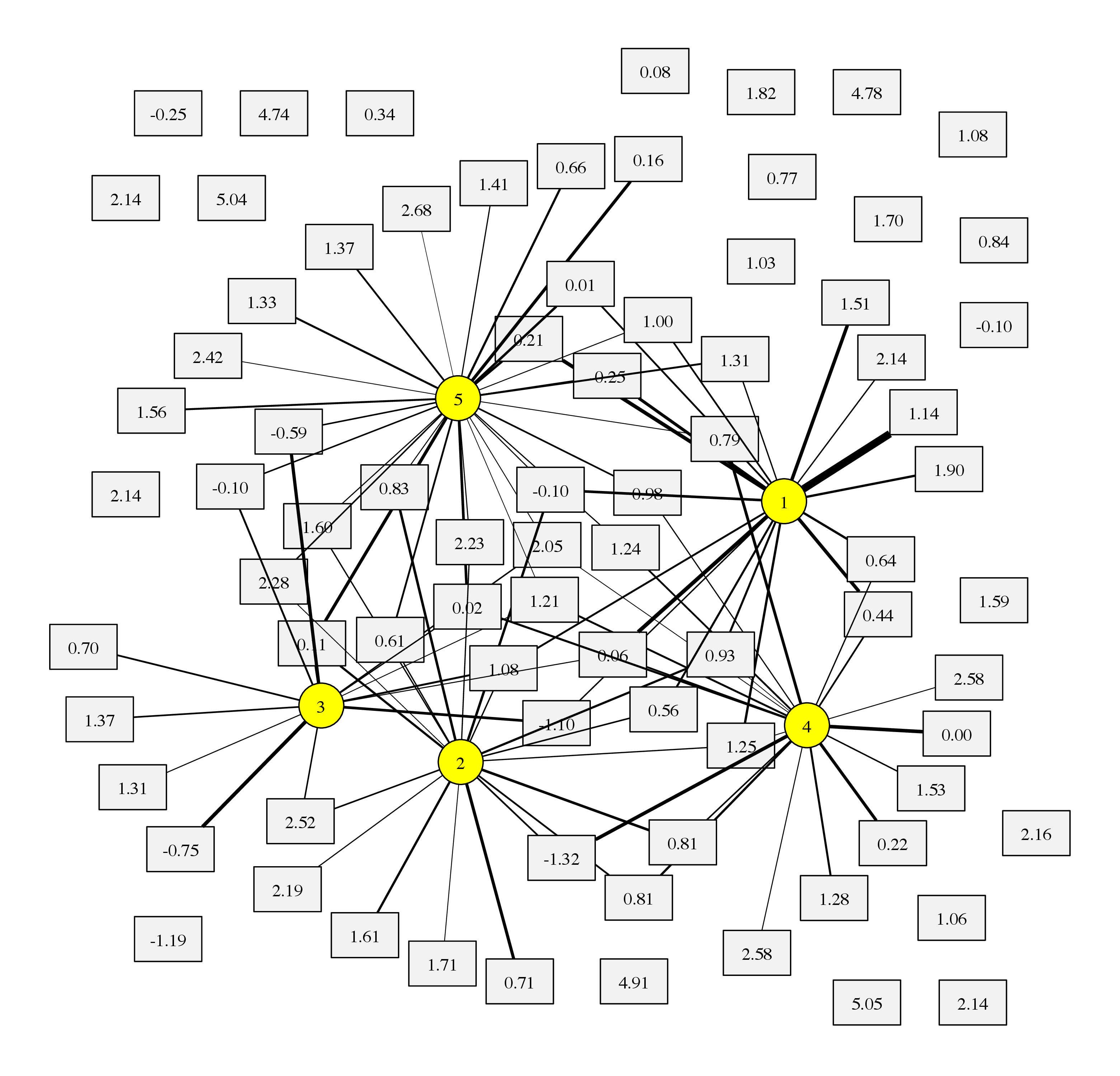}
\label{fig:stems_a}
}\\[0.3cm]
\subfigure[Most important tags and relative weights for the estimated concepts.]{
\scalebox{.84}{%
\begin{tabular}{llllll}
\toprule[0.2em]
Concept 1 && Concept 2 && Concept 3\\
\midrule[0.1em]
Changes to land & (45\%) &  Evidence of the past & (74\%) & Alternative energy & (76\%) \\
Properties of soil & (28\%) & Mixtures and solutions & (14\%) & Environmental changes & (19\%)\\ 
Uses of energy & (27\%) & Environmental changes & (12\%) & Changes from heat & (5\%)\\
\midrule[0.2em]
 Concept 4 && Concept 5\\ 
\midrule[0.1em]
Properties of soil & (77\%) &  Formulation of fossil fuels & (54\%)   \\
Environmental changes & (17\%) & Mixtures and solutions & (28\%)   &\\ 
Classifying matter & (6\%)  & Uses of energy & (18\%)\\ 
\bottomrule[0.2em]\\
\end{tabular}}
\label{fig:stems_b}
}
\vspace{-0.3cm}
\caption{\subref{fig:stems_a} Sparse question--concept association graph and \subref{fig:stems_b} most important tags associated with each concept for Grade 8 Earth science with $N=135$ learners answering $Q=80$ questions. Only 13.5\% of all graded learner--question responses were observed.}
\label{fig:stemg14j}
\vspace{-1.0cm}
\end{figure}

We envision a range of potential learning and content analytics applications for the SPARFA framework that go far beyond the standard practice of merely forming column sums of the ``gradebook'' matrix (with entries $Y_{i,j}$) to arrive at a final scalar numerical score for each learner (which is then often further quantized to a letter grade on a 5-point scale).   
Each column of the estimated $\bC$ matrix can be interpreted as a measure of the corresponding learner's knowledge about the abstract concepts.  
Low values indicate concepts ripe for remediation, while high values indicate concepts ripe for enrichment.  
The sparse graph stemming from the estimated $\bW$ matrix automatically groups questions into similar types based on their concept association; this graph makes it straightforward to find a set of questions similar to a given target question.
Finally, the estimated $\bM$ matrix (with entries~$\mu_i$ on each row) provides an estimate of each question's intrinsic difficulty. 
This property enables an instructor to assign questions in an orderly fashion as well as to prune out potentially problematic questions that are either too hard, too easy, too confusing, or unrelated to the concepts underlying the collection of questions.  

In \fref{sec:rw}, we provide an overview of related work on machine learning-based personalized learning, and we conclude in \fref{sec:conclusions}. All proofs are relegated to three Appendices.


\section{Statistical Model for Learning and Content Analytics} 
\label{sec:model}

Our approach to learning and content analytics is based on a new statistical model that encodes the probability that a learner will answer a given question correctly in terms of three factors:  
\begin{inparaenum}[(i)]
\item the learner's knowledge of a set of latent, abstract concepts,
\item how the question is related to each concept, and 
\item the intrinsic difficulty of the question.  
\end{inparaenum}

\subsection{Model for Graded Learner Response Data}
\label{sec:models}

Let $N$ denote the total number of learners, $Q$ the total number of questions, and $K$ the number of latent abstract concepts.  
We define $C_{k,j}$ as the \textit{concept knowledge} of learner~$j$ on concept $k$, with large positive values of $C_{k,j}$ corresponding to a better chance of success on questions related to concept $k$. 
Stack these values into the column vector $\bmc_j \in \mathbb{R}^{K}$, $j \in \{1,\ldots,N\}$ and the $K \times N$ matrix $\bC=[\,\vecc_1, \ldots, \vecc_N\,]$.
We further define $W_{i,k}$ as the \textit{question--concept association} of question $i$ with respect to concept $k$, with larger values denoting stronger involvement of the concept. 
Stack these values into the column vector
$\vecw_i \in \mathbb{R}^{K}$, $i \in \{1,\ldots,Q\}$ and the $Q \times K$ matrix $\bW = [\,\vecw_1, \ldots, \vecw_Q\,]^T$.
Finally, we define the scalar $\mu_i \in \mathbb{R}$ as the \emph{intrinsic difficulty} of question $i$, with larger values representing easier questions.
Stack these values into the column vector $\boldsymbol{\mu}$ and form the $Q \times N$ matrix $\bM=\boldsymbol{\mu}\,\bOne_{1\times N}$ as the product of $\boldsymbol{\mu}=[\,\mu_1, \ldots, \mu_Q\,]^T$ with the $N$-dimensional all-ones row vector~$\bOne_{1\times N}$. 

Given these definitions, we propose the following model for the binary-valued graded response variable $Y_{i,j} \in \{0,1\}$ for learner $j$ on question $i$, with $1$ representing a correct response and~$0$ an incorrect response:
\begin{align} 
Z_{i,j} & = \vecw_i^T \vecc_j + \mu_i, \!\quad \quad \forall i,j, \notag \\
Y_{i,j} & \sim \textit{Ber}(\Phi(Z_{i,j})), \quad (i,j)\in\Omega_\text{obs}. \label{eq:qa}
\end{align}
Here, $\textit{Ber}(z)$ designates a Bernoulli distribution with success probability $z$, and $\Phi(z)$ denotes an {\em inverse link function}\footnote{Inverse link functions are often called \emph{response functions} in the generalized linear models literature (see, e.g., \cite{responseglm}).} that maps a real value $z$ to the success probability of a binary random variable. Thus, the slack variable $\Phi(Z_{i,j}) \in [0,1]$ governs the probability of learner~$j$ answering question $i$ correctly. 

The set $\Omega_\text{obs}\subseteq\{1,\ldots,Q\}\times\{1,\ldots,N\}$ in \fref{eq:qa} contains the indices associated with the observed graded learner response data. Hence, our framework is able to handle the case of incomplete or missing data (e.g., when the learners do not answer all of the questions).\footnote{Two common situations lead to missing learner response data.
First, a learner might not attempt a question because it was not assigned or available to them.  
In this case, we simply exclude their response from $\Omega_\text{obs}$. 
Second, a learner might not attempt a question because it was assigned to them but was too difficult.
In this case, we treat their response as incorrect, as is typical in standard testing settings.}
Stack the values $Y_{i,j}$ and $Z_{i,j}$ into the $Q \times N$ matrices~$\bY$ and~$\bZ$, respectively.  
We can conveniently rewrite \fref{eq:qa} in matrix form as
\begin{align} \label{eq:qam}
Y_{i,j} \sim \textit{Ber}(\Phi(Z_{i,j})), \; (i,j)\in\Omega_\text{obs} \quad {\rm with} \quad \bZ = \bW \bC + \bM.  
\end{align}

In this paper, we focus on the two most commonly used link functions in the machine learning literature.
The \emph{inverse probit} function is defined as 
\begin{align} \label{eq:probitlink}
\Phi_\text{pro}(x) = \int_{-\infty}^x \! \mathcal{N}(t) \, \mathrm{d}t = \frac{1}{\sqrt{2 \pi}}\int_{-\infty}^x \! e^{-t^2 / 2} \, \mathrm{d}t,
\end{align} 
where $\mathcal{N}(t) = \frac{1}{\sqrt{2 \pi}} e^{-t^2/2}$ is the probability density function (PDF) of the standard normal distribution (with mean zero and variance one).  The \emph{inverse logit} link function is defined as 
\begin{align} \label{eq:logitlink}
\Phi_\text{log}(x) = \frac{1}{1+e^{-x}}.
\end{align}

As we noted in the Introduction, $\bC$, $\bW$, and $\boldsymbol{\mu}$ (or equivalently, $\bM$) have natural interpretations in real education settings.  Column $j$ of $\bC$ can be interpreted as a measure of learner $j$'s knowledge about the abstract concepts, with larger $C_{k,j}$ values implying more knowledge.  
The non-zero entries in $\bW$ can be used to visualize the connectivity between concepts and questions (see \fref{fig:graphex2_b} for an example), with larger $W_{i,k}$ values implying stronger ties between question~$i$ and concept $k$.
The values of $\boldsymbol{\mu}$ contains estimates of each question's intrinsic difficulty.

\subsection{Joint Estimation of Concept Knowledge and Question--Concept Association}
\label{sec:assumptions}

Given a (possibly partially observed) matrix of graded learner response data $\bY$, we aim to estimate the learner concept knowledge matrix $\bC$, the question--concept association matrix~$\bW$, and the question intrinsic difficulty vector $\boldsymbol{\mu}$.  In practice, the latent factors~$\bW$ and~$\bC$, and the vector $\boldsymbol{\mu}$ will contain many more unknowns than we have observations in~$\bY$; hence, estimating~$\bW$,~$\bC$, and $\boldsymbol{\mu}$ is, in general, an ill-posed inverse problem. 
The situation is further exacerbated if many entries in~$\bY$ are unobserved. 

To regularize this inverse problem, prevent over-fitting, improve identifiability,\footnote{If $\bZ =  \bW \bC$, then for any orthonormal matrix $\bH$ with $\bH^T \bH = \bI$, we have $\bZ = \bW \bH^T \bH \bC=\widetilde{\bW} \widetilde\bC$. Hence, the estimation of $\bW$ and $\bC$ is, in general, non-unique up to a unitary matrix rotation.} and enhance interpretability of the entries in $\bC$ and $\bW$, we appeal to the following three observations regarding education that are reasonable for typical exam, homework, and practice questions at all levels.  We will exploit these observations extensively in the sequel as fundamental assumptions:
\begin{itemize}
\item[(A1)] \emph{Low-dimensionality}: The number of latent, abstract concepts $K$ is small relative to both the number of learners $N$ and the number of questions $Q$. This implies that the questions are redundant and that the learners' graded responses live in a low-dimensional space. The parameter~$K$ dictates the concept \emph{granularity}. Small $K$ extracts just a few general, broad concepts, whereas large $K$ extracts more specific and detailed concepts.\footnote{Standard techniques like cross-validation (\cite{tibsbook}) can be used to select $K$. We provide the corresponding details in \fref{sec:realpred}.}

\item[(A2)] \emph{Sparsity}: Each question should be associated with only a small subset of the concepts in the domain of the course/assessment.
In other words, we assume that the matrix~$\bW$ is sparsely populated, i.e., contains mostly zero entries.

\item[(A3)] \emph{Non-negativity}: A learner's knowledge of a given concept does not negatively affect their probability of correctly answering a given question, i.e., knowledge of a concept is not ``harmful.''  In other words, the entries of $\bW$ are non-negative, which provides a natural interpretation for the entries in $\bC$:  Large values $C_{k,j}$ indicate strong knowledge of the corresponding concept, whereas negative values indicate weak knowledge.

\end{itemize}
In practice, $N$ can be larger than $Q$ and vice versa, and hence, we do not impose any additional assumptions on their values. 
Assumptions (A2) and (A3) impose sparsity and non-negativity constraints on $\bW$.  Since these assumptions are likely to be violated under arbitrary unitary transforms of the factors, they help alleviate several well-known identifiability problems that arise in factor analysis.

We will refer to the problem of estimating $\bW$, $\bC$, and $\boldsymbol{\mu}$, given the observations $\bY$, under the assumptions (A1)--(A3) as the {\em SPARse Factor Analysis} (SPARFA) problem.  We now develop two complementary algorithms to solve the SPARFA problem.  In Section \ref{sec:matrix}, we introduce SPARFA-M, a computationally efficient matrix-factorization approach that produces point estimates of the quantities of interest, in contrast to the principal component analysis based approach in~\cite{binopca}. In Section \ref{sec:bayes}, we introduce SPARFA-B, a Bayesian approach that produces full posterior estimates of the quantities of interest.

\section{SPARFA-M: Maximum Likelihood-based Sparse Factor Analysis} 
\label{sec:matrix}

Our first algorithm, SPARFA-M, solves the SPARFA problem using maximum-likelihood-based probit or logistic regression. 

\subsection{Problem Formulation}

To estimate $\bW$, $\bC$, and $\boldsymbol{\mu}$, we maximize the likelihood of the observed data  $Y_{i,j}$, $(i,j)\in\Omega_\text{obs}$
\begin{align*} 
p(Y_{i,j}|\vecw_i, \vecc_j) = \Phi\big(\vecw_i^T \vecc_j\big)^{Y_{i,j}} \; \big(1-\Phi(\vecw_i^T \vecc_j)\big)^{1-Y_{i,j}}
\end{align*}
given $\bW$, $\bC$, and $\boldsymbol{\mu}$ and subject to the assumptions (A1), (A2), and (A3) from Section \ref{sec:assumptions}. This likelihood yields the following optimization problem:
\begin{align*} 
(\text{P}^*) \,\, \left\{\begin{array}{ll}
 \underset{\bW,\bC}{\text{maximize}}\,\,\,  \sum_{(i,j)\in\Omega_\text{obs}} \log p(Y_{i,j}|\vecw_i, \vecc_j)   \\[0.3cm]
 \text{subject to}\,\,\, \|\vecw_i \|_0 \le s \; \forall i, \,\,\, \|\vecw_i \|_2 \le \kappa \; \forall i, \,\,\,  W_{i,k} \geq 0 \; \forall i,k, \,\,\, \normfro{\bC}=\xi.\end{array}\right.
\end{align*}
Let us take a quick tour of the problem $(\text{P}^*)$ and its constraints.  
The intrinsic difficulty vector~$\boldsymbol{\mu}$ is incorporated as an additional column of $\bW$, and $\bC$ is augmented with an all-ones row accordingly.
We impose sparsity on each vector $\vecw_i$ to comply with (A2) by limiting its maximum number of nonzero coefficients using the constraint $\|\vecw_i \|_0 \le s$; here $\|\veca\|_0$ counts the number of non-zero entries in the vector~$\veca$.
The $\elltwo$-norm constraint on each vector $\vecw_i$ with $\kappa>0$ is required for our convergence proof below.  
We enforce non-negativity on each entry $W_{i,k}$ to comply with (A3). 
Finally, we normalize the Frobenius norm of the concept knowledge matrix~$\bC$ to a given $\xi>0$ to suppress arbitrary scalings between the entries in both matrices $\bW$ and $\bC$.

Unfortunately, optimizing over the sparsity constraints $\|\vecw_i \|_0 \le s$ requires a combinatorial search over all $K$-dimensional support sets having no more than~$s$ non-zero entries. Hence, $(\text{P}^*)$ cannot be solved efficiently in practice for the typically large problem sizes of interest. 
In order to arrive at an optimization problem that can be solved with a reasonable computational complexity, we {\em relax} the sparsity constraints $ \|\vecw_i \|_0 \le s$ in $(\text{P}^*)$ to $\ellone$-norm constraints as in \cite{bpdn} and move them, the $\elltwo$-norm constraints, and the Frobenius norm constraint, into the objective function via Lagrange multipliers:
\begin{align*}
(\text{P}) \,\,\, \underset{\bW,\bC \colon \! W_{i,k} \geq 0\; \forall i,k}{\text{minimize}} \textstyle  \sum_{(i,j)\in\Omega_\text{obs}} \!\!-\log p(Y_{i,j}|\vecw_i , \vecc_j) + \lambda \sum_{i} \| \vecw_i \|_1 + \frac{\mu}{2} \sum_{i} \| \vecw_i \|_2^2 + \frac{\gamma}{2} \normfro{\bC}^2.
\end{align*}
The first regularization term $\lambda\sum_i\normone{\vecw_i}$ induces sparsity on each vector $\vecw_i$, with the single parameter  $\lambda>0$ controlling the sparsity level. 
Since one can arbitrarily increase the scale of the vectors $\vecw_i$ while decreasing the scale of the vectors $\vecc_j$ accordingly (and vice versa) without changing the likelihood, we gauge these vectors using the second and third regularization terms $\frac{\mu}{2}\sum_i\normtwo{\vecw_i}^2$ and $\frac{\gamma}{2}\normfro{\bC}^2$ with the regularization parameters $\mu>0$ and $\gamma>0$, respectively.\footnote{The first $\ellone$-norm regularization term in (RR$_1^+$) already gauges the norm of the $\vecw_i$. The $\elltwo$-norm regularizer $\frac{\mu}{2}\sum_i\normtwo{\vecw_i}^2$ is included only to aid in establishing the convergence results for SPARFA-M as detailed in \fref{sec:convm}.}
We emphasize that since $\normfro{\bC}^2=\sum_j\normtwo{\vecc_j}^2$, we can impose a regularizer on each column rather than the entire matrix $\bC$, which facilitates the development of the efficient algorithm detailed below.
%


\subsection{The SPARFA-M Algorithm} 
\label{sec:regularizedlogisticregression}

Since the first negative log-likelihood term in the objective function of $(\text{P})$ is convex in the product $\bW \bC$ for both the probit and the logit functions (see, e.g., \cite{tibsbook}), and since the rest of the regularization terms are convex in either $\bW$ or $\bC$ while the non-negativity constraints on $W_{i,k}$ are with respect to a convex set, the problem~$(\text{P})$ is \emph{biconvex} in the individual factors~$\bW$ and~$\bC$. 
More importantly, with respect to blocks of variables~$\vecw_i$,~$\vecc_j$, the problem $(\text{P})$ is \emph{block multi-convex} in the sense of \cite{wotao}.

SPARFA-M is an alternating optimization approach to (approximately) solving $(\text{P})$ that proceeds as follows.  We initialize $\bW$ and $\bC$ with random entries and then iteratively optimize the objective function of $(\text{P})$ for both factors in an alternating fashion. 
Each outer iteration involves solving two kinds of inner subproblems. In the first subproblem, we hold~$\bW$ constant and separately optimize each block of variables in~$\vecc_j$; in the second subproblem, we hold~$\bC$ constant and separately optimize each block of variables~$\vecw_i$. 
Each subproblem is solved using an iterative method; see \fref{sec:RPR} for the respective algorithms.
The outer loop is terminated whenever a maximum number of outer iterations $I_\text{max}$ is reached, or if the decrease in the objective function of $(\text{P})$ is smaller than a certain threshold.

The two subproblems constituting the inner iterations of SPARFA-M correspond to the following convex $\ell_1/\ell_2$-norm and $\ell_2$-norm regularized regression (RR) problems: 
\begin{align*}
&(\text{RR}_1^+) \quad 
\underset{\vecw_i\colon\!W_{i,k}\geq0\;\forall k}{\text{minimize}}\,\,\, \textstyle \!\sum\nolimits_{j\colon\! (i,j)\in\Omega_\text{obs}} \!\!-\log p(Y_{i,j}|\vecw_i ,\vecc_j) + \lambda\normone{\vecw_i} + \frac{\mu}{2}\normtwo{\vecw_i}^2, \\
&(\text{RR}_2) \quad \,\,\,\hspace{0.08cm}
\underset{\vecc_j}{\text{minimize}} \hspace{0.37cm} \textstyle \sum\nolimits_{i\colon\! (i,j)\in\Omega_\text{obs}} \!\!-\log p(Y_{i,j}|\vecw_i ,\vecc_j) + \frac{\gamma}{2}  \normtwo{\vecc_j}^2.
\end{align*}
We develop two novel first-order methods that efficiently solve $(\text{RR}_1^+)$ and $(\text{RR}_2)$ for both probit and logistic regression. These methods scale well to high-dimensional problems, in contrast to existing second-order methods. In addition, the probit link function makes the explicit computation of the Hessian difficult, which is only required for second-order methods. Therefore, we build our algorithm on the fast iterative soft-thresholding algorithm (FISTA) framework developed in \cite{fista}, which enables the development of efficient first-order methods with accelerated convergence.

\subsection{Accelerated First-Order Methods for Regularized Probit/Logistic Regression} 
\label{sec:RPR}

The FISTA framework (\cite{fista}) iteratively solves optimization problems whose objective function is given by $f(\cdot) + g(\cdot)$, where $f(\cdot)$ is a continuously differentiable convex function and $g(\cdot)$ is convex but potentially non-smooth. 
This approach is particularly well-suited to the inner subproblem $(\text{RR}_1^+)$ due to the presence of the non-smooth $\ellone$-norm regularizer and the non-negativity constraint.
Concretely, we associate the log-likelihood function plus the $\elltwo$-norm regularizer $\frac{\mu}{2}\normtwo{\vecw_i}^2$ with $f(\cdot)$ and the $\ellone$-norm regularization term with $g(\cdot)$.
For the inner subproblem $(\text{RR}_2)$, we associate the log-likelihood function with~$f(\cdot)$ and the $\elltwo$-norm regularization term with $g(\cdot)$.\footnote{Of course, both $f(\cdot)$ and $g(\cdot)$ are smooth for $(\text{RR}_2)$. Hence,  we could also apply an accelerated gradient-descent approach instead, e.g., as described in \cite{nest}.}

Each FISTA iteration consists of two steps: 
\begin{inparaenum}[(i)]
\item a gradient-descent step in~$f(\cdot)$ and
\item a shrinkage step determined by $g(\cdot)$.
\end{inparaenum}
For simplicity of exposition, we consider the case where all entries in $\bY$ are observed, i.e., $\Omega_\text{obs} = \{1, \ldots, Q\} \times \{1, \ldots, N\}$; the extension to the case with missing entries in $\bY$ is straightforward. 
We will derive the algorithm for the case of probit regression first and then point out the departures for logistic regression.

For $(\text{RR}_1^+)$, the gradients of $f(\vecw_i)$ with respect to the $i^{\rm th}$ block of regression coefficients~$\vecw_i$ are given by
\begin{align} 
\nabla f_\text{pro}^i  =  \nabla_{\vecw_i}^\text{pro} ( -\textstyle\sum\nolimits_{j} \log p_\text{pro}(Y_{i,j}|\vecw_i, \vecc_j) + \frac{\mu}{2}\normtwo{\vecw_i }^2 ) =  -\bC \bD^i (\bar\bmy^i  - \vecp_\text{pro}^i)  +  \mu \vecw_i, \label{eq:gradpro} 
\end{align}

where $\bar\bmy^i$ is an $N \times 1$ column vector corresponding to the transpose of the $i^\text{th}$ row of $\bY$. 
$\vecp_\text{pro}^i$ is an $N \times 1$ vector whose $j^\text{th}$ element equals the probability of $Y_{i,j}$ being~$1$; that is, $p_\text{pro}(Y_{i,j}=1|\vecw_i , \vecc_j) = \Phi_\text{pro} (\vecw_i^T \vecc_j)$.  The entries of the $N \times N$ diagonal  matrix are given by
\begin{align*}
D_{j,j}^i = \frac{\mathcal{N}(\vecw_i^T \vecc_j)}{\Phi_\text{pro} (\vecw_i^T \vecc_j)(1-\Phi_\text{pro} (\vecw_i^T \vecc_j))}.
\end{align*} 
The gradient step in each FISTA iteration~$\ell=1,2,\ldots$ corresponds to
\begin{align} \label{eq:fistagradient}
\hat{\vecw}_i^{\ell+1} \gets \vecw_i^\ell - t_\ell \, \nabla f_\text{pro}^i,
\end{align}
where $t_\ell$ is a suitable step-size.
To comply with (A3), the shrinkage step in $(\text{RR}_1^+)$ corresponds to a non-negative soft-thresholding operation
\begin{align}\label{eq:shrink2} 
\vecw_i^{\ell+1} \gets \max\{\hat{\vecw}_i^{\ell+1}-\lambda t_\ell,0\}.
\end{align}
For $(\text{RR}_2)$, the gradient step becomes
\[
\hat{\vecc}_j^{\ell+1} \gets \vecc_j^\ell - t_\ell \, \nabla f_\text{pro}^i,
\]
which is the same as \fref{eq:gradpro} and \fref{eq:fistagradient} after replacing $\bC$ with $\bW^T$ and $\mu$ with $\gamma$.  The shrinkage step for  $(\text{RR}_2)$ is the simple re-scaling
\begin{align} \label{eq:shrink1} 
\vecc_j^{\ell+1} \gets\frac{1}{1+\gamma t_{\ell}}\hat{\vecc}_j^{\ell+1}.
\end{align}

In the logistic regression case, the steps \fref{eq:fistagradient}, \fref{eq:shrink2}, and \fref{eq:shrink1} remain the same but the gradient changes to 
\begin{align} 
\nabla f_\text{log}^i &= \nabla_{\vecw_i}^\text{log}  (-\textstyle\sum\nolimits_{j} \log p_\text{log}(Y_{i,j}|\vecw_i, \vecc_j) +   \frac{\mu}{2}\normtwo{\vecw_i  }^2 ) =  -\bC (\bar\bmy^i  -  \vecp_\text{log}^i)  +  \mu \vecw_i, \label{eq:gradlog}
\end{align}
where the $N \times 1$ vector $\vecp_\text{log}^i$  has elements $\vecp_\text{log}(Y_{i,j}=1|\vecw_i, \vecc_j) = \Phi_\text{log} (\vecw_i^T \vecc_j)$.

The above steps require a suitable step-size~$t_\ell$ to ensure convergence to the optimal solution. 
A common approach that guarantees convergence is to set~$t_\ell = 1/{L}$, where $L$ is the Lipschitz constant of $f(\cdot)$ (see \cite{fista} for the details).  
The Lipschitz constants for both the probit and logit cases are analyzed in \fref{thm:rrconv} below.
Alternatively, one can also perform backtracking, which---under certain circumstances---can be more efficient;  see \cite[p.~194]{fista} for more details.

\subsection{Convergence Analysis of SPARFA-M}
\label{sec:convm}

While the SPARFA-M objective function is guaranteed to be non-increasing over the outer iterations (\cite{boydbook}), the factors $\bW$ and $\bC$ do not necessarily converge to a global or local optimum due to its biconvex (or more generally, block multi-convex) nature. It is difficult, in general, to develop rigorous statements for the convergence behavior of block multi-convex problems. 
Nevertheless, we can establish the global convergence of SPARFA-M from any starting point to a critical point of the objective function using recent results developed in \cite{wotao}. The convergence results below appear to be novel for both sparse matrix factorization as well as dictionary learning. 

\subsubsection{Convergence Analysis of Regularized Regression using FISTA}

In order to establish the SPARFA-M convergence result, we first adapt the convergence results for FISTA in \cite{fista} to prove convergence on the two subproblems~$(\text{RR}_1^+)$ and $(\text{RR}_2)$.  
The following theorem is a consequence of \cite[Thm.~4.4]{fista} combined with Lemmata \ref{lem:scalarlipschitz} and \ref{lem:vectorlips} in \fref{app:convRR}.
If back-tracking is used to select step-size $t_\ell$ \cite[p.~194]{fista}, then let~$\alpha$ correspond to the back-tracking parameter.
Otherwise set $\alpha=1$ and for $(\text{RR}_1^+)$ let $t_\ell=1/L_1$ and for $(\text{RR}_2)$ let $t_\ell=1/L_2$.  
In \fref{lem:vectorlips}, we compute that $L_1 = \sigma_\text{max}^2(\bC)+\mu$ and $L_2 = \sigma_\text{max}^2(\bW)$ for the probit case, and $L_1 = \frac{1}{4}\sigma_\text{max}^2(\bC) + \mu$ and $L_2 = \frac{1}{4}\sigma_\text{max}^2(\bW)$ for the logit case.
\begin{thm}[Linear convergence of RR using FISTA] \label{thm:rrconv} 
Given $i$ and $j$, let 
\begin{align*}
& F_1(\vecw_i) = \sum\nolimits_{j\colon\! (i,j)\in\Omega_{\text{obs}}} -\log p(Y_{i,j}|\vecw_i ,\vecc_j) + \lambda\normone{\vecw_i} + \frac{\mu}{2}\normtwo{\vecw_i }^2, \quad  W_{i,k} \geq 0\; \forall k,\\
& F_2(\vecc_j) = \sum\nolimits_{i\colon\! (i,j)\in\Omega_\text{obs}}-\log p(Y_{i,j}|\vecw_i ,\vecc_j) + \frac{\gamma}{2}\normtwo{\vecc_j}^2
\end{align*}
be the cost functions of $(\text{RR}_1^+)$ and $(\text{RR}_2)$, respectively. Then, we have
\begin{align*}
 F_1(\vecw_i^\ell) - F_1(\vecw_i^*) & \leq \frac{2\alpha L_1 \|\vecw_i^0 - \vecw_i^*\|^2}{(\ell +1)^2}, \\
 F_2(\vecc_j^\ell) - F_2(\vecc_j^*) & \leq \frac{2\alpha L_2 \|\vecc_j^0 - \vecc_j^*\|^2}{(\ell +1)^2},
\end{align*}
where $\vecw_i^0$ and $\vecc_j^0$ are the initialization points of $(\text{RR}_1^+)$ and $(\text{RR}_2)$, $\vecw_i^\ell$ and $\vecc_j^\ell$ designate the solution estimates at the $\ell^\text{th}$ inner iteration, and $\vecw_i^*$ and $\vecc_j^*$ denote the optimal solutions. 
\end{thm}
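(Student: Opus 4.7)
The plan is to reduce the theorem to a direct application of Beck and Teboulle's FISTA convergence result (Thm.~4.4 in \cite{fista}), which states that if the objective decomposes as $F = f + g$ with $f$ convex and continuously differentiable with an $L$-Lipschitz gradient, and $g$ convex (possibly non-smooth) with an easy proximal map, then the iterates $\vecx^\ell$ produced by FISTA with a suitable step-size satisfy $F(\vecx^\ell) - F(\vecx^*) \leq \frac{2\alpha L \|\vecx^0 - \vecx^*\|^2}{(\ell+1)^2}$. Thus the only work is to cast each subproblem into that template and to compute (or bound) the Lipschitz constant of the gradient of the smooth part.

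First I would identify the splitting for each subproblem. For $(\text{RR}_1^+)$, take $f(\vecw_i) = \sum_{j\colon (i,j)\in\Omega_\text{obs}} -\log p(Y_{i,j}\mid\vecw_i,\vecc_j) + \frac{\mu}{2}\normtwo{\vecw_i}^2$ and $g(\vecw_i) = \lambda\normone{\vecw_i} + \iota_{\{\vecw_i \geq \bmzero\}}$, where the indicator of the non-negative orthant and the $\ell_1$-norm combine into a single non-negative soft-thresholding proximal operator, namely \eqref{eq:shrink2}. For $(\text{RR}_2)$, take $f(\vecc_j) = \sum_{i\colon (i,j)\in\Omega_\text{obs}} -\log p(Y_{i,j}\mid\vecw_i,\vecc_j)$ and $g(\vecc_j) = \frac{\gamma}{2}\normtwo{\vecc_j}^2$, whose prox map is the rescaling \eqref{eq:shrink1}. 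In both cases $f$ is a sum of convex negative log-likelihoods in the linear predictor $\vecw_i^T\vecc_j$ (convexity of $-\log\Phi_\text{pro}$ and $-\log\Phi_\text{log}$ is standard, e.g., \cite{tibsbook}) plus an optional strictly convex quadratic, so $f$ is convex and $C^1$, and $g$ is convex, matching the FISTA hypotheses.

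Next I would compute the Lipschitz constants. The gradients are given in \eqref{eq:gradpro} and \eqref{eq:gradlog}, and the corresponding Hessians are of the form $\bC\,\bDelta\,\bC^T + \mu\bI$ (or $\bW^T\bDelta'\bW$ for the $\vecc_j$-subproblem), where $\bDelta$ is a non-negative diagonal matrix whose entries are the second derivatives of the scalar functions $z \mapsto -\log\Phi(z)$ and $z \mapsto -\log(1-\Phi(z))$ evaluated at $z = \vecw_i^T\vecc_j$. Hence bounding $\opnorm{\nabla^2 f}$ reduces to bounding these scalar second derivatives uniformly in $z$, which is the content of Lemma~\ref{lem:scalarlipschitz}: for the logit link the classical identity $\Phi_\text{log}''(z) = \Phi_\text{log}(z)(1-\Phi_\text{log}(z)) \leq \tfrac14$ yields the factor $\tfrac14$; for the probit link one must show that the score derivatives $z\mapsto \frac{\mathcal{N}(z)^2}{\Phi_\text{pro}(z)^2} + \frac{z\mathcal{N}(z)}{\Phi_\text{pro}(z)}$ (and the analogous expression with $1-\Phi_\text{pro}$) are bounded above by~$1$, using the Mills-ratio-type tail bound $\mathcal{N}(z)/\Phi_\text{pro}(z) \leq \max\{|z|,\text{const}\}$ in the left tail. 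Lemma~\ref{lem:vectorlips} then lifts these scalar bounds to the vector/matrix setting via $\opnorm{\bC\bDelta\bC^T} \leq \opnorm{\bDelta}\sigma_\text{max}^2(\bC)$, yielding $L_1 = \sigma_\text{max}^2(\bC)+\mu$ and $L_2 = \sigma_\text{max}^2(\bW)$ for probit, and the $\tfrac14$-scaled versions for logit.

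Finally I would invoke \cite[Thm.~4.4]{fista} with these Lipschitz constants and either the fixed step $t_\ell = 1/L$ or back-tracking with parameter $\alpha$, which gives precisely the advertised rates $F_1(\vecw_i^\ell) - F_1(\vecw_i^*) \leq \frac{2\alpha L_1\|\vecw_i^0 - \vecw_i^*\|^2}{(\ell+1)^2}$ and the analogue for $F_2$. The only genuinely non-mechanical step is the probit Hessian bound in Lemma~\ref{lem:scalarlipschitz}: unlike the logit case, the ratio $\mathcal{N}(z)/\Phi_\text{pro}(z)$ blows up as $z\to-\infty$, so a careful Mills-ratio argument is needed to establish that the product with the other factor stays bounded by one. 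Everything else is either a direct citation of FISTA or a standard convexity/spectral-norm manipulation.
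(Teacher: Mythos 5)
Your proposal is correct and follows essentially the same route as the paper: both reduce the theorem to a citation of Beck--Teboulle's Theorem~4.4 and obtain $L_1$ and $L_2$ by uniformly bounding the scalar curvature of the negative log-likelihood (the paper phrases this as a Lipschitz bound on the score $g(x)=\mathcal{N}(x)/\Phi(x)$ via a Mills-ratio argument, which is equivalent to your Hessian bound) and then lifting to the vector setting through $\sigma_\text{max}^2(\bC)$ and $\sigma_\text{max}^2(\bW)$. The only quibble is notational: the quantity bounded by $\tfrac14$ in the logit case is $\Phi_\text{log}'(z)=\Phi_\text{log}(z)(1-\Phi_\text{log}(z))$, i.e., the second derivative of $-\log\Phi_\text{log}$, not $\Phi_\text{log}''$.
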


In addition to establishing convergence, \fref{thm:rrconv} reveals that the difference between the cost functions at the current estimates and the optimal solution points, $F_1(\vecw_i^\ell) - F_1(\vecw_i^*)$ and $F_2(\vecc_j^\ell) - F_2(\vecc_j^*)$, decrease as $O(\ell^{-2})$.

\subsubsection{Convergence Analysis of SPARFA-M}
\label{sec:convanalusissparfam}

We are now ready to establish global convergence of SPARFA-M to a critical point. To this end, we first define 
$\vecx = [\vecw_1^T, \ldots, \vecw_Q^T, \vecc_1^T, \ldots, \vecc_N^T]^T \in \mathbb{R}^{(N+Q) K}$
and rewrite the objective function $(\text{P})$ of SPARFA-M as follows:
\begin{align*}
 F(\vecx) =  \!\!\!\!\!\! \sum_{(i,j)\in\Omega_\text{obs}} \!\! \!\!\!\! - \log p(Y_{i,j}|\vecw_i , \vecc_j)  +  \frac{\mu}{2} \sum_{i} \| \vecw_i \|_2^2 + \lambda \sum_{i} \| \vecw_i \|_1 
 + \sum_{i,k} \delta(W_{i,k}\! <\!0) + \frac{\gamma}{2}  \sum_{j} \| \vecc_j \|_2^2
\end{align*}
with the indicator function $\delta(z<0)=\infty$ if $z<0$ and $0$ otherwise. Note that we have re-formulated the non-negativity constraint as a set indicator function and added it to the objective function of~$(\text{P})$.
Since minimizing $F(\vecx)$ is equivalent to solving~$(\text{P})$, we can now use the results developed in \cite{wotao} to establish the following convergence result for the SPARFA-M algorithm. The proof can be found in \fref{app:globproof}.
\begin{thm} [Global convergence of SPARFA-M] \label{thm:sparfamconv}
From any starting point $\vecx^0$, let $\{ \vecx^t \}$ be the sequence of estimates generated by the SPARFA-M algorithm with $t=1,2,\ldots$ as the outer iteration number. Then, the sequence $\{ \vecx^t \}$ converges to the finite limit point $\hat{\vecx}$, which is a critical point of (P).
Moreover, if the starting point $\vecx^0$ is within a close neighborhood of a global optimum of (P), then SPARFA-M converges to this global optimum.
\end{thm}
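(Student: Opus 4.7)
The plan is to cast $F(\vecx)$ as an instance of a block multi-convex program with non-smooth block-separable regularizers, and then invoke the global convergence framework of \cite{wotao}. Concretely, I would partition the coordinates of $\vecx$ into the $N+Q$ blocks $\{\vecw_1,\ldots,\vecw_Q,\vecc_1,\ldots,\vecc_N\}$ and write
$$F(\vecx) = f(\vecw_1,\ldots,\vecc_N) + \sum_{i=1}^{Q}\! r_i(\vecw_i) + \sum_{j=1}^{N}\! s_j(\vecc_j),$$
where $f$ denotes the (smooth) sum of the negative log-likelihood and the two quadratic regularizers $\frac{\mu}{2}\sum_i\normtwo{\vecw_i}^2$ and $\frac{\gamma}{2}\sum_j\normtwo{\vecc_j}^2$, while $r_i(\vecw_i)=\lambda\normone{\vecw_i}+\sum_k\delta(W_{i,k}<0)$ and $s_j(\vecc_j)=0$ are block-separable, proper, lower semi-continuous and convex. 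The alternating subproblem updates of SPARFA-M then coincide exactly with the proximal block-minimization scheme analyzed in \cite{wotao}.

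The heart of the proof is to verify the hypotheses of the relevant convergence theorem in \cite{wotao}, namely: (i) strong convexity of each block subproblem, (ii) Lipschitz continuity of the partial gradients of $f$ on bounded sets, (iii) boundedness of the iterate sequence, and (iv) the Kurdyka--\L{}ojasiewicz (KL) property of $F$ at every point. Item (i) is immediate from the $\frac{\mu}{2}\normtwo{\vecw_i}^2$ and $\frac{\gamma}{2}\normtwo{\vecc_j}^2$ terms, so each inner problem $(\text{RR}_1^+)$ and $(\text{RR}_2)$ has a unique minimizer and, moreover, $F$ strictly decreases after each block update. Item (ii) follows from \fref{lem:vectorlips} in \fref{app:convRR}, whose Lipschitz constants $L_1,L_2$ depend only on $\sigma_\text{max}(\bC)$ and $\sigma_\text{max}(\bW)$; combined with item (iii) below this yields uniform Lipschitz bounds along the iterate trajectory. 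For (iii), I would use coercivity: the quadratic regularizers force $F(\vecx)\to\infty$ as $\|\vecx\|\to\infty$, so the level set $\{\vecx\colon F(\vecx)\le F(\vecx^0)\}$ is bounded, and the monotone-descent property inherited from Theorem~\ref{thm:rrconv} keeps $\{\vecx^t\}$ inside this set.

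The main obstacle is item (iv), the KL property. For the logit link, $-\log p_{\text{log}}$ is real-analytic, and the $\ell_1$ norm and the indicator of the non-negative orthant are semi-algebraic; their sum is then a KL function by the standard results on subanalytic functions. For the probit link, $\Phi_\text{pro}$ is not semi-algebraic, but it is real-analytic on $\reals$ and bounded on bounded sets away from $\pm\infty$, so $-\log p_{\text{pro}}$ is real-analytic; combining this with the semi-algebraicity of the $\ell_1$ norm and the non-negativity indicator places $F$ in the class of KL functions on any bounded set. With (i)--(iv) in hand, the general theorem of \cite{wotao} yields that the entire sequence $\{\vecx^t\}$ converges to a single finite limit point $\hat\vecx$ which is a critical point of $F$, establishing the first claim.

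Finally, for the local-to-global statement, I would invoke the KL-based local convergence result: since $F$ satisfies the KL inequality at every global optimum $\vecx^\star$ of $(\text{P})$, there exists a neighborhood of $\vecx^\star$ such that any iterate entering this neighborhood, together with the monotone descent produced by our alternating scheme, is trapped in the basin of attraction of $\vecx^\star$. Combined with the already-established convergence of the full sequence, this shows that if $\vecx^0$ is chosen sufficiently close to a global optimum, then $\hat\vecx$ coincides with that global optimum, completing the proof. The detailed verification of the KL exponent and of the constants in the Lipschitz estimates is deferred to \fref{app:globproof}.
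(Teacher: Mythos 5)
Your proposal is correct and follows essentially the same route as the paper: verify the hypotheses of the block multi-convex framework of \cite{wotao} (strong convexity of the subproblems from the quadratic regularizers, Lipschitz continuity of the smooth part via \fref{lem:vectorlips}, and the Kurdyka--\L{}ojasiewicz property via real-analyticity of the probit/logit log-likelihood combined with semi-algebraicity of the $\ell_1$ term and the non-negativity indicator), then invoke the global convergence theorem and the local-to-global corollary from \cite{wotao}. Your explicit coercivity argument for boundedness of the iterates is a slightly more careful treatment of a point the paper handles only in passing, but it does not change the overall structure of the argument.
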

Since the problem $(\text{P})$ is bi-convex in nature, we cannot guarantee that SPARFA-M always converges to a \emph{global optimum} from an \emph{arbitrary} starting point. 
Nevertheless, the use of multiple randomized initialization points can be used to increase the chance of being in the close vicinity of a global optimum, which improves the (empirical) performance of \mbox{SPARFA-M} (see \fref{sec:addtechniques} for details).
Note that we do not provide the convergence rate of SPARFA-M, since the associated parameters in \cite[Thm.~2.9]{wotao} are difficult to determine for the model at hand; a detailed analysis of the convergence rate for SPARFA-M is part of ongoing work.

\subsection{Algorithmic Details and Improvements for SPARFA-M}
\label{sec:addtechniques}

In this section, we outline a toolbox of techniques that improve the empirical performance of SPARFA-M and provide guidelines for choosing the key algorithm parameters.

\subsubsection{Reducing Computational Complexity in Practice}

To reduce the computational complexity of SPARFA-M in practice, we can improve the convergence rates of $(\text{RR}_1^+)$ and $(\text{RR}_2)$.  In particular, the regularizer $ \frac{\mu}{2}\normtwo{\vecw_i}^2$ in $(\text{RR}_1^+)$ has been added to $(\text{P})$ to facilitate the proof for \fref{thm:sparfamconv}. 
This term, however, typically slows down the (empirical) convergence of FISTA, especially for large values of~$\mu$. We therefore set $\mu$ to a small positive value (e.g., $\mu=10^{-4}$), which leads to fast convergence of $(\text{RR}_1^+)$ while still guaranteeing convergence of SPARFA-M.

Selecting the appropriate (i.e., preferably large) step-sizes $t_\ell$ in \fref{eq:fistagradient}, \fref{eq:shrink2}, and \fref{eq:shrink1} is also crucial for fast convergence. In Lemmata \ref{lem:scalarlipschitz} and \ref{lem:vectorlips}, we derive the Lipschitz constants $L$ for~$(\text{RR}_1^+)$ and $(\text{RR}_2)$, which enables us to set the step-sizes $t_\ell$ to the constant value $t = 1/L$. In all of our experiments below, we exclusively use constant step-sizes, since we observed that backtracking (\cite[p.~194]{fista})  provided no advantage in terms of computational complexity for SPARFA-M.

To further reduce the computational complexity of SPARFA-M without degrading its empirical performance noticeably, we have found that instead of running the large number of inner iterations it typically takes to converge, we can run just a few (e.g., $10$) inner iterations per outer iteration.

\subsubsection{Reducing the Chance of Getting Stuck in Local Minima}

The performance of SPARFA-M strongly depends on the initialization of $\bW$ and~$\bC$, due to the bi-convex nature of ({\rm P}). 
We have found that running SPARFA-M multiple times with different starting points and picking the solution with the smallest overall objective function delivers excellent performance.
In addition, we can deploy the standard heuristics used in the dictionary-learning literature \cite[Section~IV-E]{ksvd}  to further improve the convergence towards a global optimum.
For example, every few outer iterations, we can evaluate the current $\bW$ and $\bC$. If two rows of $\bC$ are similar (as measured by the absolute value of the inner product between them), then we re-initialize one of them as an i.i.d.~Gaussian vector. Moreover, if some columns in $\bW$ contain only zero entries, then we re-initialize them with i.i.d.~Gaussian vectors.  Note that the convergence proof in \fref{sec:convm} does not apply to implementations employing such trickery.

\subsubsection{Parameter Selection}

The input parameters to SPARFA-M include the number of concepts $K$ and the regularization parameters $\gamma$  and $\lambda$.  The number of concepts~$K$ is a user-specified value. In practice, cross-validation could be used to select $K$ if the task is to predict missing entries of $\bY$, (see \fref{sec:realpred}).
The sparsity parameter $\lambda$ and the $\elltwo$-norm penalty parameter $\gamma$ strongly affect the output of SPARFA-M; they can be selected using any of a number of criteria, including the Bayesian information criterion (BIC) or cross-validation, as detailed in \cite{tibsbook}. Both criteria resulted in similar performance in all of the experiments reported in sec:experiments.


\subsection{Related Work on Maximum Likelihood-based Sparse Factor Analysis}

Sparse logistic factor analysis has previously been studied in \cite{binopca} in the principal components analysis context. 
There are three major differences with the SPARFA framework.
First, \cite{binopca} do not impose the non-negativity constraint on $\bW$ that is critical for the interpretation of the estimated factors.
Second, they impose an orthonormality constraint on $\bC$ that does not make sense in educational scenarios.
Third, they optimize an upper bound on the negative log-likelihood function in each outer iteration, in contrast to
SPARFA-M, which optimizes the exact cost functions in~(RR$_1^+$) and (RR$_2$).

The problem $(\text{P})$ shares some similarities with the method for missing data imputation outlined
in \cite[Eq.~7]{l1vsbayes}. However, the problem~$(\text{P})$ studied here includes an additional
non-negativity constraint on $\bW$ and the regularization term $\frac{\mu}{2}\sum_i\normtwo{\vecw_i}^2$ that are important for the interpretation of the estimated factors and the convergence analysis.
Moreover, SPARFA-M utilizes the accelerated FISTA framework as opposed to the more straightforward
but less efficient gradient descent method in \cite{l1vsbayes}.

SPARFA-M is capable of handling both the inverse logit and inverse probit link functions.
For the inverse logit link function, one could solve $(\text{RR}_1^+)$ and $(\text{RR}_2)$ using an iteratively
reweighted second-order algorithm as in~\cite{tibsbook}, \cite{minka}, \cite{ng}, \cite{parkhastie}, or an interior-point method as in \cite{kohboyd}. 
However, none of these techniques extend naturally to the inverse probit link function, which is essential for some applications, e.g., in noisy compressive sensing recovery from 1-bit measurements (e.g., \cite{jason1bit} or \cite{plan1bit}).
Moreover, second-order techniques typically do not scale well to high-dimensional problems due to the necessary computation of the Hessian. 
In contrast, SPARFA-M scales favorably thanks to its accelerated first-order FISTA optimization, which
avoids the computation of the Hessian.

\section{SPARFA-B: Bayesian Sparse Factor Analysis} 
\label{sec:bayes}

\sloppy

Our second algorithm, SPARFA-B, solves the SPARFA problem using a Bayesian method based on Markov chain Monte-Carlo (MCMC) sampling.  In contrast to SPARFA-M, which computes point estimates for each of the parameters of interest, \mbox{SPARFA-B} computes full posterior distributions for~$\bW, \bC$, and $\bimud$.

\fussy

While SPARFA-B has a higher computational complexity than SPARFA-M, it has several notable benefits in the context of learning and content analytics.  First, the full posterior distributions enable the computation of informative quantities such as credible intervals and posterior modes for all parameters of interest.  Second, since MCMC methods explore the full posterior space, they are not subject to being trapped indefinitely in local minima, which is possible with SPARFA-M. Third, the hyperparameters used in Bayesian methods generally have  intuitive meanings, in contrary to the regularization parameters of optimization-based methods like SPARFA-M. These hyperparameters can also be specially chosen to incorporate additional prior information about the problem.

\subsection{Problem Formulation} \label{priors}

As discussed in \fref{sec:assumptions}, we require the matrix $\bW$ to be both sparse (A2) and non-negative (A3). We enforce these assumptions through the following prior distributions that are a variant of the well-studied spike-slab model \citep{west2003bayesian,ishwaran2005spike} adapted for non-negative factor loadings:
\begin{align}
W_{i,k} &\sim r_k \, \textit{Exp}(\lambda_k) + (1-r_k) \, \delta_0, \,\,\,
\lambda_k \sim \textit{Ga}(\alpha,\beta), \,\,\, \text{and} \,\,\,
r_k \sim \textit{Beta}(e,f). \label{eq:priors1}
\end{align}
Here, $\textit{Exp}(x|\lambda) \sim \lambda e^{-\lambda x}$, $x \geq 0$, and $\textit{Ga}(x|\alpha,\beta) \sim \frac{\beta^\alpha x^{\alpha-1}e^{-\beta x}}{\Gamma(\alpha)}$, $x \geq 0$, $\delta_0$ is the Dirac delta function, and $\alpha$, $\beta$, $e$, $f$ are hyperparameters. The model \eqref{eq:priors1} uses the latent random variable $r_k$ to control the sparsity via the hyperparameters $e$ and~$f$. This set of priors induces a conjugate form on the posterior that enables efficient sampling. We note that both the exponential rate parameters $\lambda_k$ as well as the inclusion probabilities $r_k$ are grouped per factor. The remaining priors used in the proposed Bayesian model are summarized as 
\begin{align}
& \bmc_j  \sim  \mathcal{N}(0,\bV), \,\,\,
\bV \sim \textit{IW}(\bV_0,h), \,\,\, \text{and} \,\,\,
\mu_i \sim  \mathcal{N}(\mu_0,v_\bimud), \label{eq:priors2}
\end{align}
where $\bV_0$, $h$, $\mu_0$, and $v_{\boldsymbol{\mu}}$ are hyperparameters. 

\subsection{The SPARFA-B Algorithm}
\label{posteriors}

We obtain posterior distribution estimates for the parameters of interest through an MCMC method based on the Gibbs' sampler. To implement this, we must derive the conditional posteriors for each of the parameters of interest.  We note again that the graded learner-response matrix~$\bY$ will not be fully observed, in general. Thus, our sampling method must be equipped to handle missing data.  

The majority of the posterior distributions follow from standard results in Bayesian analysis and will not be derived in detail here. The exception is the posterior distribution of $W_{i,k}\; \forall i,k$. The spike-slab model that enforces sparsity in $\bW$ requires first sampling~\mbox{$W_{i,k} \neq 0 |  \bZ, \bC, \bimud$} and then sampling $W_{i,k} | \bZ, \bC, \bimud$, for all $W_{i,k} \neq 0$. 
These posterior distributions differ from previous results in the literature due to our assumption of an exponential (rather than a normal) prior on  $W_{i,k}$.  We next derive these two results in detail.

\subsubsection{Derivation of Posterior Distribution of $W_{i,k}$}

We seek both the probability that an entry $W_{i,k}$ is active (non-zero) and the distribution of $W_{i,k}$ when active given our observations. The following theorem states the final sampling results; the proof is given in \fref{app:sparfabd}.

\begin{thm}[Posterior distributions for $\bW$] \label{thm:wpost} 
For all $i = 1, \ldots, Q$ and all $k = 1, \ldots, K$, the posterior sampling results for $W_{i,k} = 0 |  \bZ, \bC, \bimud$ and $W_{i,k} | \bZ, \bC, \bimud, W_{i,k} \neq 0$ are given by
\begin{align*}
& \textstyle \widehat{R}_{i,k} = p(W_{i,k} = 0 | \bZ, \bC, \boldsymbol{\mu}) =  \frac{\frac{ \mathcal{N}^r\!(0|\widehat{M}_{i,k},\widehat{S}_{i,k},\lambda_k)}{\textit{Exp}(0|\lambda_k)} (1-r_k)}{\frac{ \mathcal{N}^r\!(0|\widehat{M}_{i,k},\widehat{S}_{i,k},\lambda_k)}{\textit{Exp}(0|\lambda_k)} (1-r_k) + r_k}, \notag\\
& W_{i,k} | \bZ, \bC, \boldsymbol{\mu}, W_{i,k} \neq 0 \sim  \mathcal{N}^r(\widehat{M}_{i,k},\widehat{S}_{i,k},\lambda_k), \notag \\
&  \widehat{M}_{i,k} = \frac{\sum_{\{j : (i,j) \in \Omega_{\text{obs}} \} } \bigl( (Z_{i,j} - \mu_i)  - \sum_{k^\prime \neq k} W_{i, k^\prime} C_{k^\prime, j} \bigr) C_{k, j}}{\sum_{\{j : (i,j) \in \Omega_{\text{obs}} \} } C_{k, j}^2}, \notag \\ 
&\widehat{S}_{i,k} = \left({\textstyle \sum_{\{j : (i,j) \in \Omega_{\text{obs}} \} } C_{k, j}^2}\right)^{-1}, 
\end{align*}
where $ \mathcal{N}^{r}(x|m,s,\lambda) =  \frac{e^{\lambda m - \lambda^2 s / 2}}{\sqrt{2 \pi s} \Phi\bigl(\frac{m-\lambda s}{\sqrt{s}} \bigr)} e^{-(x-m)^2/2s - \lambda m}$ represents a rectified normal distribution (see~\cite{schmidt2009bayesian}). 

\end{thm}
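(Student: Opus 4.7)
The plan is to exploit the auxiliary-variable representation that underlies the probit Gibbs sampler: conditional on $\bZ$, the model $Z_{i,j} = \vecw_i^T \vecc_j + \mu_i + \epsilon_{i,j}$ with $\epsilon_{i,j} \sim \mathcal{N}(0,1)$ is a plain linear Gaussian regression in the entries of $\bW$. The key simplifying move is to condition on every entry of $\bW$ other than the target $W_{i,k}$, so only that single scalar coordinate remains to be sampled.

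First, I would form the residuals $R_{i,k,j} = (Z_{i,j} - \mu_i) - \sum_{k^\prime \neq k} W_{i,k^\prime} C_{k^\prime,j}$ for each $(i,j) \in \Omega_\text{obs}$. Collecting $\sum_j (R_{i,k,j} - W_{i,k} C_{k,j})^2$ and completing the square in $W_{i,k}$ produces a Gaussian likelihood kernel proportional to $\mathcal{N}(W_{i,k} \mid \widehat{M}_{i,k}, \widehat{S}_{i,k})$, with $\widehat{M}_{i,k}$ reading off as the univariate ordinary least-squares estimate and $\widehat{S}_{i,k}$ as its inverse precision---exactly the quantities in the theorem.

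Next, I would combine this likelihood with the spike-and-slab prior $r_k \textit{Exp}(\lambda_k) + (1-r_k)\delta_0$ via Bayes' rule. On the slab branch, the posterior is proportional to $e^{-\lambda_k W_{i,k}}\,\mathcal{N}(W_{i,k} \mid \widehat{M}_{i,k}, \widehat{S}_{i,k})\,\mathbb{I}[W_{i,k} \geq 0]$; absorbing the exponential tilt into a Gaussian mean shift of $-\lambda_k \widehat{S}_{i,k}$ and truncating to $[0,\infty)$ reproduces precisely the rectified normal $\mathcal{N}^r(\widehat{M}_{i,k}, \widehat{S}_{i,k}, \lambda_k)$, in particular yielding the $\Phi((m - \lambda s)/\sqrt{s})$ factor that appears in the stated density.

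Finally, for the spike-probability $\widehat{R}_{i,k}$ I would form the two model-averaged marginal likelihoods. The spike contributes $(1-r_k)\,\mathcal{N}(0 \mid \widehat{M}_{i,k}, \widehat{S}_{i,k})$, while the slab contributes $r_k \int_0^\infty \lambda_k e^{-\lambda_k W}\,\mathcal{N}(W \mid \widehat{M}_{i,k}, \widehat{S}_{i,k})\,dW$. The crucial identity, obtained by evaluating the defining relation $\mathcal{N}^r(W \mid m,s,\lambda) \cdot Z_\text{rect} = \lambda e^{-\lambda W}\,\mathcal{N}(W \mid m,s)$ at $W=0$, is that this integral equals $\textit{Exp}(0 \mid \lambda_k)\,\mathcal{N}(0 \mid \widehat{M}_{i,k}, \widehat{S}_{i,k}) / \mathcal{N}^r(0 \mid \widehat{M}_{i,k}, \widehat{S}_{i,k}, \lambda_k)$. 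Substituting, the common factor $\mathcal{N}(0 \mid \widehat{M}_{i,k}, \widehat{S}_{i,k})$ cancels from numerator and denominator of the Bayes ratio, and multiplying top and bottom by $\mathcal{N}^r/\textit{Exp}$ recovers the displayed compact form for $\widehat{R}_{i,k}$. I expect the main obstacle to be the careful bookkeeping of normalization constants: it is easy to lose a sign or a factor of $\lambda_k$ when transferring between the exponentially-tilted-truncated-Gaussian representation and the rectified-normal density, so the delicate step is verifying that the rectified normal's normalizer is exactly what is required to collapse the slab's marginal likelihood into the stated ratio.
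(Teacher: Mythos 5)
Your proposal is correct and follows essentially the same route as the paper's proof: complete the square in $W_{i,k}$ to reduce to a scalar spike-and-slab problem with Gaussian likelihood $\mathcal{N}(\,\cdot\,|\widehat{M}_{i,k},\widehat{S}_{i,k})$, identify the slab posterior as the rectified normal, and obtain $\widehat{R}_{i,k}$ by dividing the Bayes ratio through by the slab's marginal likelihood so that the posterior density at zero, $\mathcal{N}^r(0|\widehat{M}_{i,k},\widehat{S}_{i,k},\lambda_k)/\textit{Exp}(0|\lambda_k)$, appears. The only difference is ordering: the paper first works out the abstract scalar ``exponential signal in Gaussian noise'' calculation and then specializes via the completion of the square, whereas you do the completion of the square first; the content is identical.
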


\subsubsection{Sampling Methodology}
\label{sec:smeth}

SPARFA-B carries out the following MCMC steps to compute posterior distributions for all parameters of interest:
\begin{enumerate}
\item For all $(i, j) \in \Omega_{\text{obs}}$, draw $Z_{i, j} \sim  \mathcal{N}\bigl((\bW \bC)_{i, j} + \mu_i,1\bigr)$, truncating above 0 if $Y_{i,j} = 1$, and truncating below 0 if $Y_{i,j} = 0$. 
\item For all $i = 1, \ldots, Q$, draw $\mu_i \sim  \mathcal{N}(m_i,v)$ with
$v = (v_\bimud^{-1} + n^\prime)^{-1}$, $m_i = \mu_0 + v \sum_{\{ j : (i,j) \in \Omega_{\text{obs}}\}} \left( Z_{i,j} - \vecw_i^T \bmc_j \right)$, and $n^\prime$ the number of learners responding to question~$i$.

\item For all $j = 1, \ldots, N$, draw $\bmc_j \sim  \mathcal{N}(\bmm_j,\bM_j)$ with $\bM_j = (\bV^{-1} + \widetilde{\bW}^T \widetilde{\bW})^{-1}$, and $\bmm_j = \bM_j \widetilde{\bW}^T (\widetilde{\bmz_j} - \widetilde{\bimud})$. The notation $\widetilde{(\cdot)}$ denotes the restriction of the vector or matrix to the set of rows $i : (i,j) \in \Omega_{\text{obs}}$.
\item Draw $\bV \sim \textit{IW}(\bV_0 + \bC \bC^T, N+h)$. 
\item For all $i = 1, \ldots, Q$ and $k = 1, \ldots, K$, draw $W_{i, k} \sim \widehat{R}_{i, k} \,  \mathcal{N}^r(\widehat{M}_{i,k}, \widehat{S}_{i,k}) + (1-\widehat{R}_{i, k}) \delta_0$, where $\widehat{R}_{i,k}, \widehat{M}_{i,k}$, and $\widehat{S}_{i,k}$ are as stated in \fref{thm:wpost}.
\item For all $k = 1, \ldots, K$, let $b_k$ define the number of active (i.e., non-zero) entries of $\vecw_k$. Draw \mbox{$\lambda_k \sim \textit{Ga}(\alpha + b_k, \beta + \sum_{i=1}^Q W_{i, k})$}.
\item For all $k = 1, \ldots, K$, draw $r_k \sim \textit{Beta}(e+b_k, f+Q-b_k)$, with $b_k$ defined as in Step 6. 
\end{enumerate}

\subsection{Algorithmic Details and Improvements for SPARFA-B}
Here we discuss some several practical issues for efficiently implementing SPARFA-B, selecting the hyperparameters, and techniques for easy visualization of the SPARFA-B results. 

\subsubsection{Improving Computational Efficiency}

The Gibbs sampling scheme of SPARFA-B enables efficient implementation in several ways.  First, draws from the truncated normal in Step 1 of Section \ref{sec:smeth} are decoupled from one another, allowing them to be performed independently and, potentially, in parallel.  Second, sampling of the elements in each column of $\bW$ can be carried out in parallel by computing the relevant factors of Step 5 in matrix form. Since $K \ll Q,N$ by assumption~(A1), the relevant parameters are recomputed only a relatively small number of times.  
One taxing computation is the calculation of the covariance matrix $\bM_j$ for each $j= 1, \ldots, N$ in Step 3.  This computation is necessary, since we do not constrain each learner to answer the same set of questions which, in turn, changes the nature of the covariance calculation for each individual learner.  For data sets where all learners answer the same set of questions, this covariance matrix is the same for all learners and, hence, can be carried out once per MCMC iteration. 

\subsubsection{Parameter Selection}

The selection of the hyperparameters is performed at the discretion of the user. As is typical for Bayesian methods, non-informative (broad) hyperparameters can be used to avoid biasing results and to allow for adequate exploration of the posterior space.  Tighter hyperparameters can be used when additional side information is available. For example, prior information from subject matter experts might indicate which concepts are related to which questions or might indicate the intrinsic difficulty of the questions.  

Since SPARFA-M has a substantial speed advantage over SPARFA-B, it may be advantageous to first run SPARFA-M and then use its output to help in determining the hyperparameters or to initialize the SPARFA-B variables directly.

\subsubsection{Post-Processing for Data Visualization}
\label{sec:sparfabpostprocessing}

As discussed above, the generation of posterior statistics is one of the primary advantages of SPARFA-B.  However, for many tasks, such as visualization of the retrieved knowledge base, it is often convenient to post-process the output of SPARFA-B to obtain point estimates for each parameter.  
For many Bayesian methods, simply computing the posterior mean is often sufficient.  This is the case for most parameters computed by SPARFA-B, including~$\bC$ and~$\boldsymbol{\mu}$. The posterior mean of $\bW$, however, is generally non-sparse, since the MCMC will generally explore the possibility of including each entry of $\bW$.  Nevertheless, we can easily generate a sparse $\bW$ by examining the posterior mean of the inclusion statistics contained in~$\widehat{R}_{i,k}$,~$\forall i,k$. Concretely, if the posterior mean of $\widehat{R}_{i,k}$ is small, then we set the corresponding entry of $W_{i,k}$ to zero.  Otherwise, we set $W_{i,k}$ to its posterior mean.  We will make use of this method throughout the experiments presented in \fref{sec:experiments}.

\subsection{Related Work on  Bayesian Sparse Factor Analysis} \label{bayesian_prevwork}

Sparsity models for Bayesian factor analysis have been well-explored in the statistical literature \citep{west2003bayesian,tipping2001sparse,ishwaran2005spike}.  One popular avenue for promoting sparsity is to place a prior on the variance of each component in $\bW$ (see, e.g., \cite{tipping2001sparse}, \cite{fokoue2004stochastic}, and \cite{pournara2007factor}). In such a model, large variance values indicate active components, while small variance values indicate inactive components. Another approach is to model active and inactive components directly using a form of a spike-slab model due to \cite{west2003bayesian} and used in  \cite{bengioss},  \cite{l1vsbayes}, and \cite{hahn2010sparse}:
\begin{align*}
W_{i,k} \sim r_k \, \mathcal{N}(0,v_k)  + (1-r_k) \, \delta_0, \,\,\, v_k \sim \textit{IG}(\alpha,\beta), \,\,\, \text{and} \,\,\,
r_k &\sim \textit{Beta}(e,f).
\end{align*}
The approach employed in \eqref{eq:priors1}  utilizes a spike-slab prior with an exponential distribution, rather than a normal distribution, for the active components of $\bW$. We chose this prior for several reasons:  First, it enforces the non-negativity assumption (A3). Second, it induces a posterior distribution that can be both computed in closed form and sampled efficiently. Third, its tail is slightly heavier than that of a standard normal distribution, which improves the exploration of quantities further away from zero.

A sparse factor analysis model with non-negativity constraints that is related to the one proposed here was discussed in \cite{meng2010uncovering}, although their methodology is quite different from ours. Specifically, they impose non-negativity on the (dense) matrix $\bC$ rather than on the sparse factor loading matrix~$\bW$. Furthermore, they enforce non-negativity using a truncated normal\footnote{One could alternatively employ a truncated normal distribution on the support $[0,\infty)$ for the  active entries in  $\bW$.  In experiments with this model, we found a slight, though noticeable, improvement in prediction performance on real-data experiments using the exponential prior.} rather than an exponential prior. \\

\section{Tag Analysis: Post-Processing to Interpret the Estimated Concepts}
\label{sec:taganalysis}

So far we have developed SPARFA-M and SPARFA-B to estimate $\bW$, $\bC$, and $\boldsymbol{\mu}$ (or equivalently, $\bM$) in \fref{eq:qam} given the partial binary observations in $\bY$.  Both $\bW$ and $\bC$ encode a small number of latent concepts.  As we initially noted, the concepts are ``\emph{abstract}'' in that they are estimated from the data rather than dictated by a subject matter expert. 
In this section we develop a principled post-processing approach to interpret the meaning of the abstract concepts after they have been estimated from learner responses, which is important if our results are to be usable for learning analytics and content analytics in practice.  Our approach applies when the questions come with a set of user-generated ``tags'' or ``labels'' that describe in a free-form manner what ideas underlie each question.

We develop a post-processing algorithm for the estimated matrices $\bW$ and~$\bC$ that estimates the association between the latent concepts and the user-generated tags, enabling concepts to be interpreted as a ``bag of tags.'' Additionally, we show how to extract a personalized tag knowledge profile for each learner.  The efficacy of our tag-analysis framework will be demonstrated in the real-world experiments in \fref{sec:real}.

\subsection{Incorporating Question--Tag Information}
\label{sec:taginfo}

Suppose that a set of tags has been generated for each question that represent the topic(s) or theme(s) of each question.  The tags could be generated by the course instructors, subject matter experts, learners, or, more broadly, by crowd-sourcing.  In general, the tags provide a redundant representation of the true knowledge components, i.e., concepts are associated to a ``bag of tags.''

Assume that there is a total number of $M$ tags associated with the $Q$ questions.  
We form a $Q \times M$ matrix $\bT$, where each column of $\bT$ is associated to one of the $M$ pre-defined tags. 
We set $T_{i,m} = 1$ if tag $m\in\{1,\ldots,M\}$ is present in question $i$ and~$0$ otherwise.
Now, we postulate that the question association matrix $\bW$ extracted by SPARFA can be further factorized as $\bW = \bT \bA$, where $\bA$ is an $M \times K$ matrix representing the tags-to-concept mapping. This leads to the following additional assumptions: 
\begin{itemize}
\item[(A4)] {\em{Non-negativity:}} The matrix~$\bA$ is non-negative.  This increases the interpretability of the result, since concepts should not be negatively correlated with any tags, in general.

\item[(A5)] {\em{Sparsity:}} Each column of $\bA$ is sparse.  This ensures that the estimated concepts relate to only a few tags.
\end{itemize}

\subsection{Estimating the Concept--Tag Associations and Learner--Tag Knowledge}
\label{sec:cta}

The assumptions (A4) and (A5) enable us to extract $\bA$ using $\ellone$-norm regularized non-negative least-squares as described in \cite{tibsbook} and \cite{bpdn}. Specifically, to obtain each column $\veca_k$ of $\bA$, $k=1,\ldots,K$, we solve the following convex optimization problem, a non-negative variant of {\em basis pursuit denoising}:
\begin{align*}
(\text{BPDN}_+) \quad \underset{\veca_k\colon\!A_{m,k}\geq0\; \forall m}{\text{minimize}}\,\,\, \textstyle \frac{1}{2} \|\bmw_k - \bT \veca_k \|_2^2 + \eta \normone{\veca_k}.
\end{align*} 
Here, $\bmw_k$ represents the $k^\text{th}$ column of $\bW$, and the parameter $\eta$ controls the sparsity level of the solution $\veca_k$.

We propose a first-order method derived from the FISTA framework in \cite{fista} to solve $(\text{BPDN}_+)$. The algorithm consists of two steps:  A gradient step with respect to the $\elltwo$-norm penalty function, and a projection step with respect to the $\ellone$-norm regularizer subject to the non-negative constraints on $\veca_k$. 
By solving $(\text{BPDN}_+)$ for $k=1,\ldots,K$, and building $\bA=[\bma_1,\ldots,\bma_K]$, we can 
\begin{inparaenum}[(i)]
\item assign tags to each concept based on the non-zero entries in $\bA$ and 
\item estimate a tag-knowledge profile for each learner. 
\end{inparaenum}

\subsubsection{Associating Tags to Each Concept}

Using the concept--tag association matrix $\bA$ we can directly associate tags to each concept estimated by the SPARFA algorithms. 
We first normalize the entries in $\bma_k$ such that they sum to one. 
With this normalization, we can then calculate percentages that show the proportion of each tag that contributes to concept $k$ corresponding to the non-zero entries of $\bma_k$.
This concept tagging method typically will assign multiple tags to each concept, thus, enabling one to identify the coarse meaning of each concept (see \fref{sec:real} for examples using real-world data). 

\subsubsection{Learner Tag Knowledge Profiles}
\label{sec:tagprof}

Using the concept--tag association matrix $\bA$, we can assess each learner's knowledge of each tag. 
To this end, we form an $M \times N$ matrix $\bU = \bA \bC$, where the $U_{m,j}$ characterizes the knowledge of learner $j$ of tag $m$.
This information could be used, for example, by a PLS to automatically inform each learner which tags they have strong knowledge of and which tags they do not.  
Course instructors can use the information contained in $\bU$ to extract measures representing the knowledge of all learners on a given tag, e.g., to identify the tags for which the entire class lacks strong knowledge.  
This information would enable the course instructor to select future learning content that deals with those specific tags.
A real-world example demonstrating the efficacy of this framework is shown below in \fref{sec:real301}.

\section{Experiments}
\label{sec:experiments}

In this section, we validate SPARFA-M and SPARFA-B on both synthetic and real-world educational data sets. 
First, using synthetic data, we validate that both algorithms can accurately estimate the underlying factors from binary-valued observations and characterize their performance under different circumstances. 
Specifically, we benchmark the factor estimation performance of SPARFA-M and SPARFA-B against a variant of the well-established K-SVD algorithm (\cite{ksvd}) used in dictionary-learning applications.
Second, using  real-world graded learner-response data we demonstrate the efficacy SPARFA-M (both probit and logit variants) and of SPARFA-B for learning and content analytics.
Specifically, we showcase how the estimated learner concept knowledge, question--concept association, and intrinsic question difficulty can support machine learning-based personalized learning.  
Finally, we compare SPARFA-M against the recently proposed binary-valued collaborative
filtering algorithm CF-IRT (\cite{logitfa}) that predicts unobserved learner responses.

\subsection{Synthetic Data Experiments}
\label{sec:synth}

We first characterize the estimation performance of SPARFA-M and SPARFA-B using synthetic test data generated from a known ground truth model. 
We generate instances of~$\bW$,~$\bC$, and $\boldsymbol{\mu}$ under pre-defined distributions and then generate the binary-valued observations $\bY$ according to \fref{eq:qam}.

Our report on the synthetic experiments is organized as follows. 
In \fref{sec:baseline}, we outline K-SVD$_+$, a variant of the well-established K-SVD dictionary-learning (DL) algorithm originally proposed in \cite{ksvd}; we use it as a baseline method for comparison to both SPARFA algorithms. 
In \fref{sec:emetric} we detail the performance metrics. 
We compare SPARFA-M, SPARFA-B, and K-SVD$_+$ as we vary the problem size and number of concepts (\fref{sec:synthfull}), observation incompleteness (\fref{sec:synthpunc}),  and the sparsity of~$\bW$ (\fref{sec:synthspar}). 
In the above-referenced experiments, we simulate the observation matrix $\bY$ via the inverse probit link function and use only the probit variant of SPARFA-M in order to make a fair comparison with SPARFA-B. 
In a real-world situation, however, the link function is generally unknown. 
In \fref{sec:synthrob} we conduct model-mismatch experiments, where we generate data from one link function but analyze assuming the other.  

In all synthetic experiments, we average the results of all performance measures over 25 Monte-Carlo trials, limited primarily by the computational complexity of SPARFA-B, for each instance of the model parameters we control.

\subsubsection{Baseline Algorithm: K-SVD$_+$}
\label{sec:baseline}

Since we are not aware of any existing algorithms to solve \fref{eq:qam} subject to the assumptions (A1)--(A3), we deploy a novel baseline algorithm based on the well-known K-SVD algorithm of \cite{ksvd}, which is widely used in various dictionary learning settings but ignores the inverse probit or logit link functions.
Since the standard K-SVD algorithm also ignores the non-negativity constraint used in the SPARFA model, we develop a variant of the non-negative K-SVD algorithm proposed in \cite{nnksvd} that we refer to as K-SVD$_+$.
In the sparse coding stage of K-SVD$_+$, we use the non-negative variant of orthogonal matching pursuit (OMP) outlined in \cite{omp}; that is, 
we enforce the non-negativity constraint by iteratively picking the entry corresponding to the maximum inner product without taking its absolute value. 
We also solve a non-negative least-squares problem to determine the residual error for the next iteration. 
In the dictionary update stage of K-SVD$_+$, we use a variant of the rank-one approximation algorithm detailed in \cite[Figure~4]{nnksvd}, where we impose non-negativity on the elements in $\bW$ but not on the elements of~$\bC$.

K-SVD$_{+}$ has as input parameters the sparsity level of each row of $\bW$. 
In what follows, we provide K-SVD$_{+}$ with the known \emph{ground truth} for the number of non-zero components in order to obtain its best-possible performance. 
This will favor K-SVD$_{+}$ over both SPARFA algorithms, since, in practice, such oracle information is not available. 

\subsubsection{Performance Measures}
\label{sec:emetric}

In each simulation, we evaluate the performance of SPARFA-M, SPARFA-B, and K-SVD$_+$ by comparing the fidelity of the estimates $\widehat{\bW}$, $\widehat{\bC}$, and $\hat{\boldsymbol{\mu}}$ to the ground truth $\bW$, $\bC$, and $\boldsymbol{\mu}$. Performance evaluation is complicated by the facts that
\begin{inparaenum}[(i)]
\item SPARFA-B outputs posterior distributions rather than simple point estimates of the parameters and
\item factor-analysis methods are generally susceptible to permutation of the latent factors.
\end{inparaenum}
We address the first concern by post-processing the output of SPARFA-B to obtain point estimates for $\bW$, $\bC$, and $\boldsymbol{\mu}$ as detailed in \fref{sec:sparfabpostprocessing} using $\widehat{R}_{i,k} < 0.35$ for the threshold value. 
We address the second concern by normalizing the columns of $\bW$, $\widehat{\bW}$ and the rows of $\bC$, $\widehat{\bC}$ to unit $\elltwo$-norm, permuting the columns of~$\widehat{\bW}$ and $\widehat{\bC}$ to best match the ground truth, and then compare $\bW$ and $\bC$ with the estimates $\widehat{\bW}$ and $\widehat{\bC}$. We also compute the Hamming distance between the support set of $\bW$ and that of the (column-permuted) estimate $\widehat{\bW}$.  
To summarize, the performance measures used in the sequel are 
\begin{align*}
&  E_\bW = \|\bW - \widehat{\bW}\|_F^2/\|\bW\|_F^2, && E_\bC = \|\bC - \widehat{\bC}\|_F^2/\|\bC\|_F^2, \\ 
&  E_{\boldsymbol{\mu}} = \|\boldsymbol{\mu} - \hat{\boldsymbol{\mu}}\|_2^2/\|\boldsymbol{\mu}\|_2^2, && E_\bH = \| \bH - \widehat{\bH} \|_F^2/ \| \bH \|_F^2,
\end{align*} 
where $\bH \in \{0,1\}^{Q \times K}$ with $H_{i,k} = 1$ if $W_{i,k} > 0$ and $H_{i,k} = 0$ otherwise.
The $Q\times K$ matrix~$\widehat\bH$ is defined analogously using $\widehat\bW$. 

\subsubsection{Impact of Problem Size and Number of Concepts}
\label{sec:synthfull}

In this experiment, we study the performance of SPARFA vs.\ KSVD$_+$ as we vary the number of learners $N$, the number of questions $Q$, and the number of concepts $K$.  

\paragraph{Experimental setup}

We vary the number of learners $N$ and the number of questions $Q$ $\in$  $\{50, 100, 200\}$, and the number of concepts $K \in\{5,10\}$.  For each combination of $(N,Q,K)$, we generate $\bW$, $\bC$, $\boldsymbol{\mu}$, and~$\bY$ according to~\fref{eq:priors1} and~\fref{eq:priors2} with $v_\mu = 1$, $\lambda_k={2}/{3}\;\forall k$, and $\bV_{\!0} = \bI_K$. 
For each instance, we choose the number of non-zero entries in each row of $\bW$ as $\textit{DU}(1,3)$ where $\textit{DU}(a,b)$ denotes the discrete uniform distribution in the range $a$ to $b$.
For each trial, we run the probit version of SPARFA-M, SPARFA-B, and K-SVD$_+$  to obtain the estimates $\widehat{\bW}$, $\widehat{\bC}$, $\hat{\boldsymbol{\mu}}$, and calculate $\widehat{\bH}$.  
For all of the synthetic experiments with SPARFA-M, we set the regularization parameters $\gamma=0.1$ and select $\lambda$ using the BIC (\cite{tibsbook}).
For SPARFA-B, we set the hyperparameters to $h=K+1$, $v_\mu = 1$, $\alpha=1$, $\beta=1.5$, $e=1$, and $f=1.5$; moreover, we burn-in the MCMC for 30,000 iterations and take output samples over the next 30,000 iterations.

\paragraph{Results and discussion}

\fref{fig:synth_size_bayes} shows box-and-whisker plots for the three algorithms and the four performance measures.
We observe that the performance of all of the algorithms generally improves as the problem size increases. Moreover, SPARFA-B has superior performance for $E_\bW$, $E_\bC$, and $E_{\boldsymbol{\mu}}$.
We furthermore see that  both SPARFA-B and SPARFA-M outperform K-SVD$_+$ on  $E_\bW$, $E_\bC$, and especially $E_{\boldsymbol{\mu}}$. 
K-SVD$_+$ performs very well in terms of $E_\bH$ (slightly better than both SPARFA-M and SPARFA-B) due to the fact that we provide it with the oracle sparsity level, which is, of course, not available in practice. 
SPARFA-B's improved estimation accuracy over SPARFA-M comes at the price of significantly higher computational complexity. For example, for $N=Q=200$ and $K=5$, SPARFA-B requires roughly 10~minutes on a 3.2\,GHz quad-core desktop PC, while SPARFA-M and K-SVD$_+$ require only $6$\,s.

In summary, SPARFA-B is well-suited to small problems where solution accuracy or the need for confidence statistics are the key factors; SPARFA-M, in contrast, is destined for analyzing large-scale problems where low computational complexity (e.g., to generate immediate learner feedback) is important.

\begin{figure}[tp]
\vspace{-1.0cm}
\centering

\subfigure{\includegraphics[width=0.245\textwidth]{./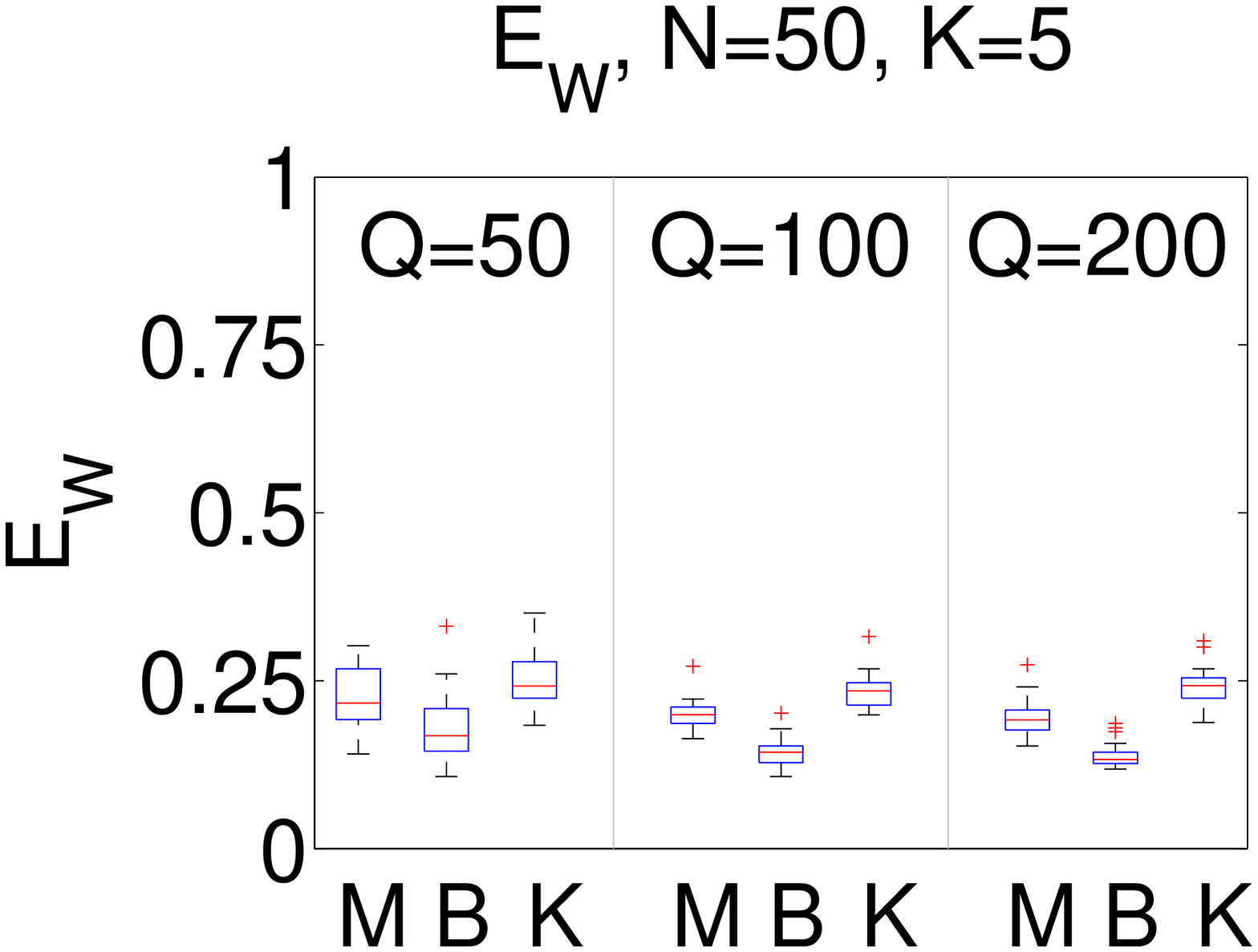}} \hspace{-0.1cm}
\subfigure{\includegraphics[width=0.245\textwidth]{./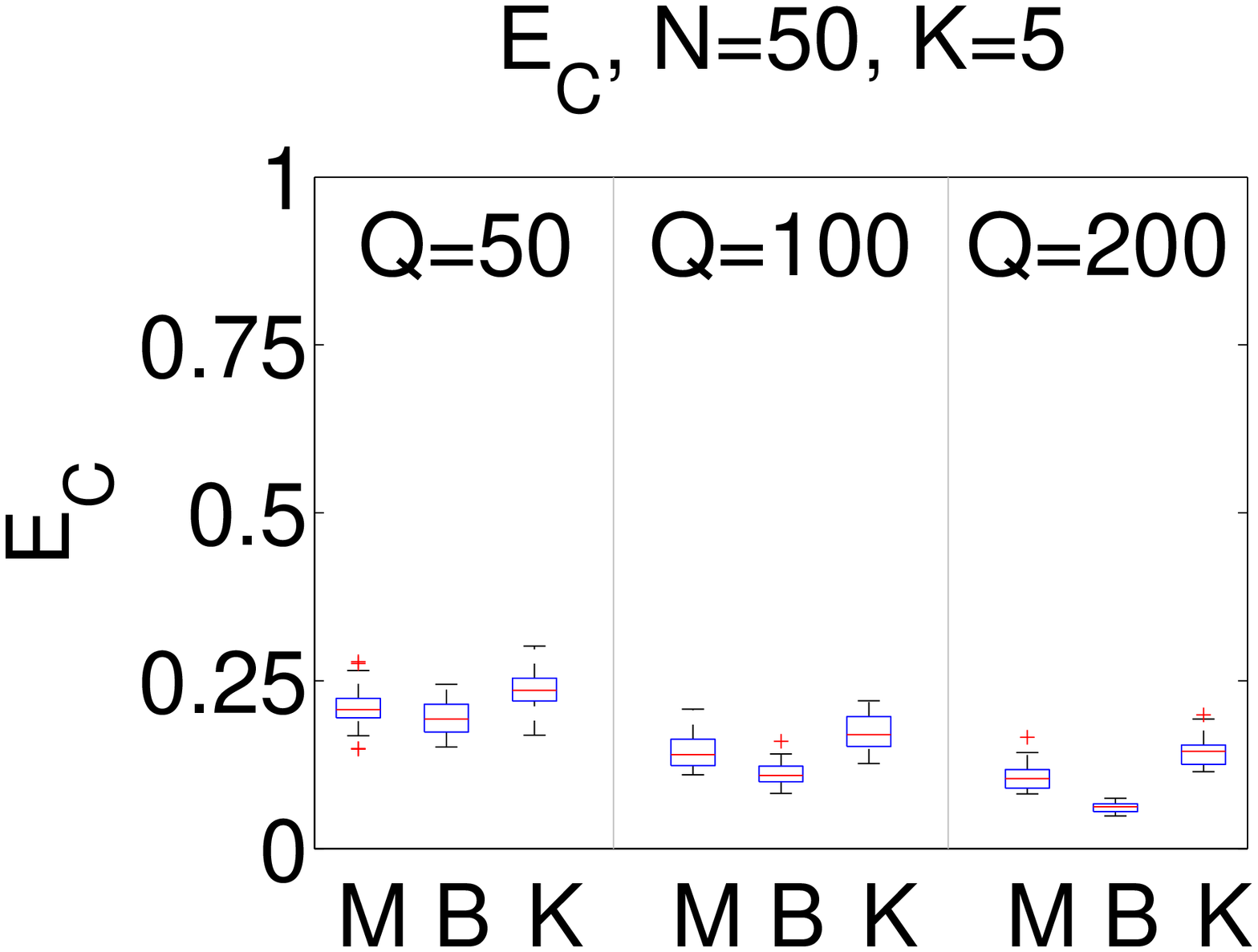}}\hspace{-0.1cm}
\subfigure{\includegraphics[width=0.245\textwidth]{./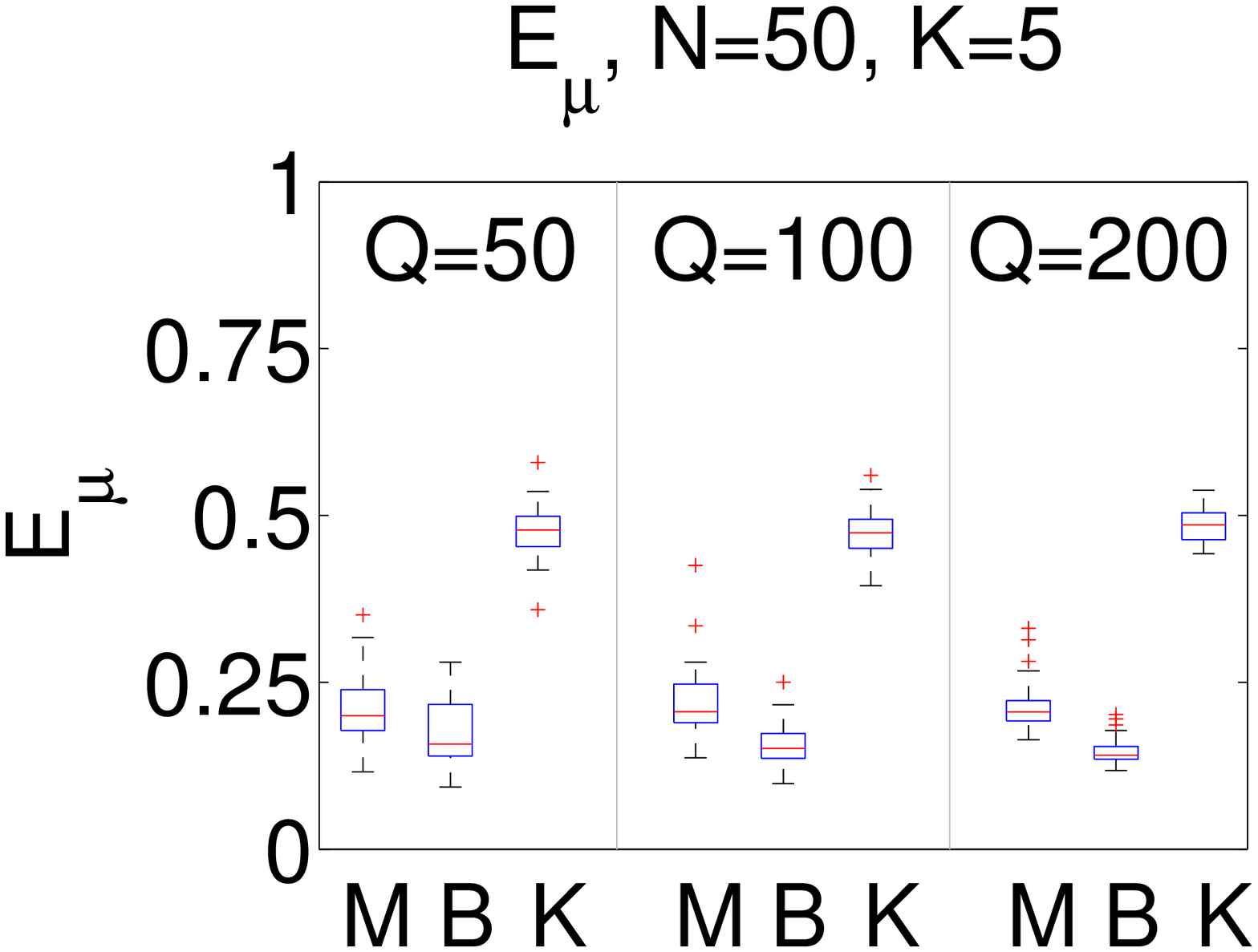}}
\hspace{-0.1cm}
\subfigure{\includegraphics[width=0.245\textwidth]{./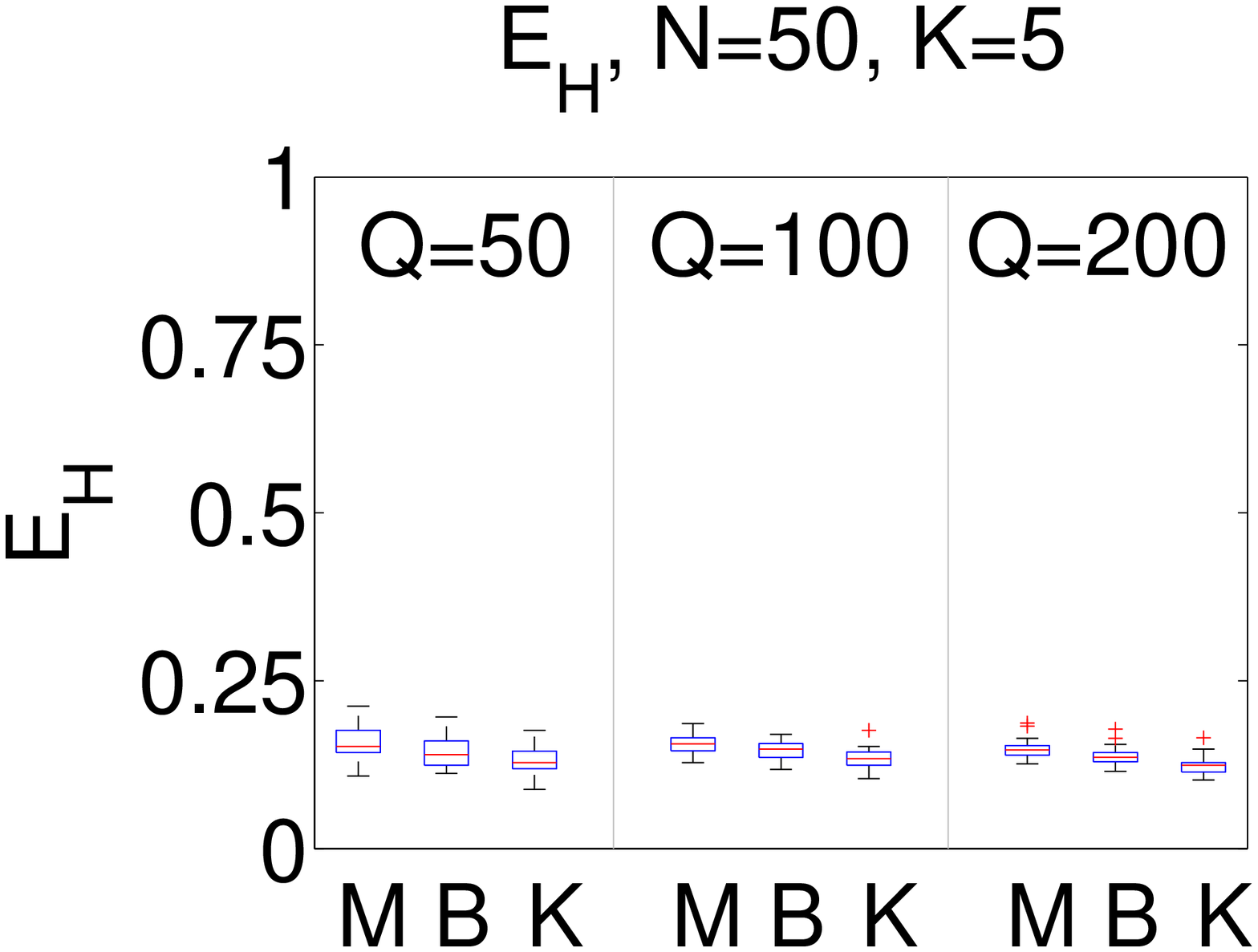}}
\addtocounter{subfigure}{-3}
\vspace{-0.0cm}
\subfigure{\includegraphics[width=0.245\textwidth]{./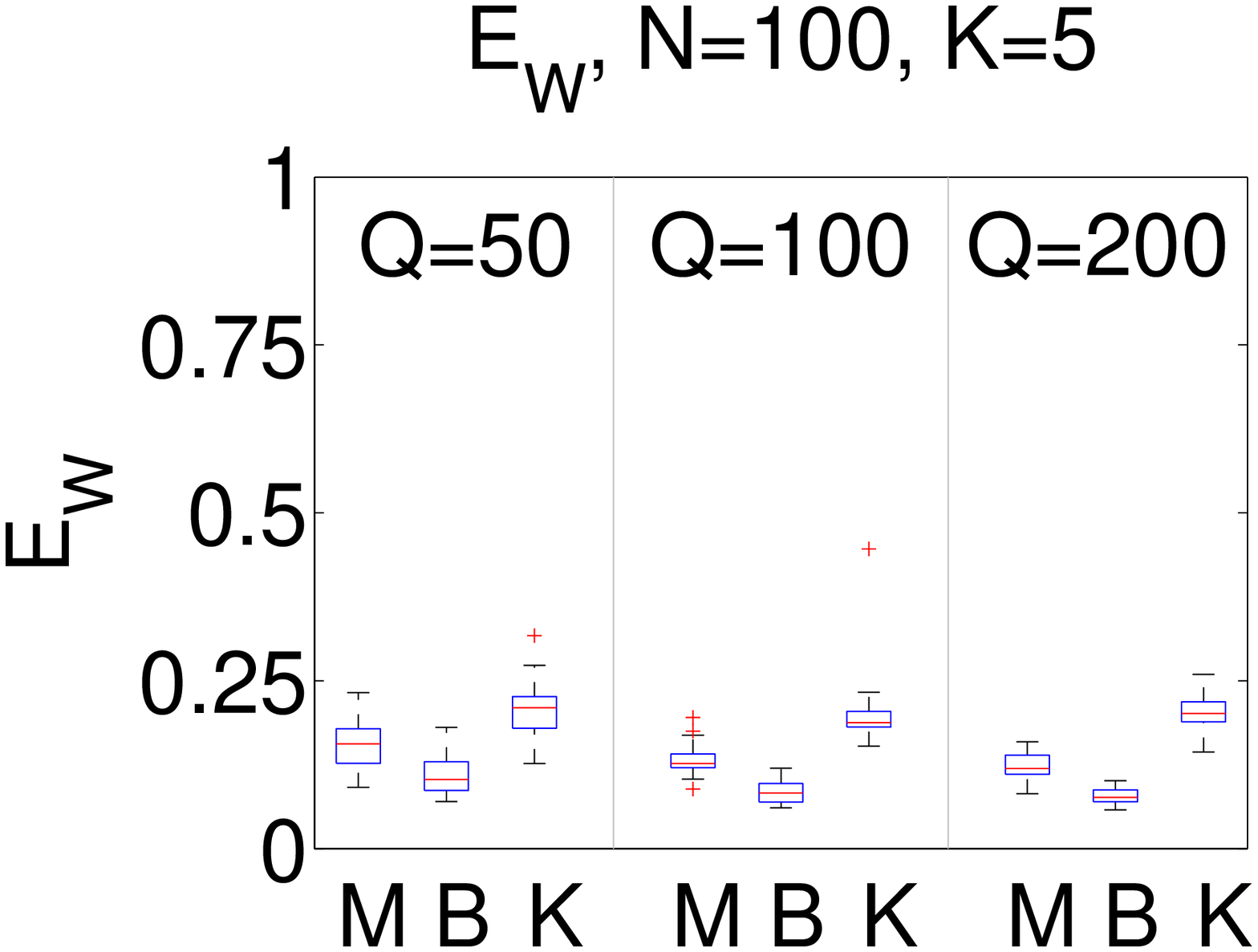}} \hspace{-0.1cm}
\subfigure{\includegraphics[width=0.245\textwidth]{./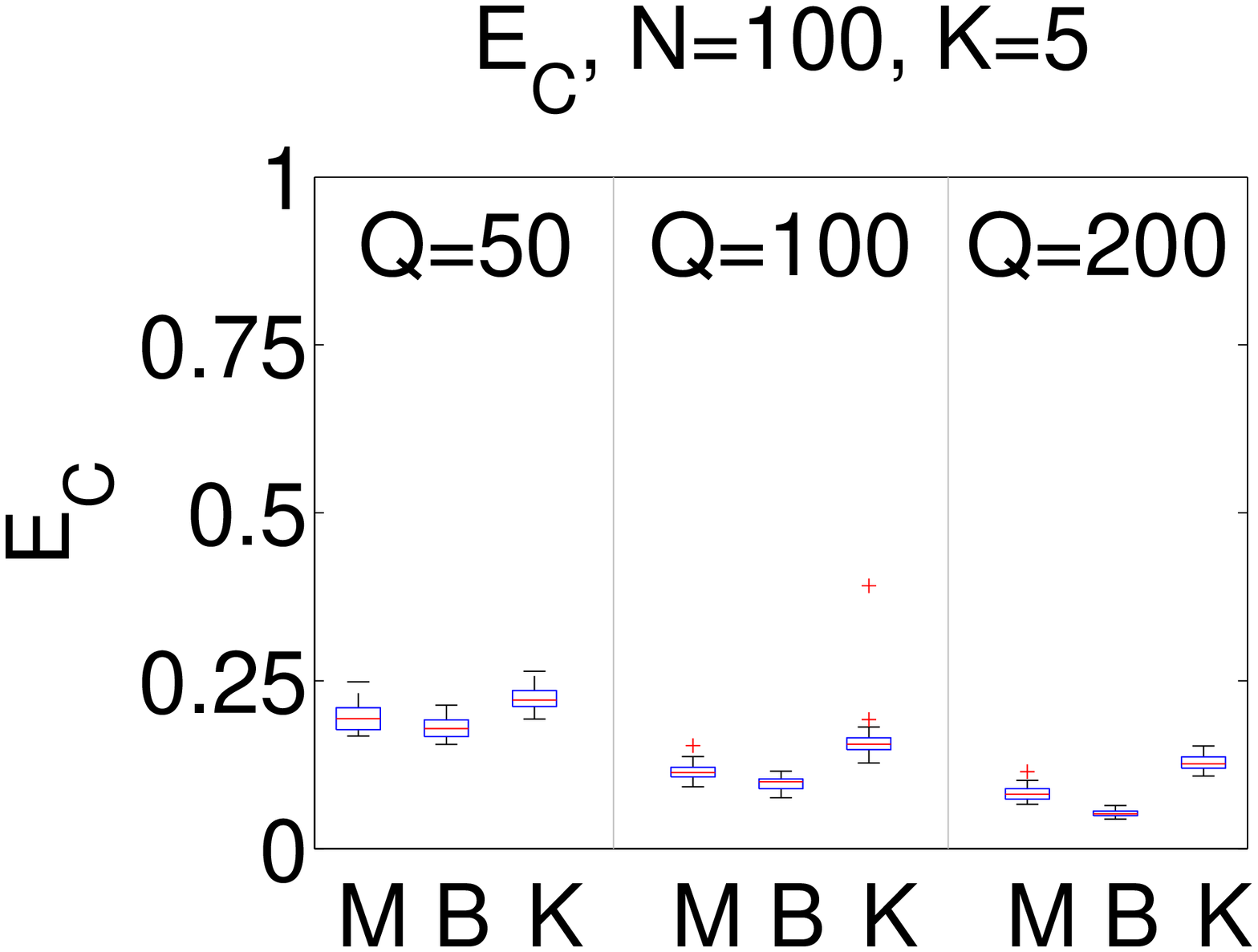}}\hspace{-0.1cm}
\subfigure{\includegraphics[width=0.245\textwidth]{./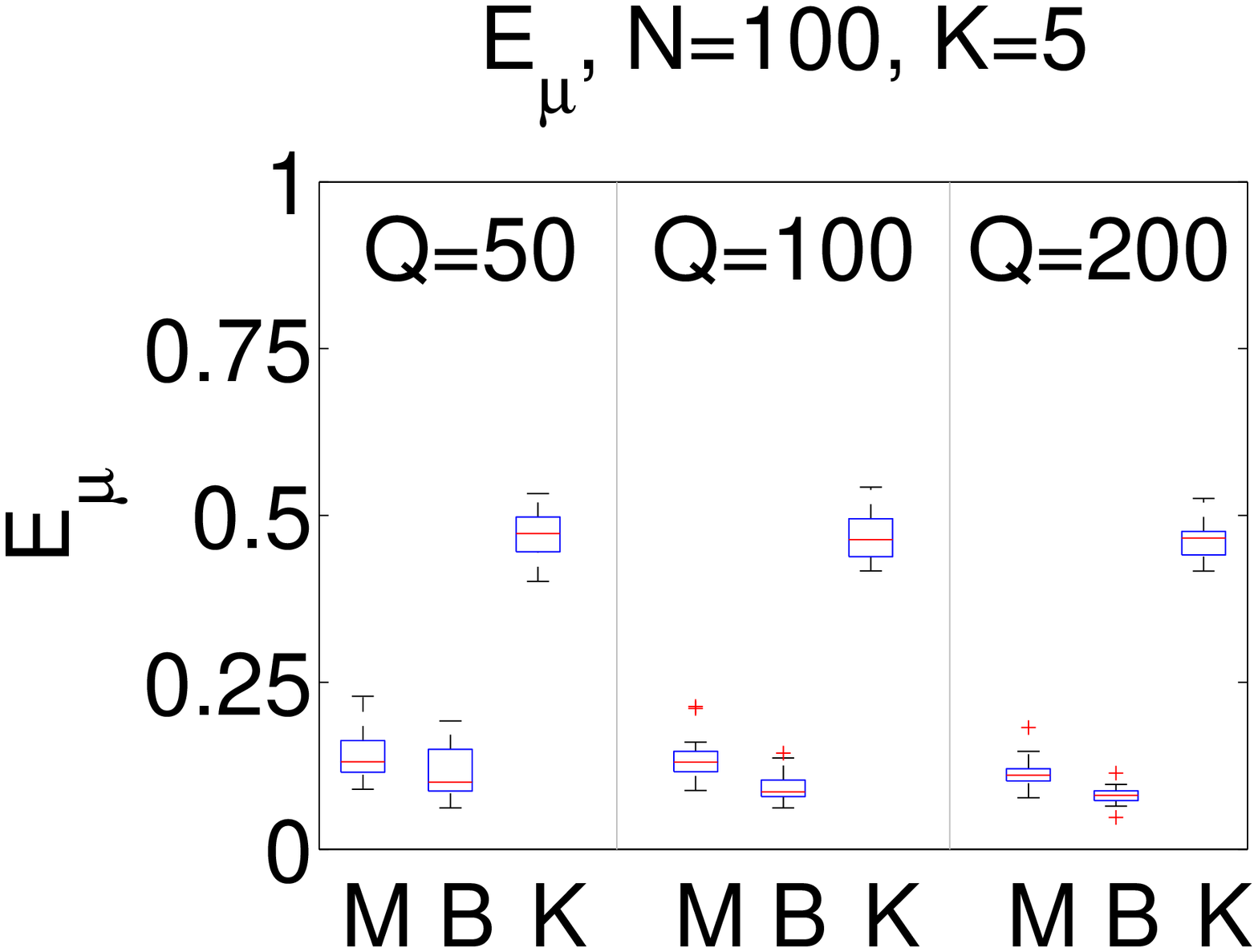}}
\hspace{-0.1cm}
\subfigure{\includegraphics[width=0.245\textwidth]{./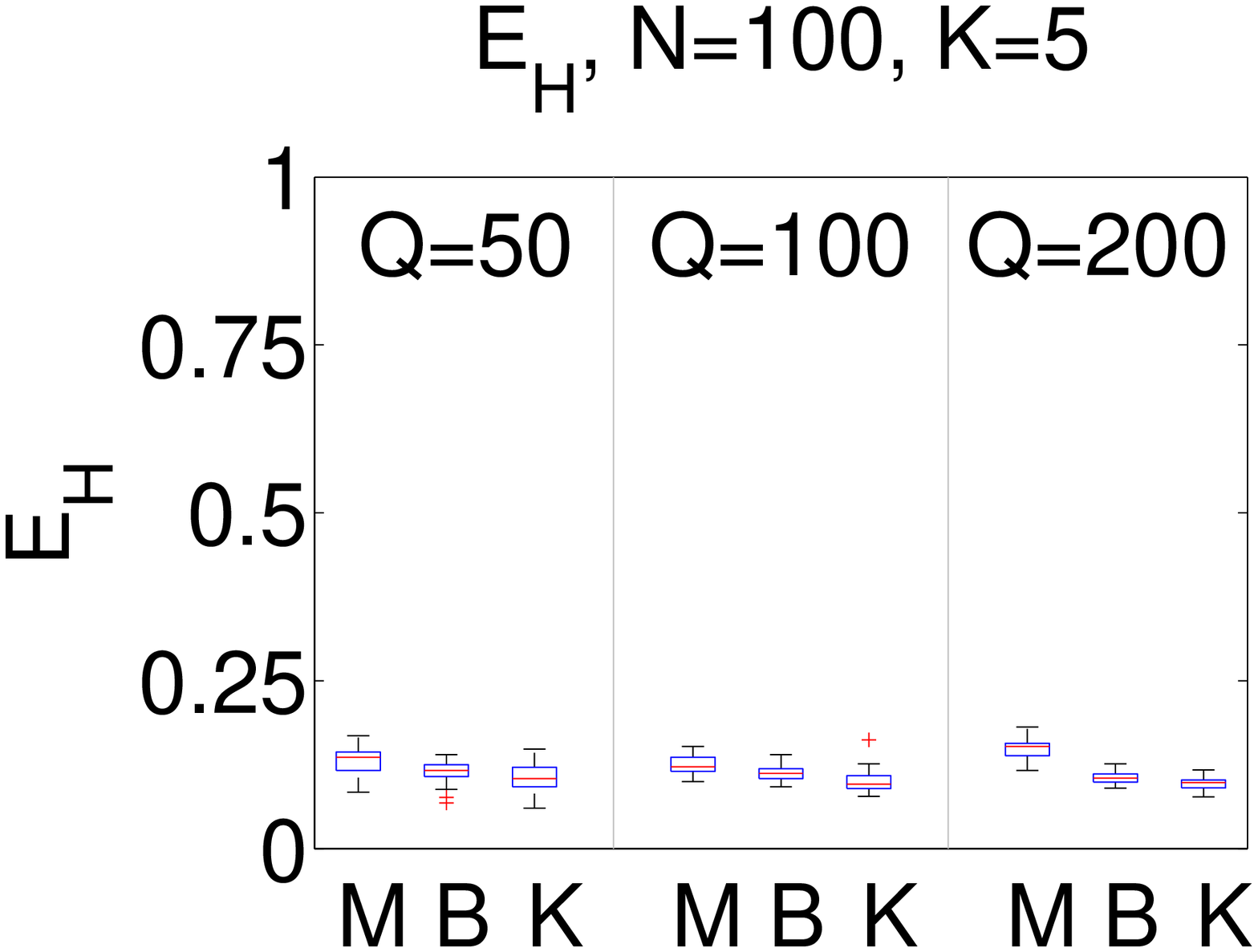}}
\addtocounter{subfigure}{-3}
\vspace{-0.0cm}
\subfigure{\includegraphics[width=0.245\textwidth]{./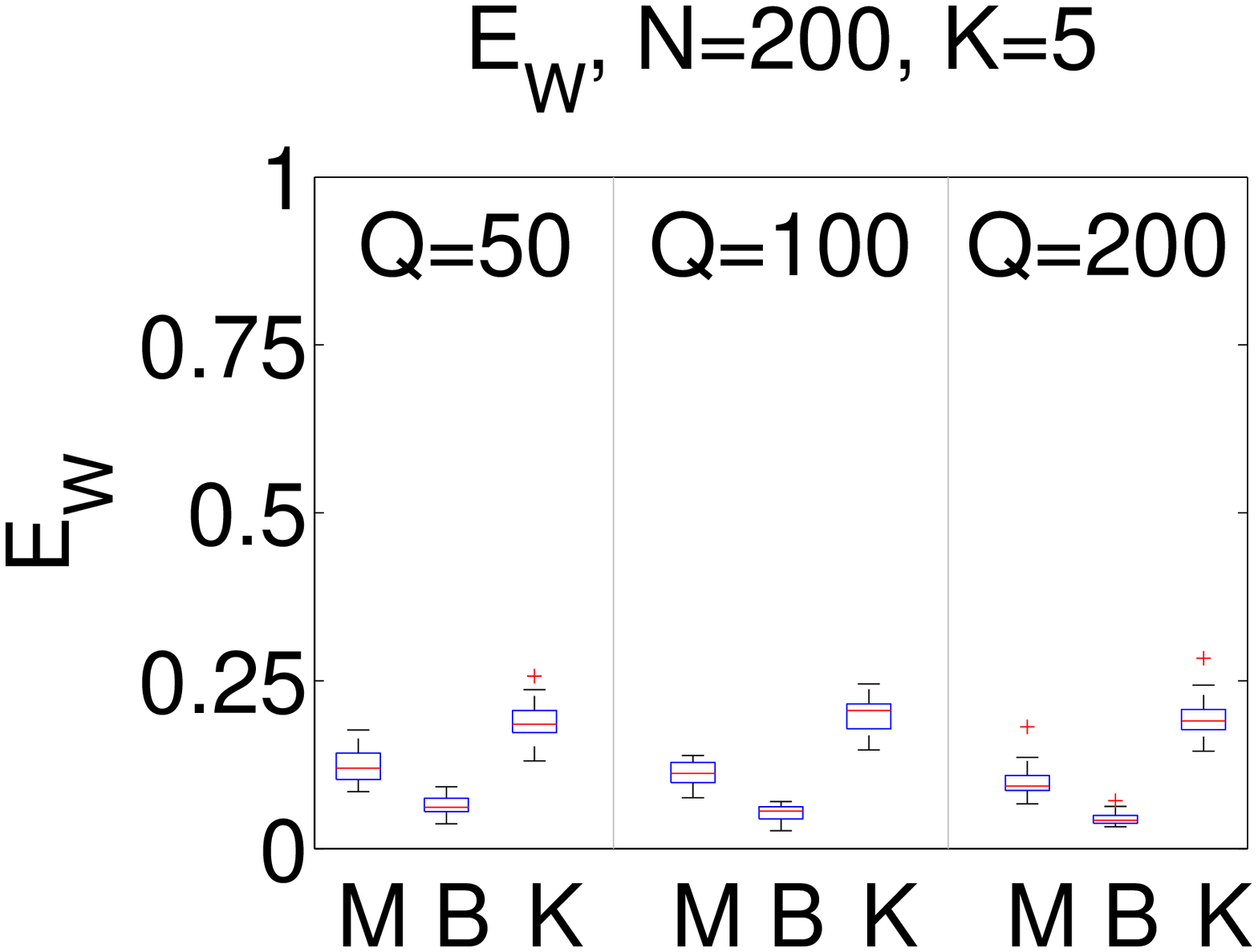}} \hspace{-0.1cm}
\subfigure{\includegraphics[width=0.245\textwidth]{./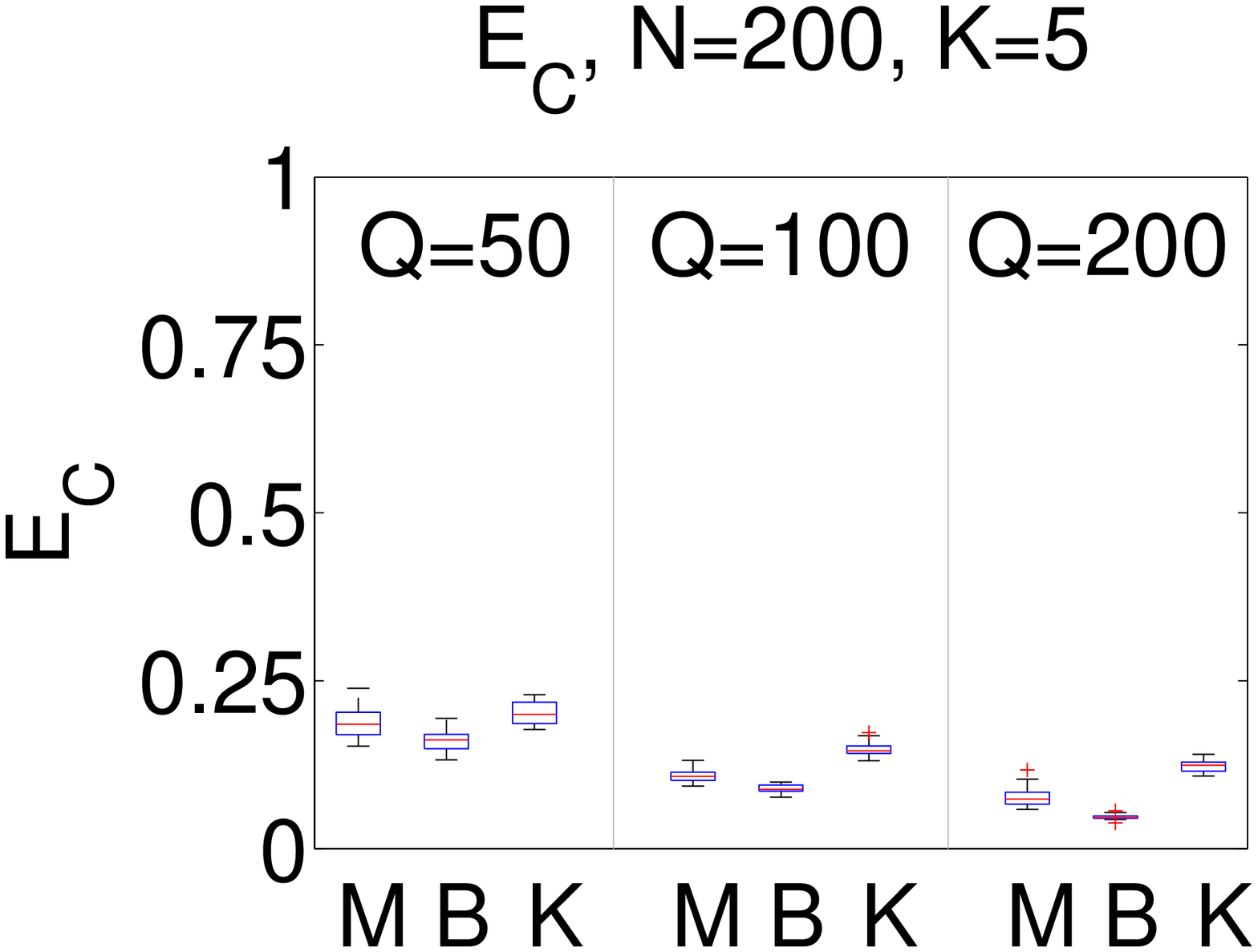}}\hspace{-0.1cm}
\subfigure{\includegraphics[width=0.245\textwidth]{./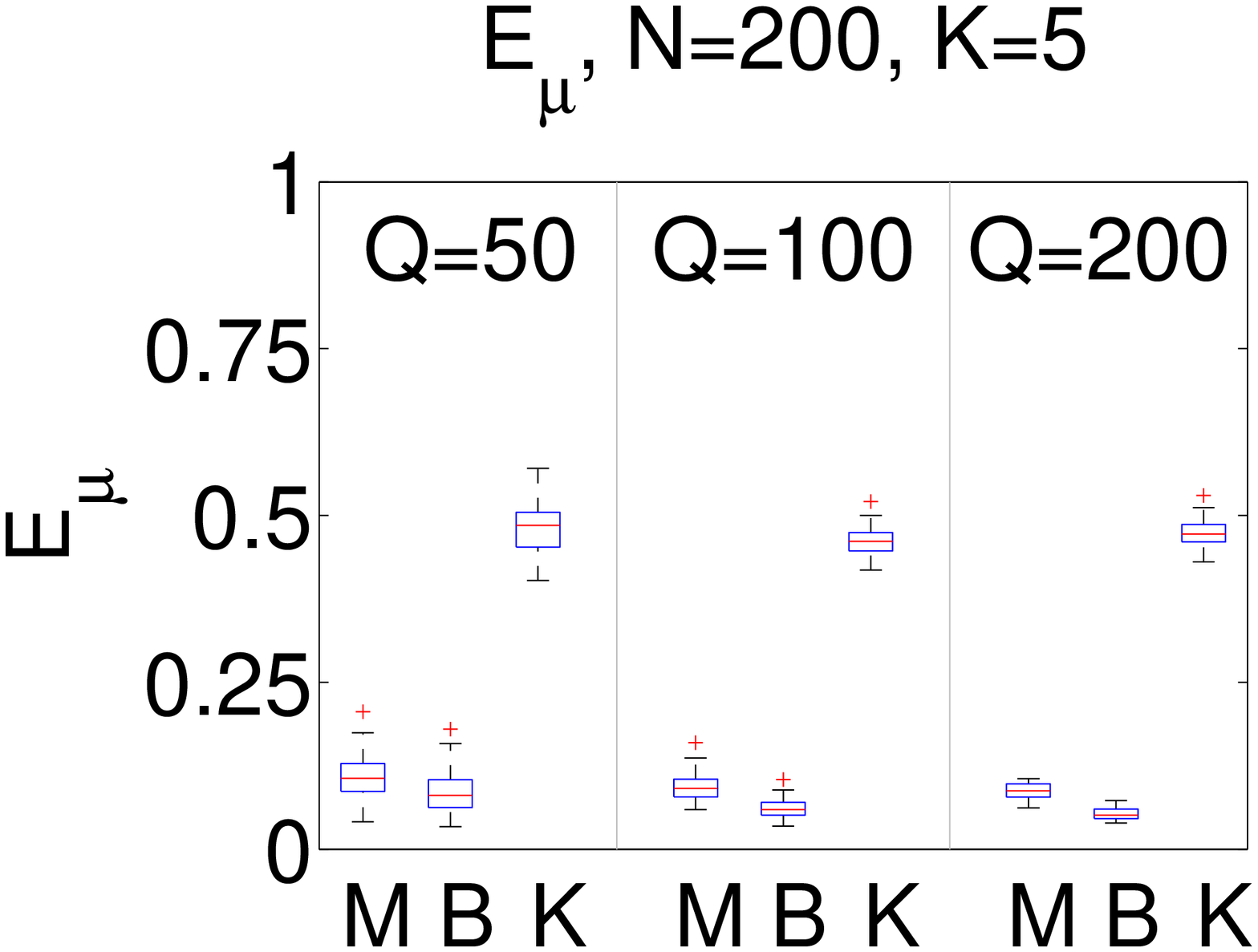}}
\hspace{-0.1cm}
\subfigure{\includegraphics[width=0.245\textwidth]{./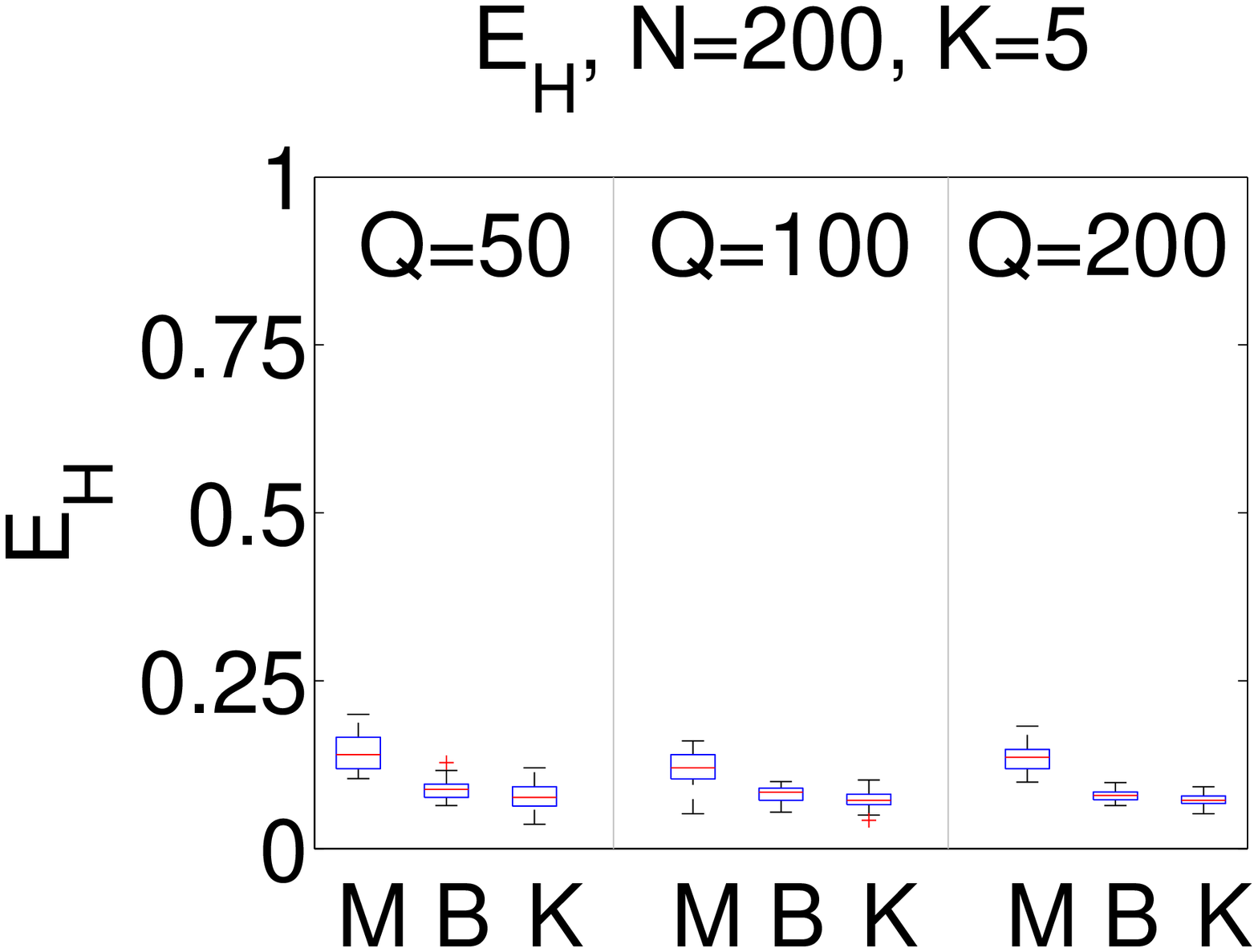}}
\addtocounter{subfigure}{-3}
\vspace{-0.0cm}
\subfigure{\includegraphics[width=0.245\textwidth]{./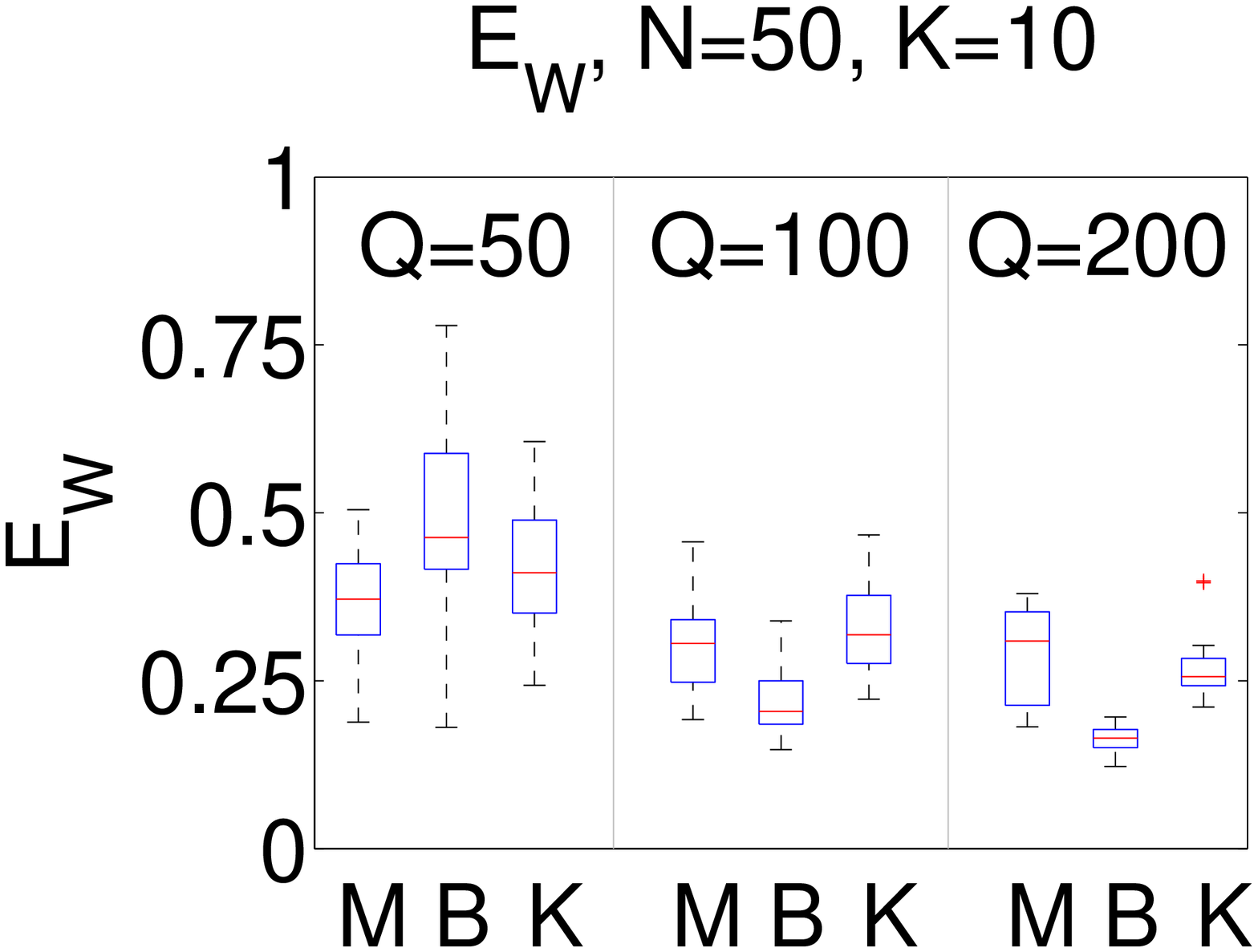}} \hspace{-0.1cm}
\subfigure{\includegraphics[width=0.245\textwidth]{./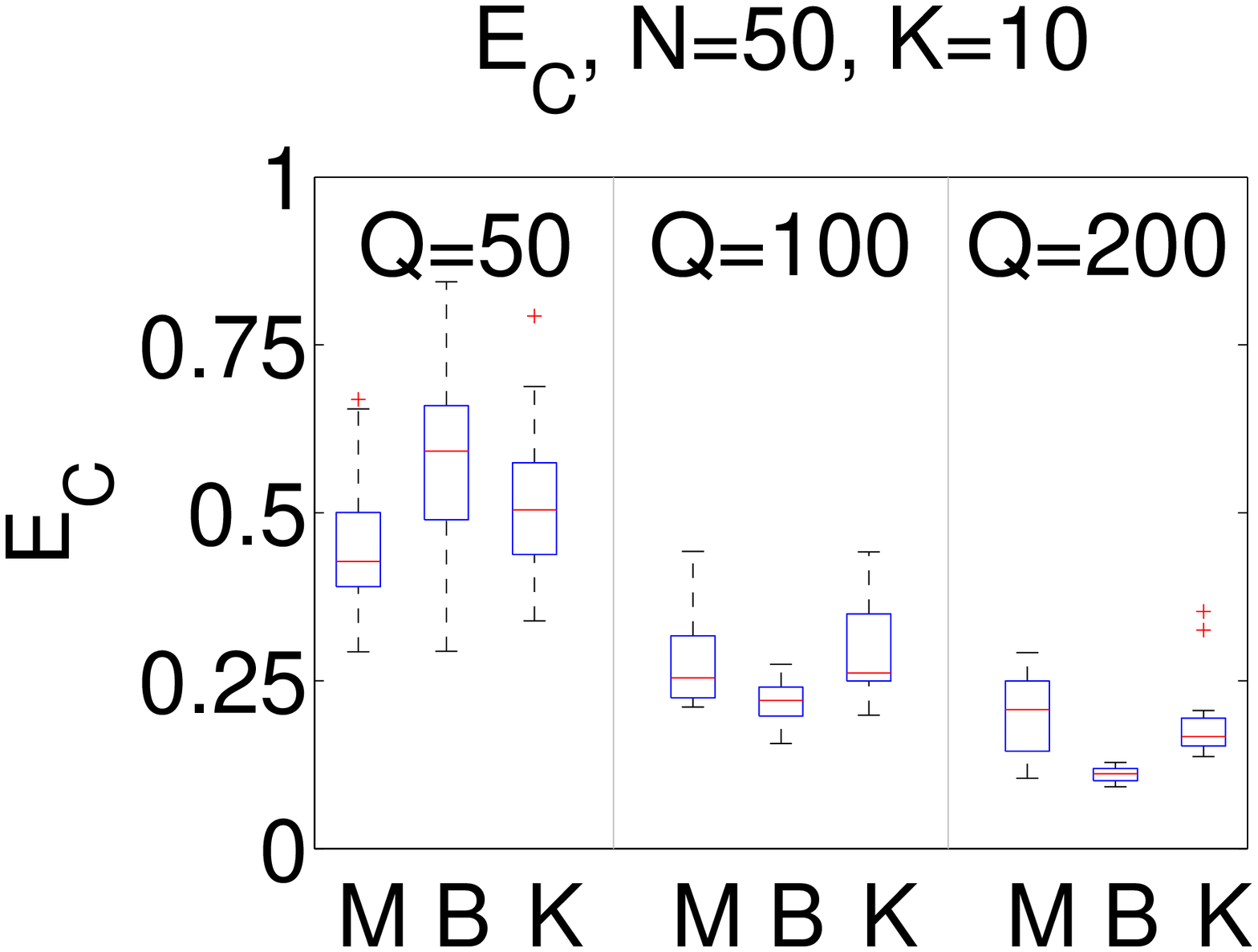}}\hspace{-0.1cm}
\subfigure{\includegraphics[width=0.245\textwidth]{./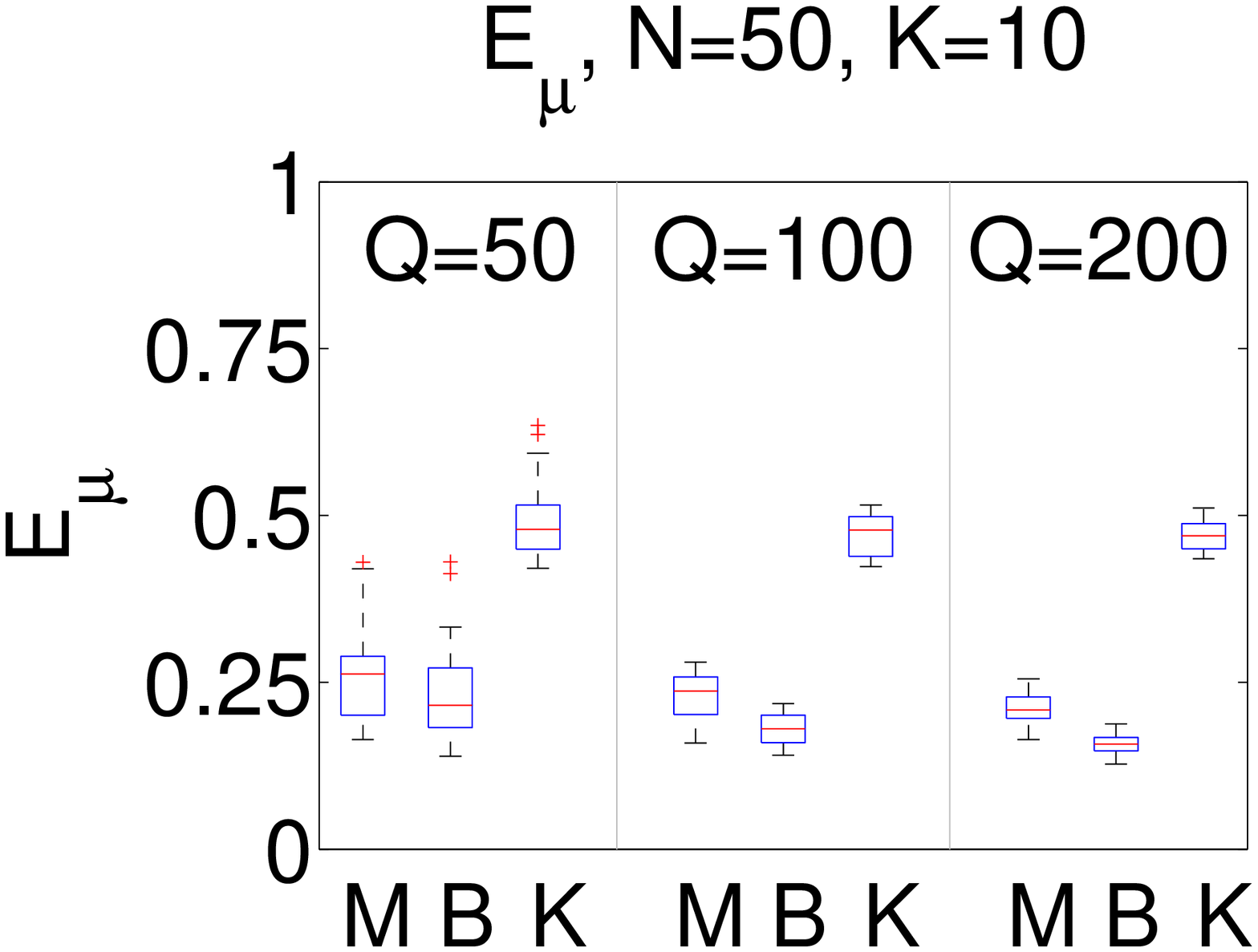}}
\hspace{-0.1cm}
\subfigure{\includegraphics[width=0.245\textwidth]{./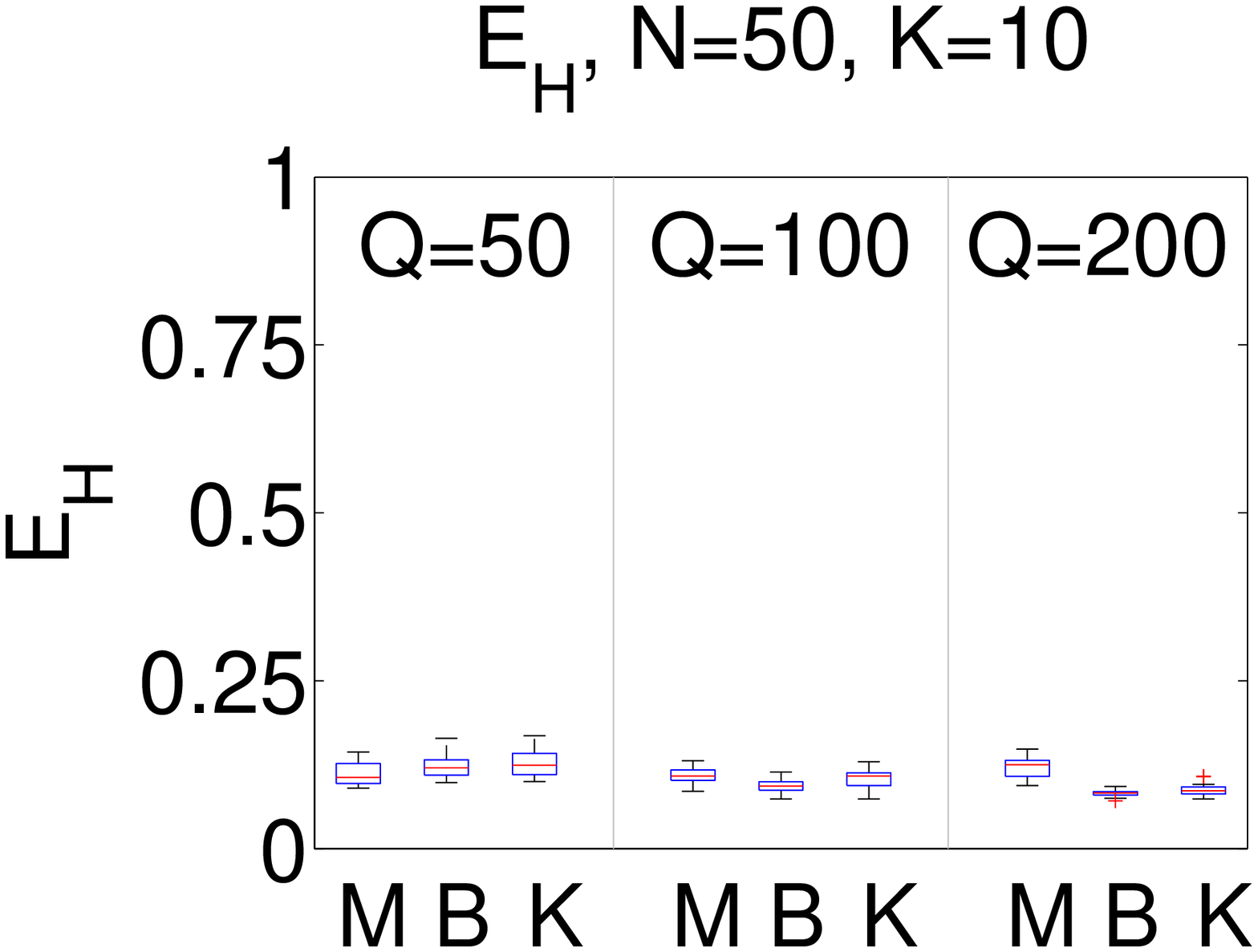}}
\addtocounter{subfigure}{-3}
\vspace{-0.0cm}
\subfigure{\includegraphics[width=0.245\textwidth]{./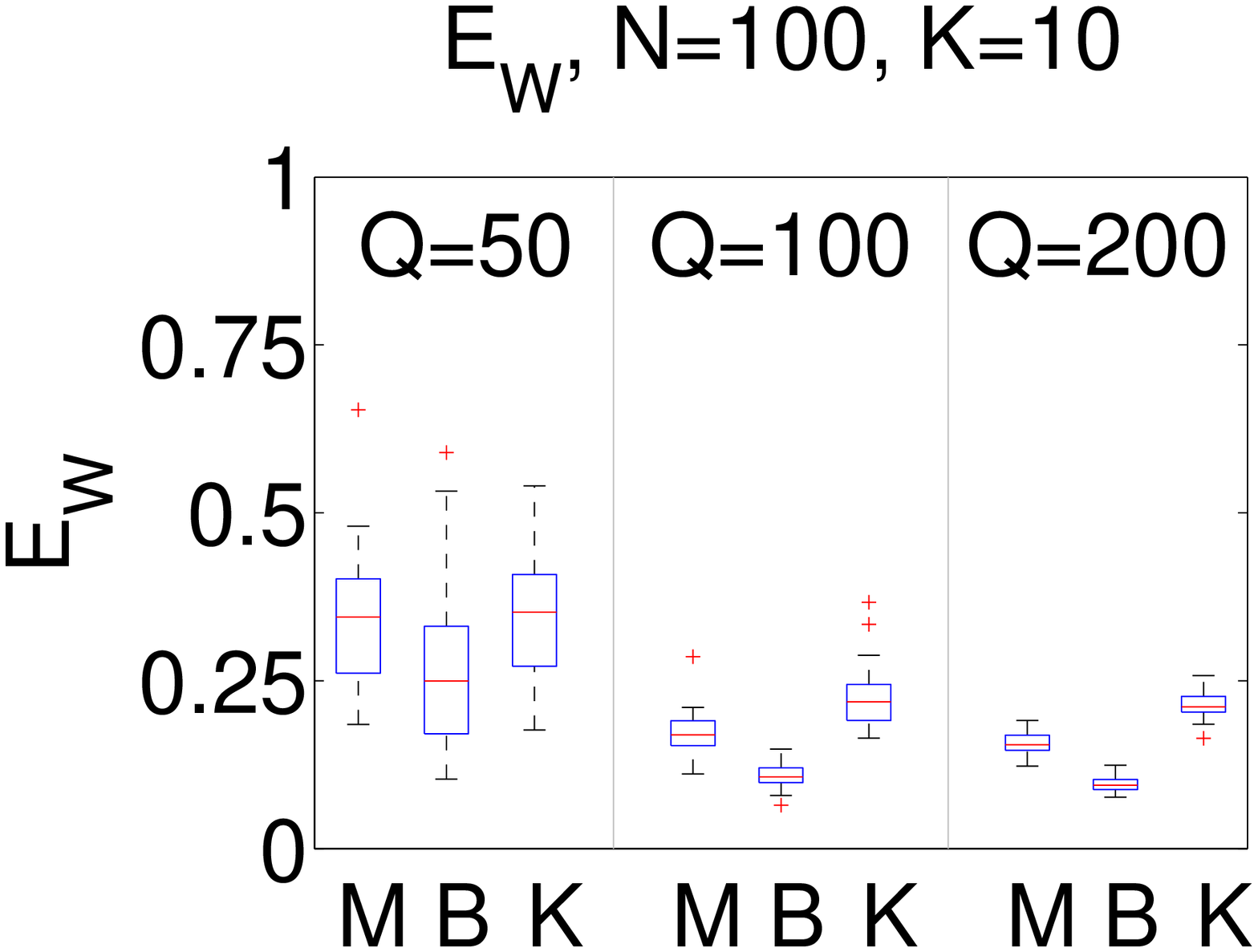}} \hspace{-0.1cm}
\subfigure{\includegraphics[width=0.245\textwidth]{./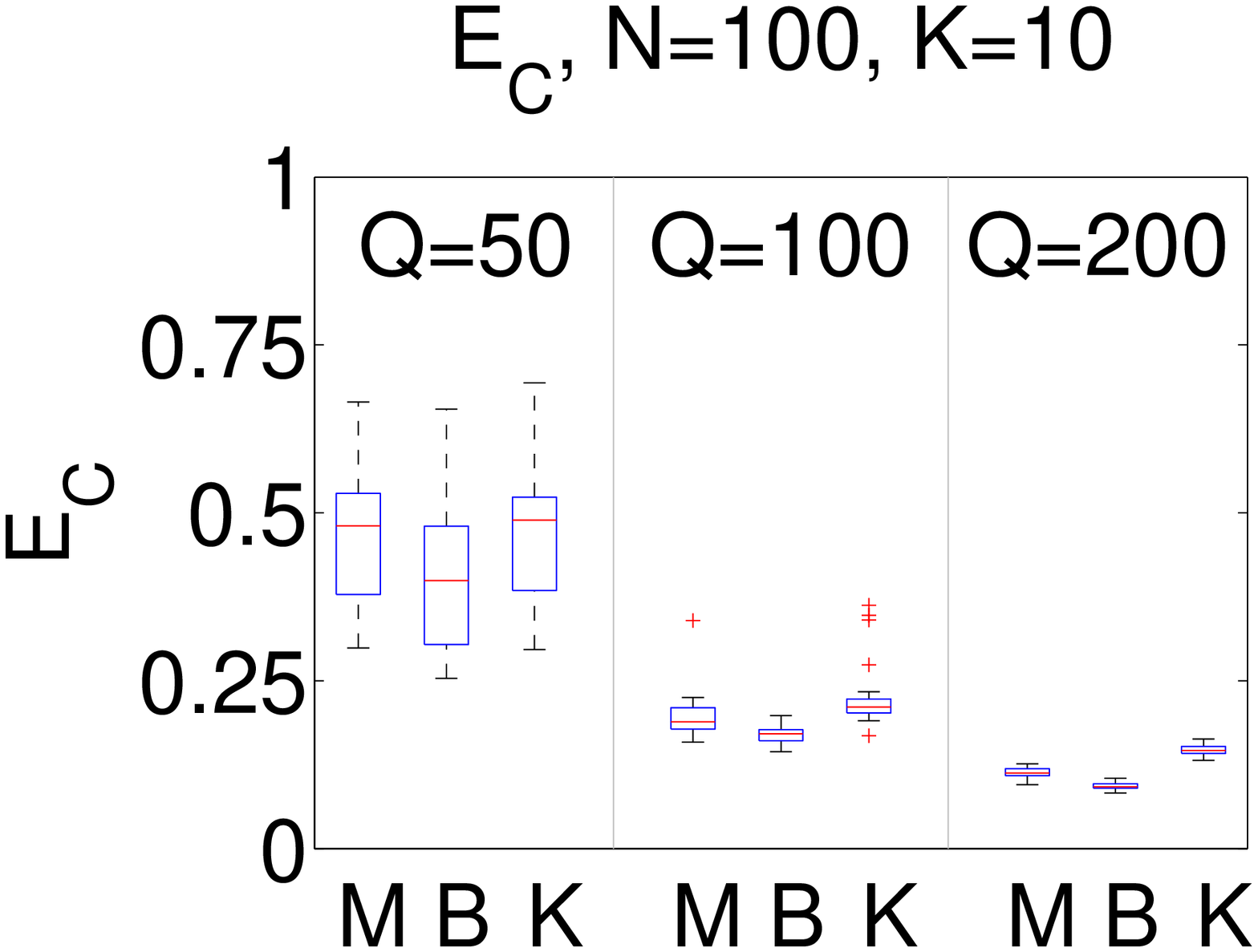}}\hspace{-0.1cm}
\subfigure{\includegraphics[width=0.245\textwidth]{./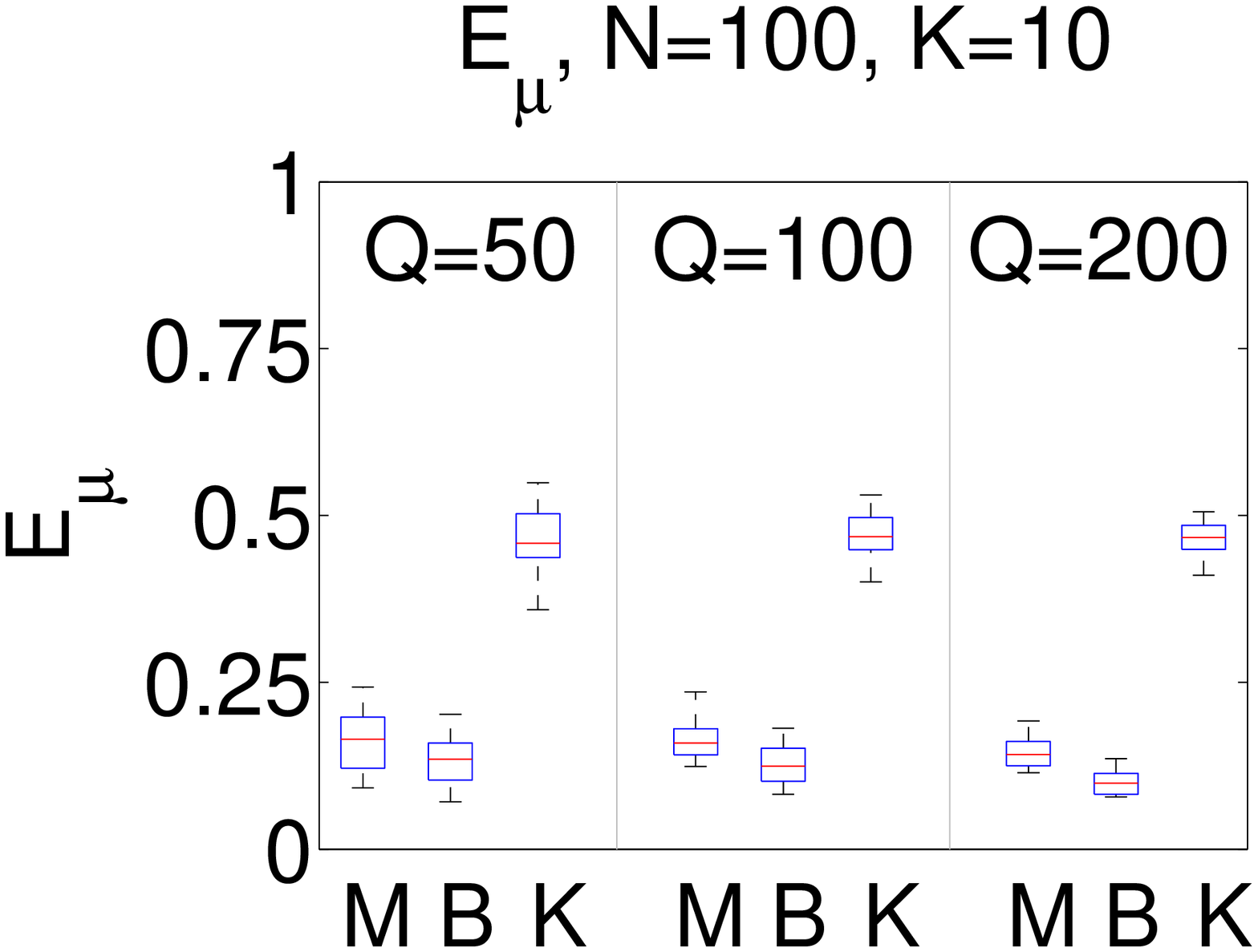}}
\hspace{-0.1cm}
\subfigure{\includegraphics[width=0.245\textwidth]{./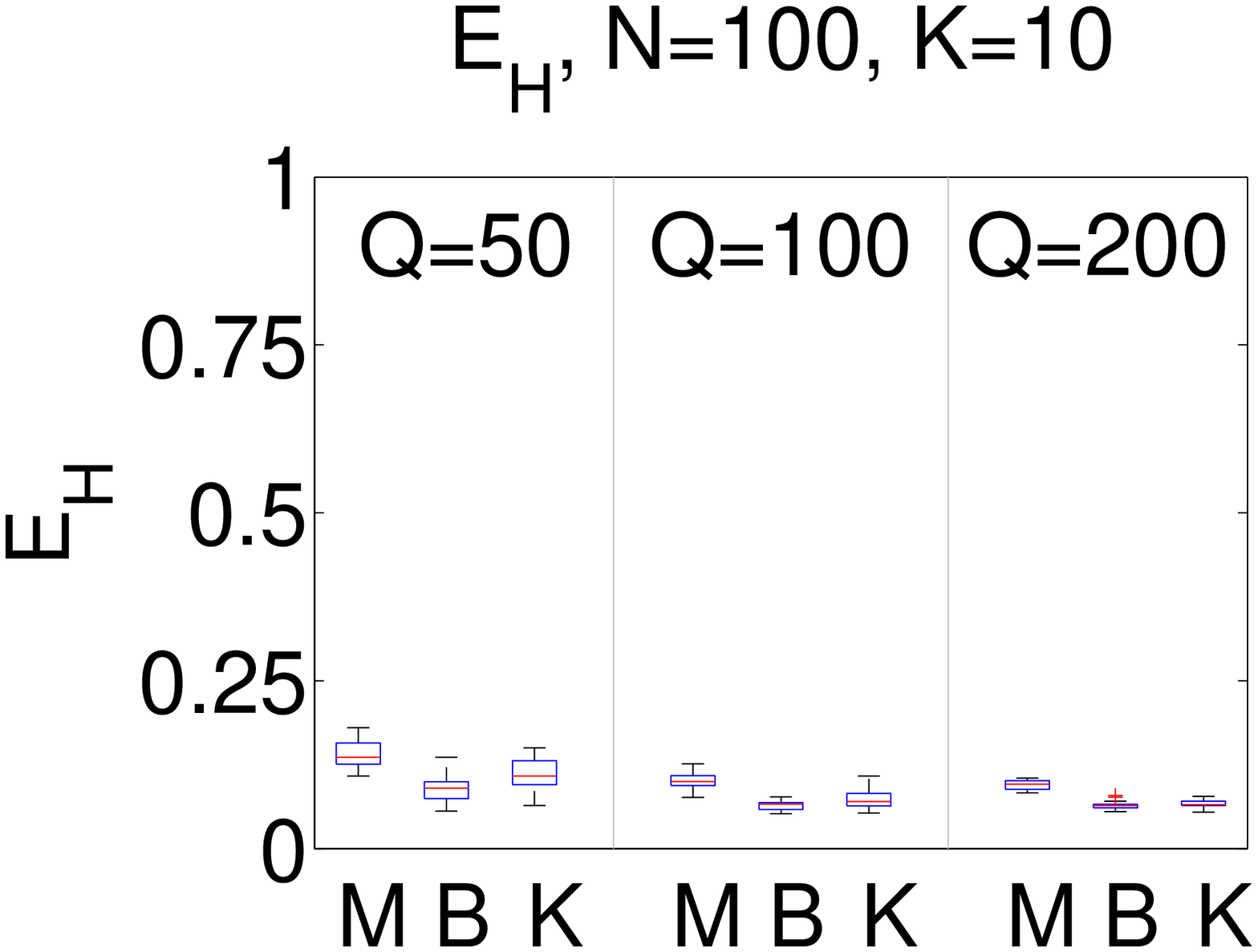}}
\addtocounter{subfigure}{-3}
\vspace{-0.0cm}
\subfigure{\includegraphics[width=0.245\textwidth]{./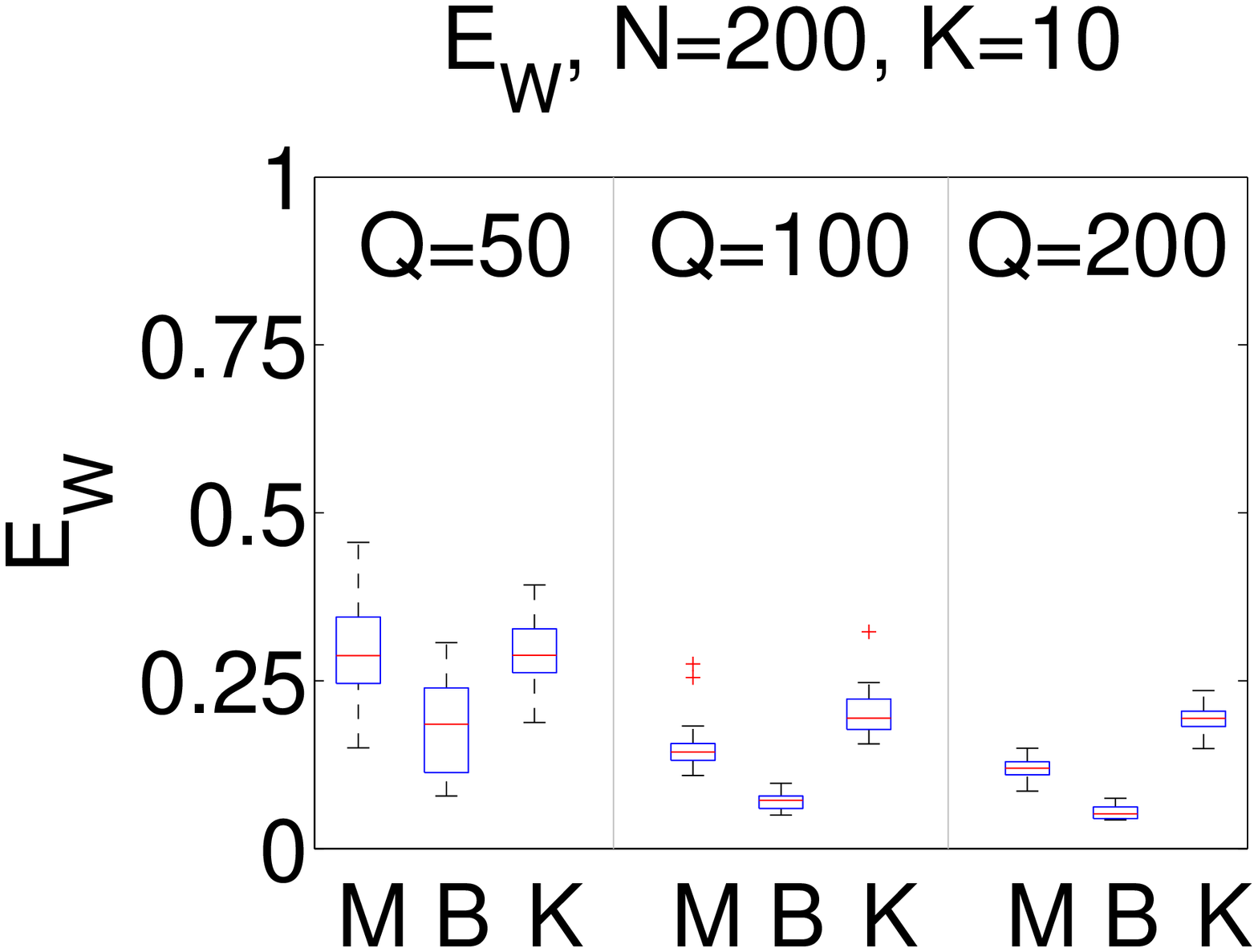}} \hspace{-0.1cm}
\subfigure{\includegraphics[width=0.245\textwidth]{./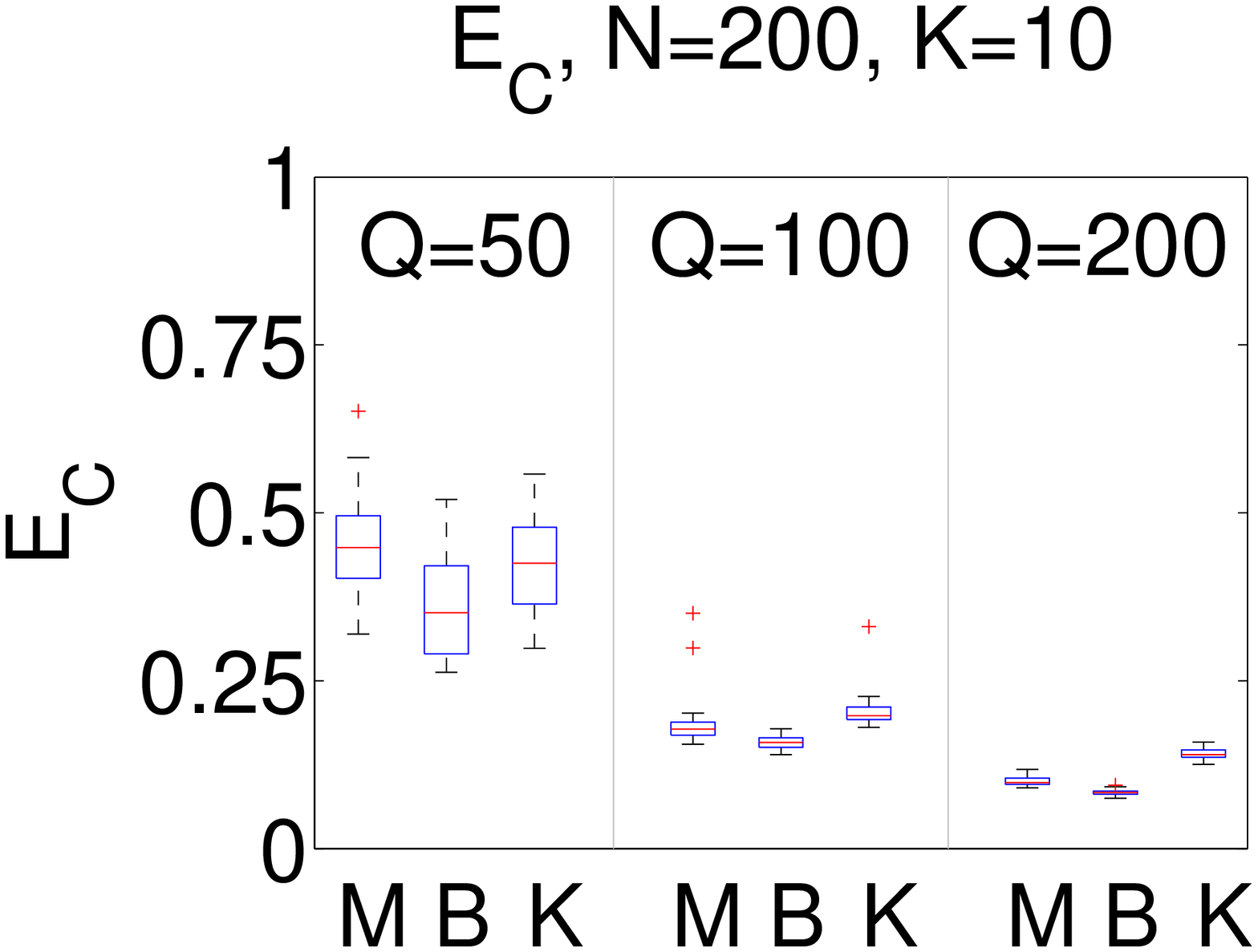}}\hspace{-0.1cm}
\subfigure{\includegraphics[width=0.245\textwidth]{./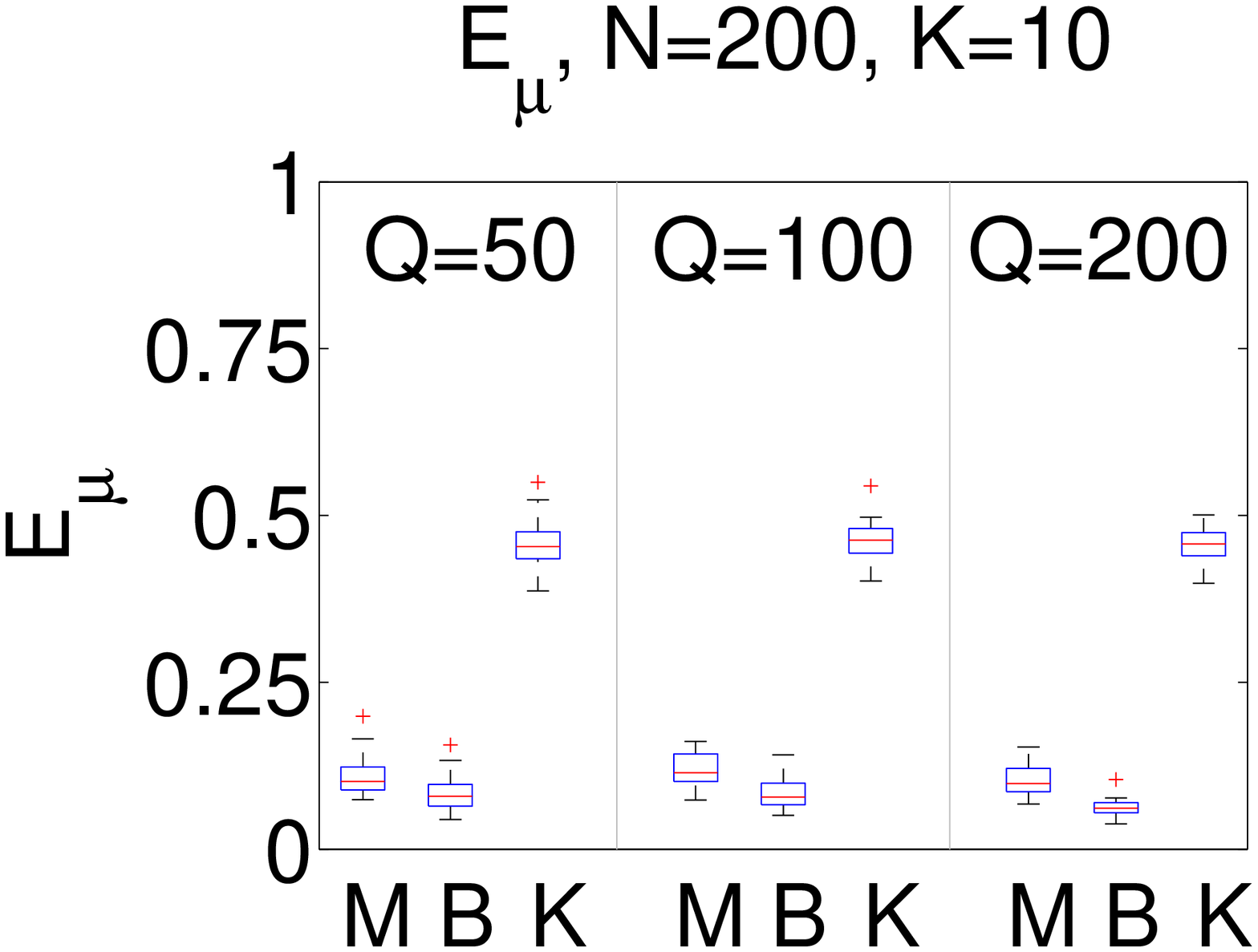}}
\hspace{-0.1cm}
\subfigure{\includegraphics[width=0.245\textwidth]{./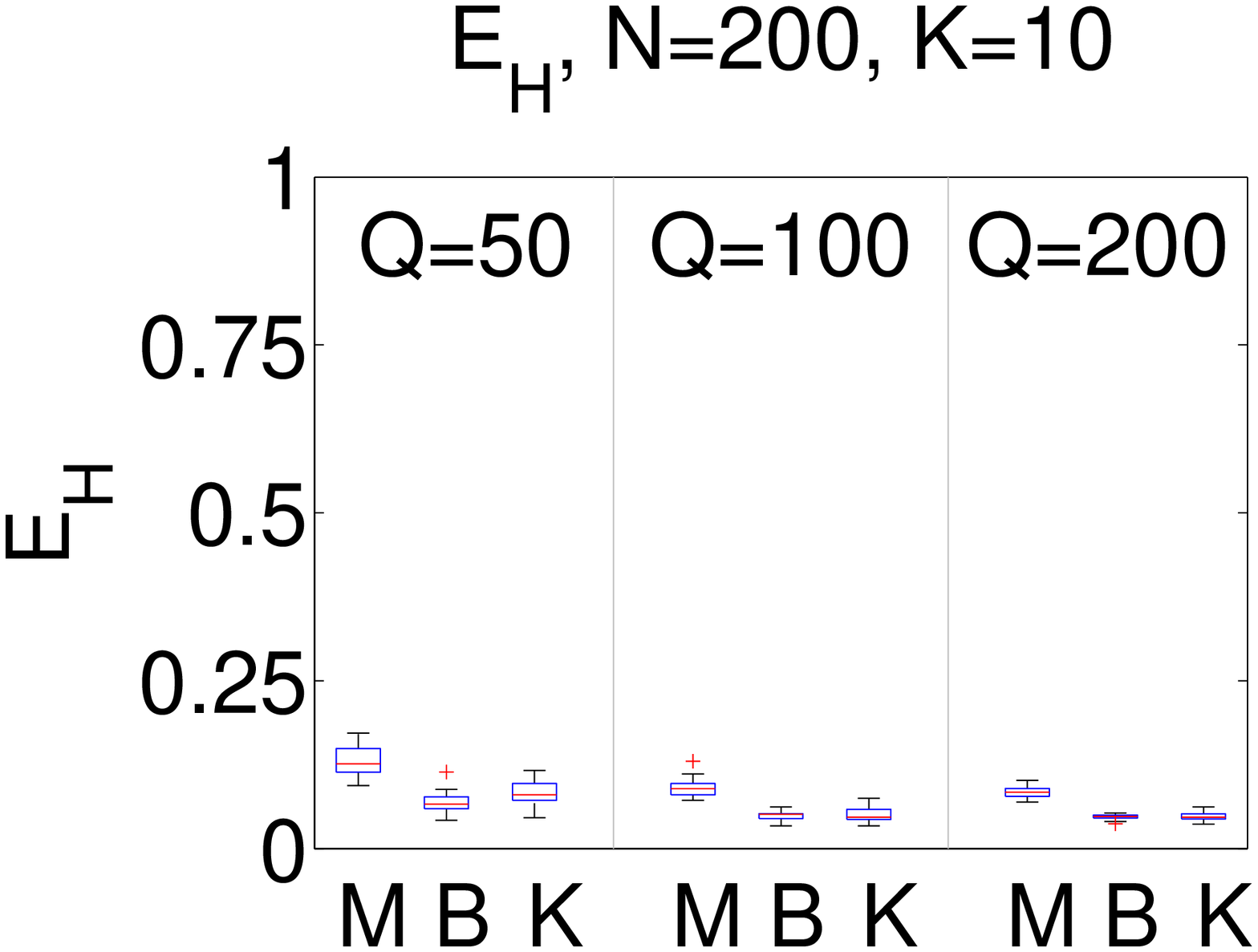}}
\vspace{-0.5cm}
  \caption{Performance comparison of SPARFA-M, SPARFA-B, and K-SVD$_+$ for different problem sizes $Q\times N$ and number of concepts $K$. The performance naturally improves as the problem size increases, while both SPARFA algorithms outperform K-SVD$_{+}$. \textsf{M} denotes SPARFA-M, \textsf{B} denotes SPARFA-B, and \textsf{K} denotes KSVD$_+$.}
\vspace{-1.0cm}
\label{fig:synth_size_bayes}
\end{figure}

\subsubsection{Impact of the Number of Incomplete Observations}
\label{sec:synthpunc}

In this experiment, we study the impact of the number of observations in $\bY$ on the performance of the probit version of SPARFA-M, SPARFA-B, and K-SVD$_+$. 

\paragraph{Experimental setup}

We set $N=Q=100$, $K=5$, and all other parameters as in \fref{sec:synthfull}.
We then vary the percentage $P_\text{obs}$ of entries in $\bY$ that are observed as $100\%$, $80\%$, $60\%$, $40\%$, and $20\%$. 
The locations of missing entries are generated i.i.d.\ and uniformly over the entire matrix.  
\paragraph{Results and discussion}

\fref{fig:synth_punc_bayes} shows that the estimation performance of all methods degrades gracefully as the percentage of missing observations increases. 
Again, SPARFA-B outperforms the other algorithms on $E_\bW$, $E_\bC$, and $E_{\boldsymbol{\mu}}$.  K-SVD$_+$ performs worse than both SPARFA algorithms except on $E_\bH$, where it achieves comparable performance. 
We conclude that SPARFA-M and SPARFA-B can both reliably estimate the underlying factors, even in cases of highly incomplete data. 

\begin{figure}[tp]
\centering

\subfigure{\includegraphics[width=0.245\textwidth]{./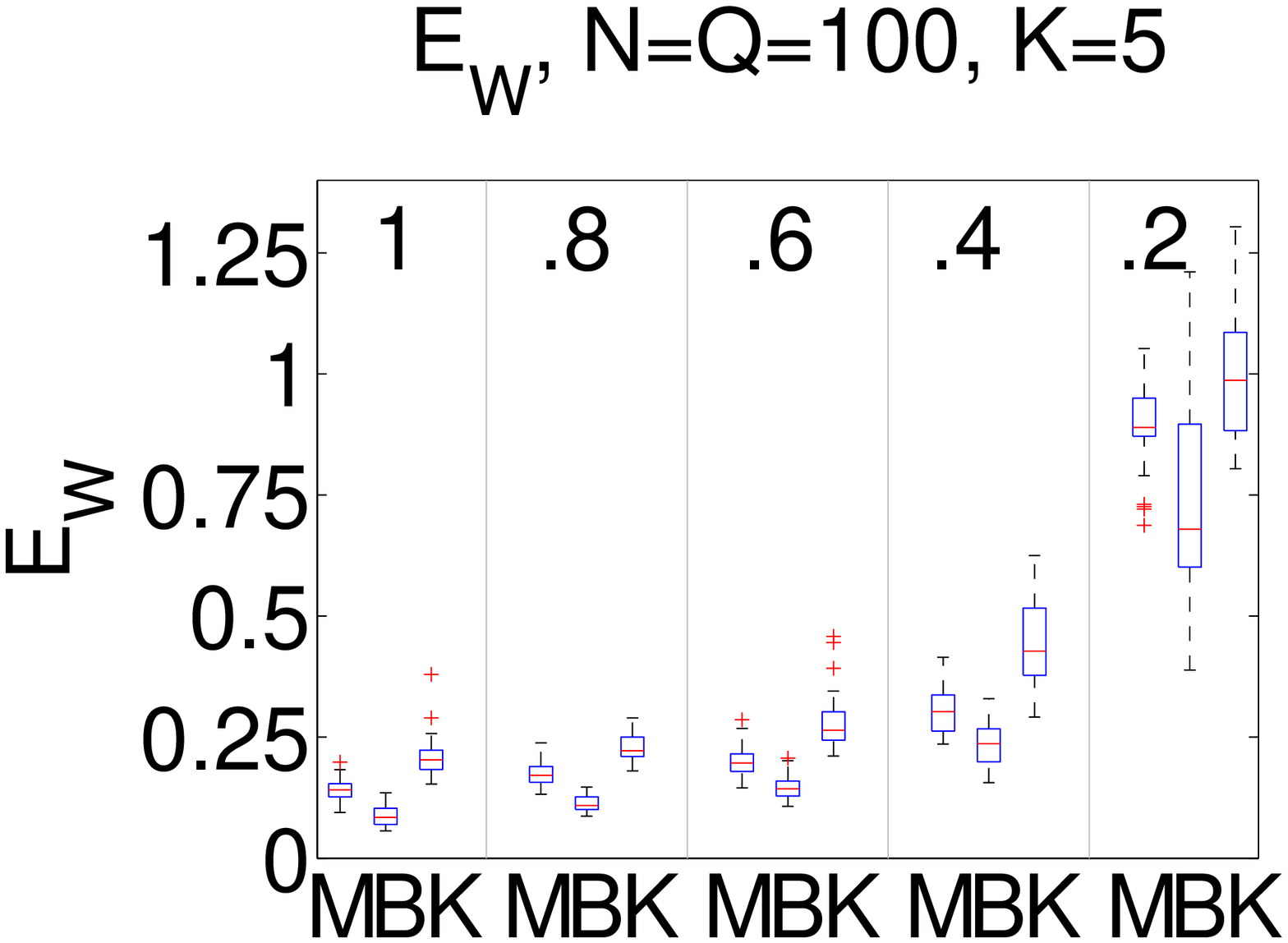}} \hspace{-0.1cm}
\subfigure{\includegraphics[width=0.245\textwidth]{./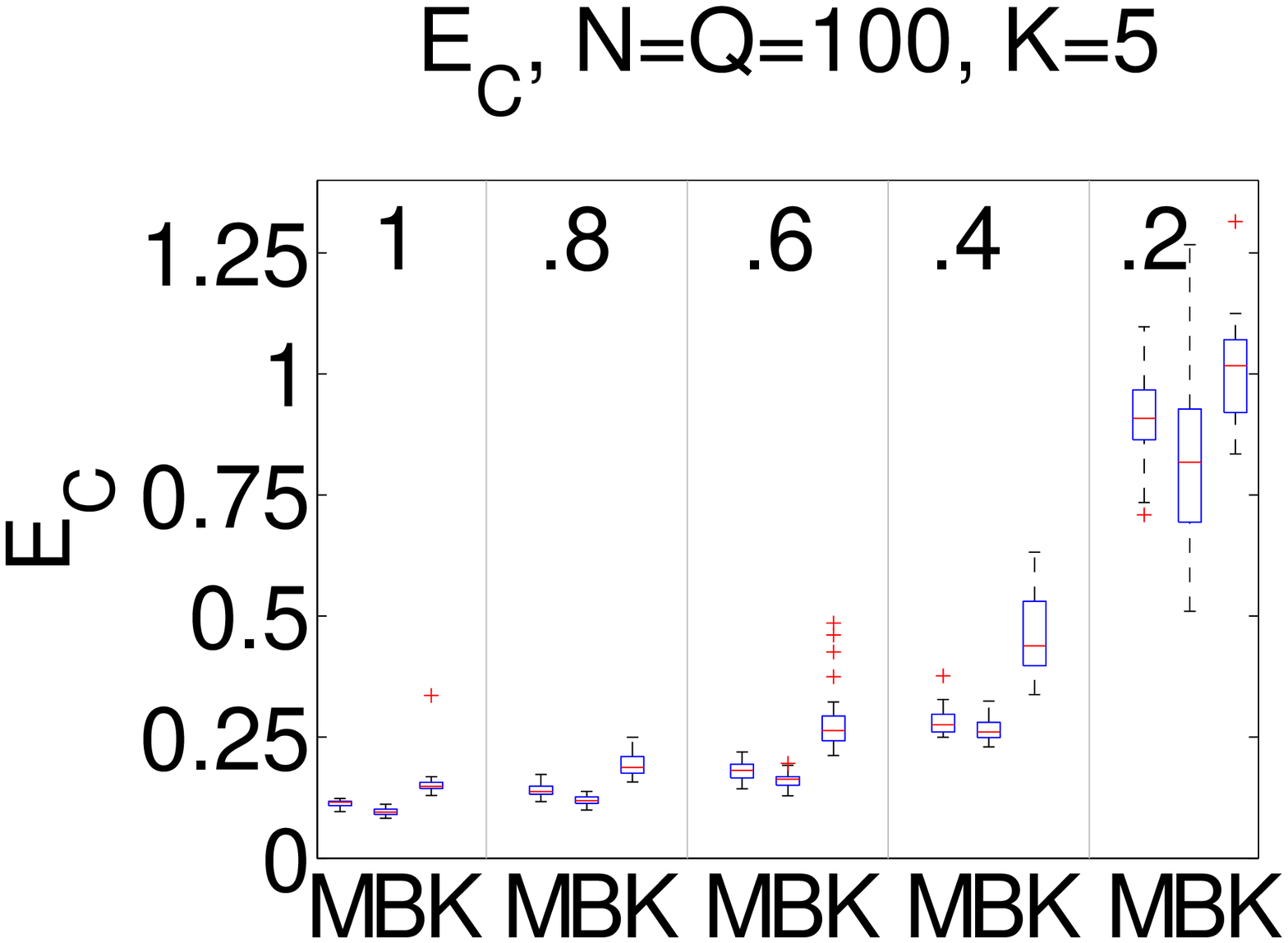}}\hspace{-0.1cm}
\subfigure{\includegraphics[width=0.245\textwidth]{./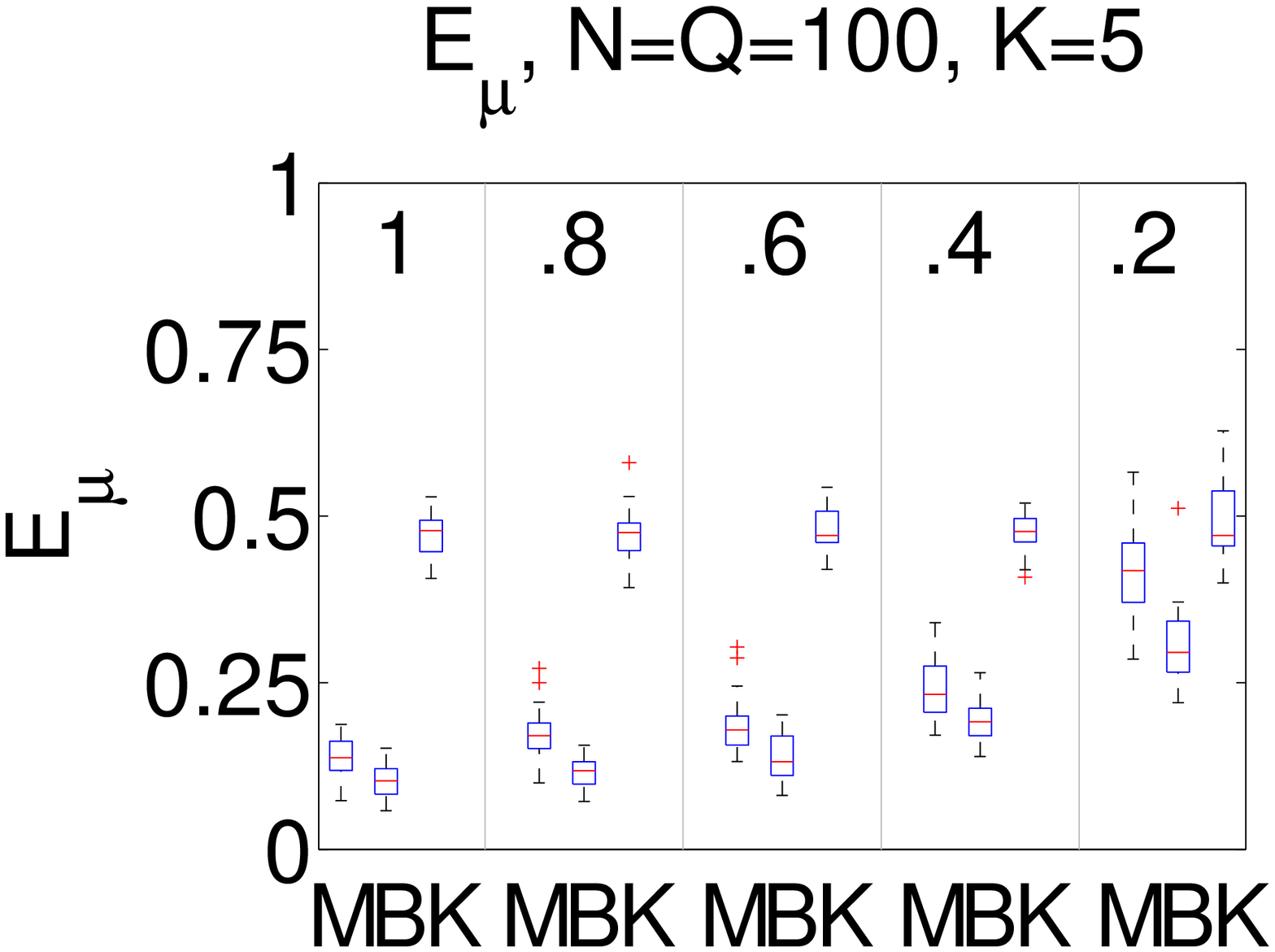}}
\hspace{-0.1cm}
\subfigure{\includegraphics[width=0.245\textwidth]{./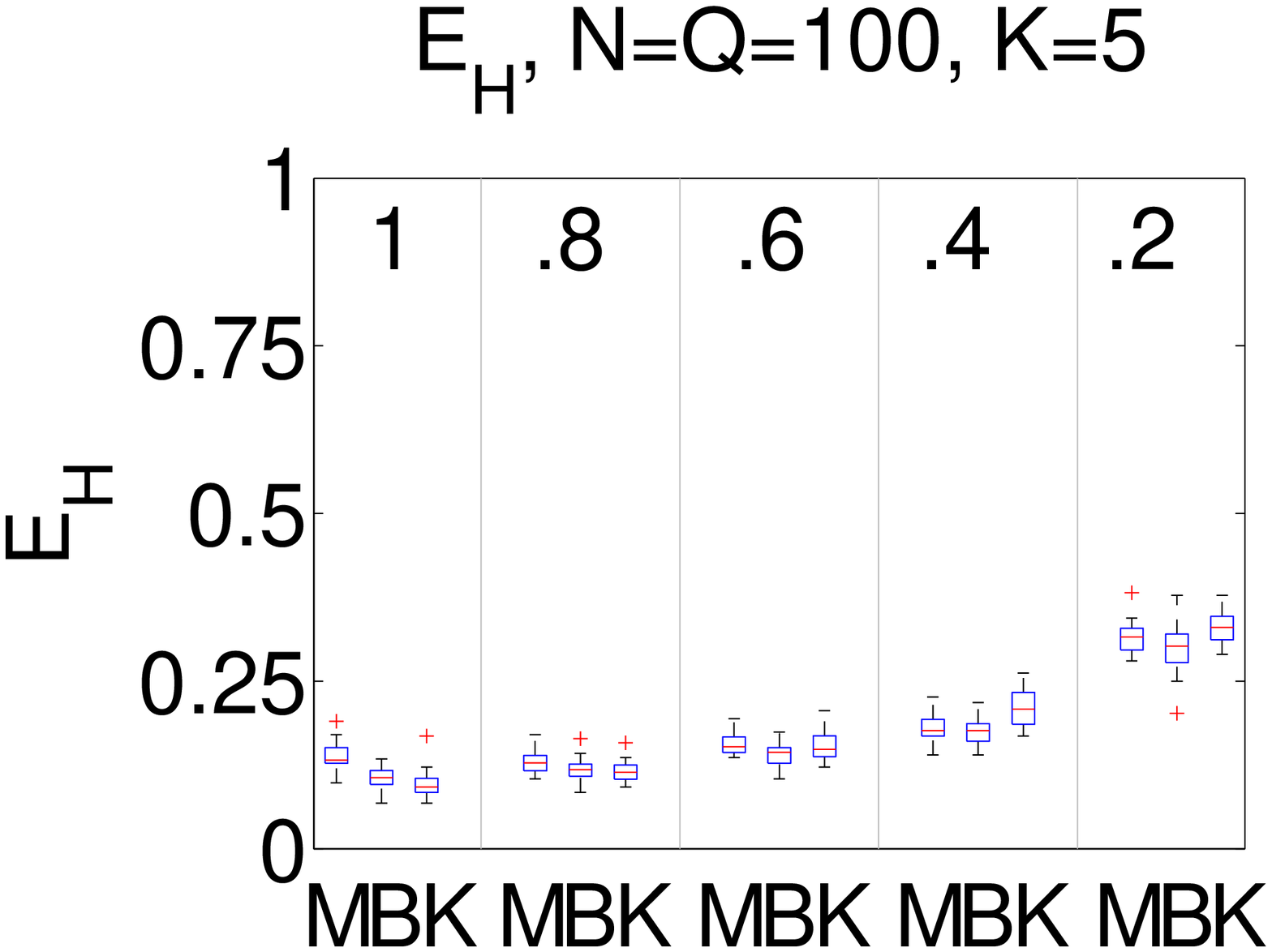}}
\addtocounter{subfigure}{-3}
\vspace{-0.6cm}
  \caption{Performance comparison of SPARFA-M, SPARFA-B, and K-SVD$_+$ for different percentages of observed entries in $\mathbf{Y}$. The performance degrades gracefully as the number of observations decreases, while the SPARFA algorithms outperform K-SVD$_{+}$.}
\vspace{-0.4cm}
\label{fig:synth_punc_bayes}
\end{figure}

\subsubsection{Impact of Sparsity Level}
\label{sec:synthspar}

In this experiment, we study the impact of the sparsity level in $\bW$ on the performance of the probit version of SPARFA-M, SPARFA-B, and K-SVD$_+$. 

\paragraph{Experimental setup}

We choose the active entries of $\bW$ i.i.d.\ $\textit{Ber}(q)$ and vary $q \in \{0.2, 0.4, 0.6, 0.8\}$ to control the number of non-zero entries in each row of $\bW$. All other parameters are set as in \fref{sec:synthfull}.
This data-generation method allows for scenarios in which some rows of $\bW$ contain no active entries as well as all active entries.
We set the hyperparameters for SPARFA-B to $h=K+1=6$, $v_\mu = 1$, and $e=1$, and $f=1.5$. For $q=0.2$ we set $\alpha = 2$ and $\beta = 5$. For $q=0.8$ we set $\alpha = 5$ and $\beta = 2$. For all other cases, we set $\alpha=\beta=2$.   

\paragraph{Results and discussion}

\fref{fig:synth_sparse_bayes} shows that sparser $\bW$ lead to lower estimation errors. This demonstrates that the SPARFA algorithms are well-suited to applications where the underlying factors have a high level of sparsity. 
SPARFA-B outperforms SPARFA-M across all metrics. 
The performance of K-SVD$_+$ is worse than both SPARFA algorithms except on the support estimation error $E_\bH$, which is due to the fact that K-SVD$_+$ is aware of the oracle sparsity level. 

\begin{figure*}[t]
\centering

\subfigure{\includegraphics[width=0.245\textwidth]{./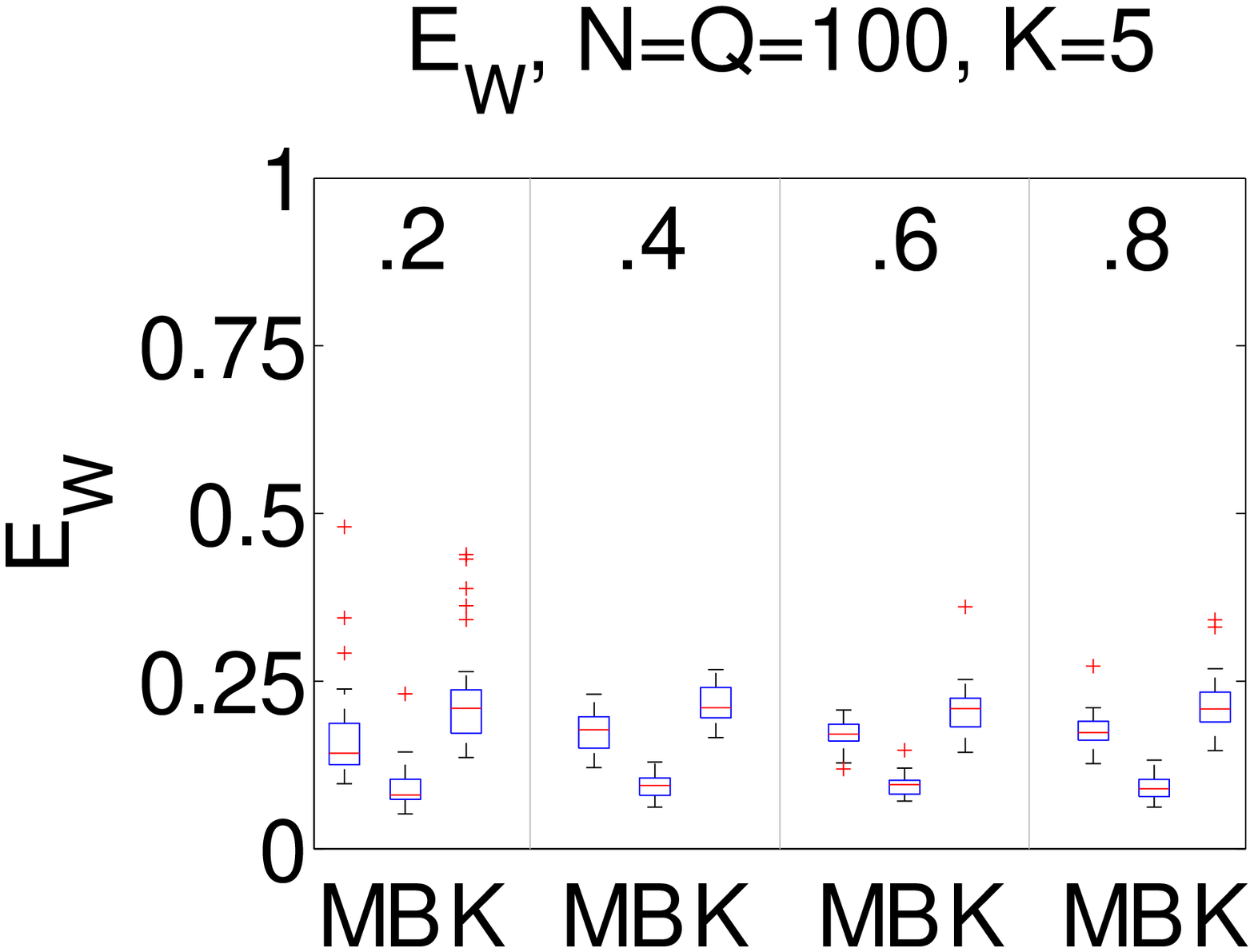}} \hspace{-0.1cm}
\subfigure{\includegraphics[width=0.245\textwidth]{./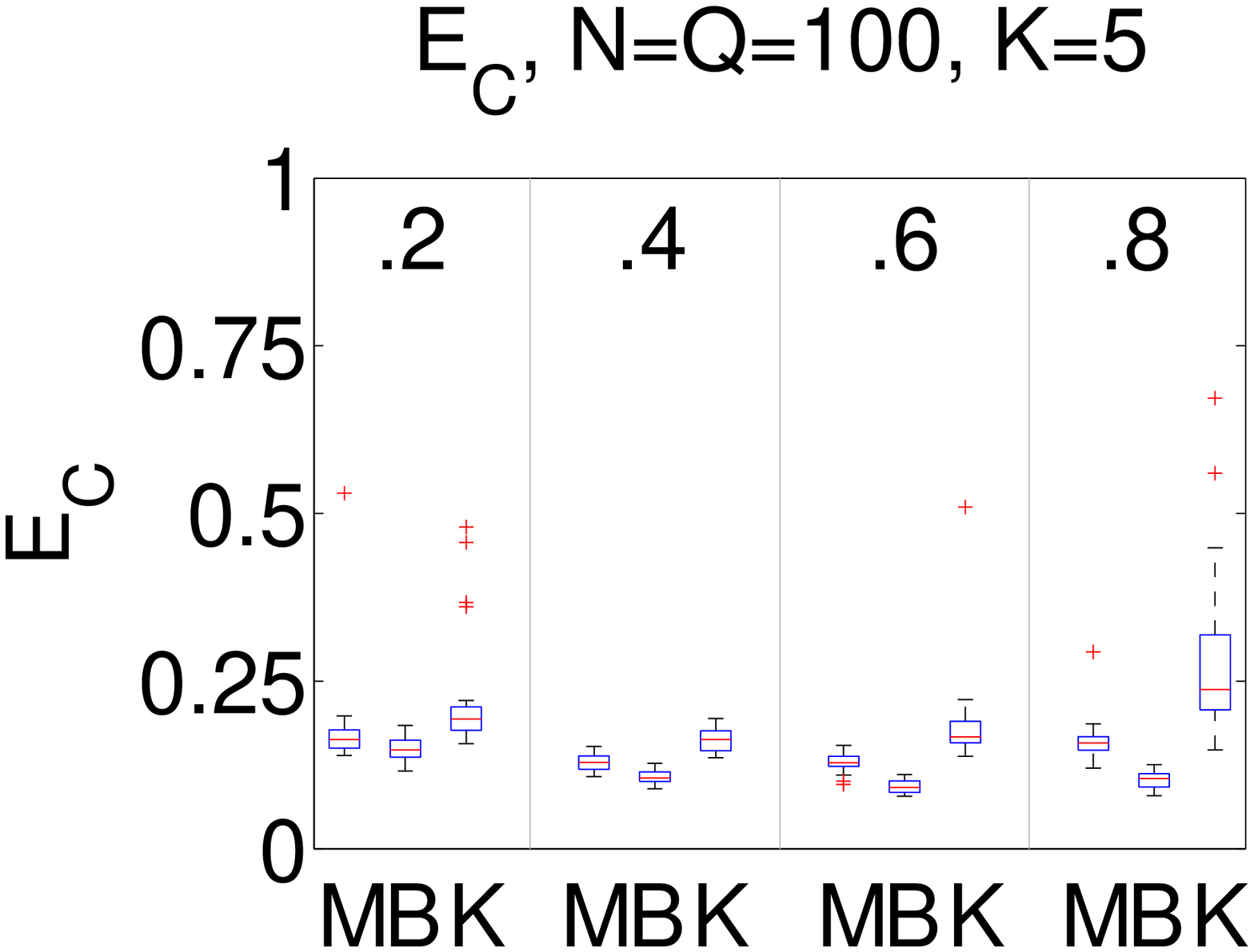}}\hspace{-0.1cm}
\subfigure{\includegraphics[width=0.245\textwidth]{./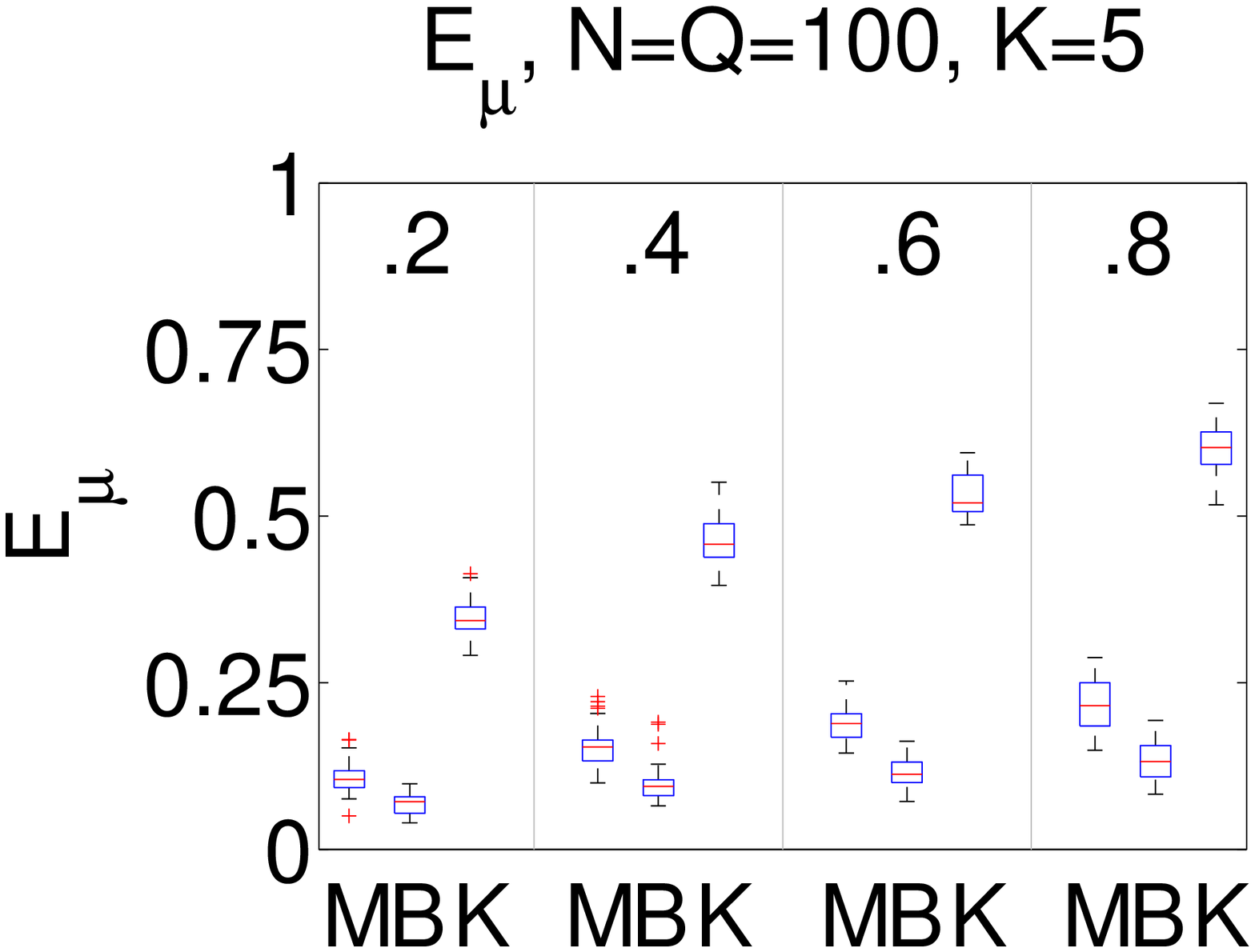}}
\hspace{-0.1cm}
\subfigure{\includegraphics[width=0.245\textwidth]{./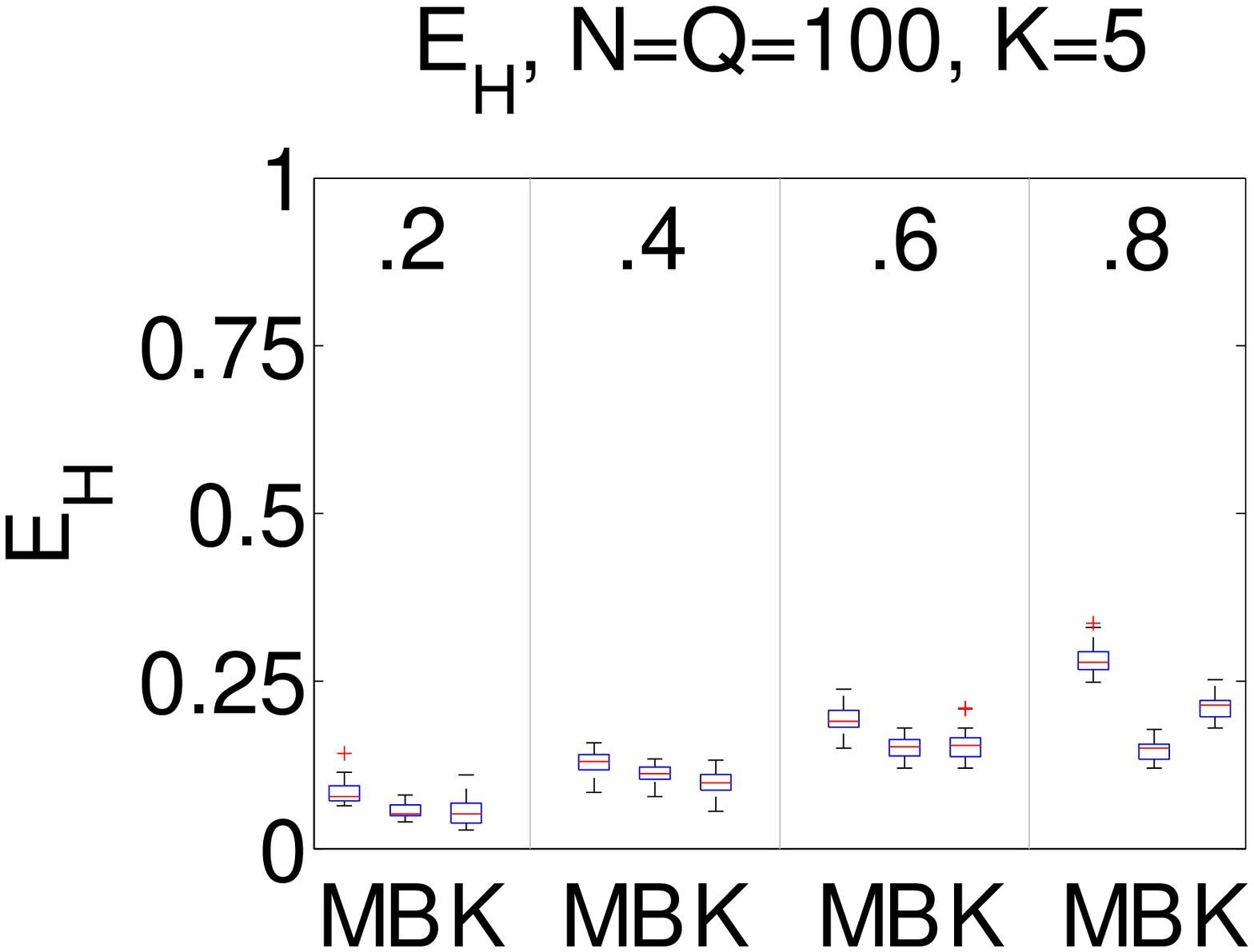}}
\addtocounter{subfigure}{-3}
\vspace{-0.6cm}
  \caption{Performance comparison of SPARFA-M, SPARFA-B, and K-SVD$_+$ for different sparsity levels in the rows in $\mathbf{W}$.  The performance degrades gracefully as the sparsity level increases, while the SPARFA algorithms outperform K-SVD$_{+}$.}
\vspace{-0.4cm}
\label{fig:synth_sparse_bayes}
\end{figure*}

\subsubsection{Impact of Model Mismatch}
\label{sec:synthrob}

In this experiment, we examine the impact of model mismatch by using a link function for estimation that does not match the true link function from which the data is generated.

\paragraph{Experimental setup}

We fix $N=Q=100$ and $K=5$, and set all other parameters as in \fref{sec:synthfull}. Then, for each generated instance of $\bW$, $\bC$ and $\boldsymbol{\mu}$, we generate $\bY_\text{pro}$ and $\bY_\text{log}$ according to both the inverse probit link and the inverse logit link, respectively.  We then run SPARFA-M (both the probit and logit variants), SPARFA-B (which uses only the probit link function), and K-SVD$_+$ on both $\bY_\text{pro}$ and $\bY_\text{log}$.

\paragraph{Results and discussion}

\fref{fig:synth_rob_bayes} shows that model mismatch does not severely affect $E_\bW$, $E_\bC$, and  $E_\bH$ for both SPARFA-M and SPARFA-B. 
However, due to the difference in the functional forms between the probit and logit link functions, model mismatch does lead to an increase in $E_{\boldsymbol{\mu}}$ for both SPARFA algorithms.  We also see that K-SVD$_+$ performs worse than both SPARFA methods, since it ignores the link function.

\begin{figure*}[t]
\centering

\subfigure{\includegraphics[width=0.245\textwidth]{./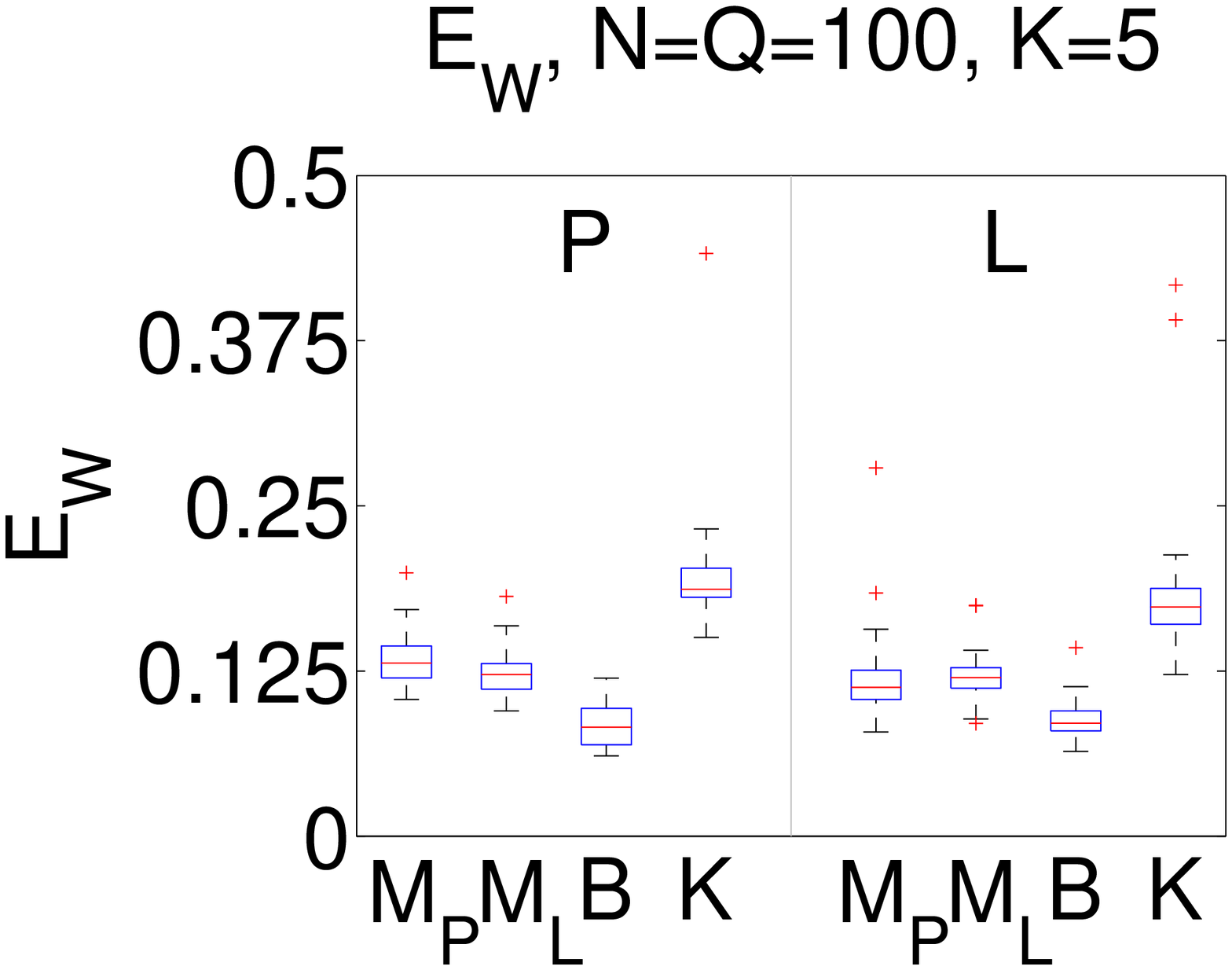}} \hspace{-0.1cm}
\subfigure{\includegraphics[width=0.245\textwidth]{./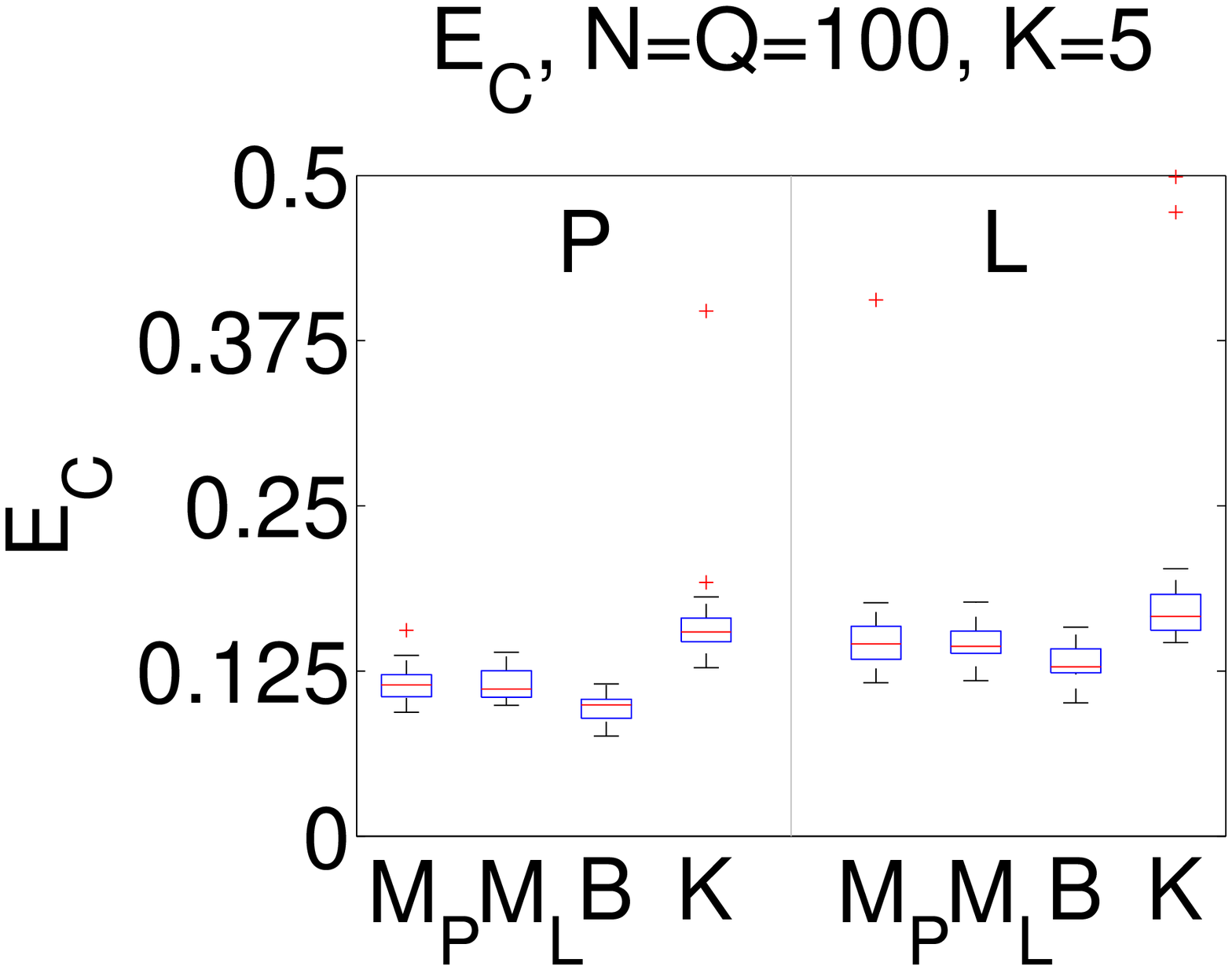}}\hspace{-0.1cm}
\subfigure{\includegraphics[width=0.245\textwidth]{./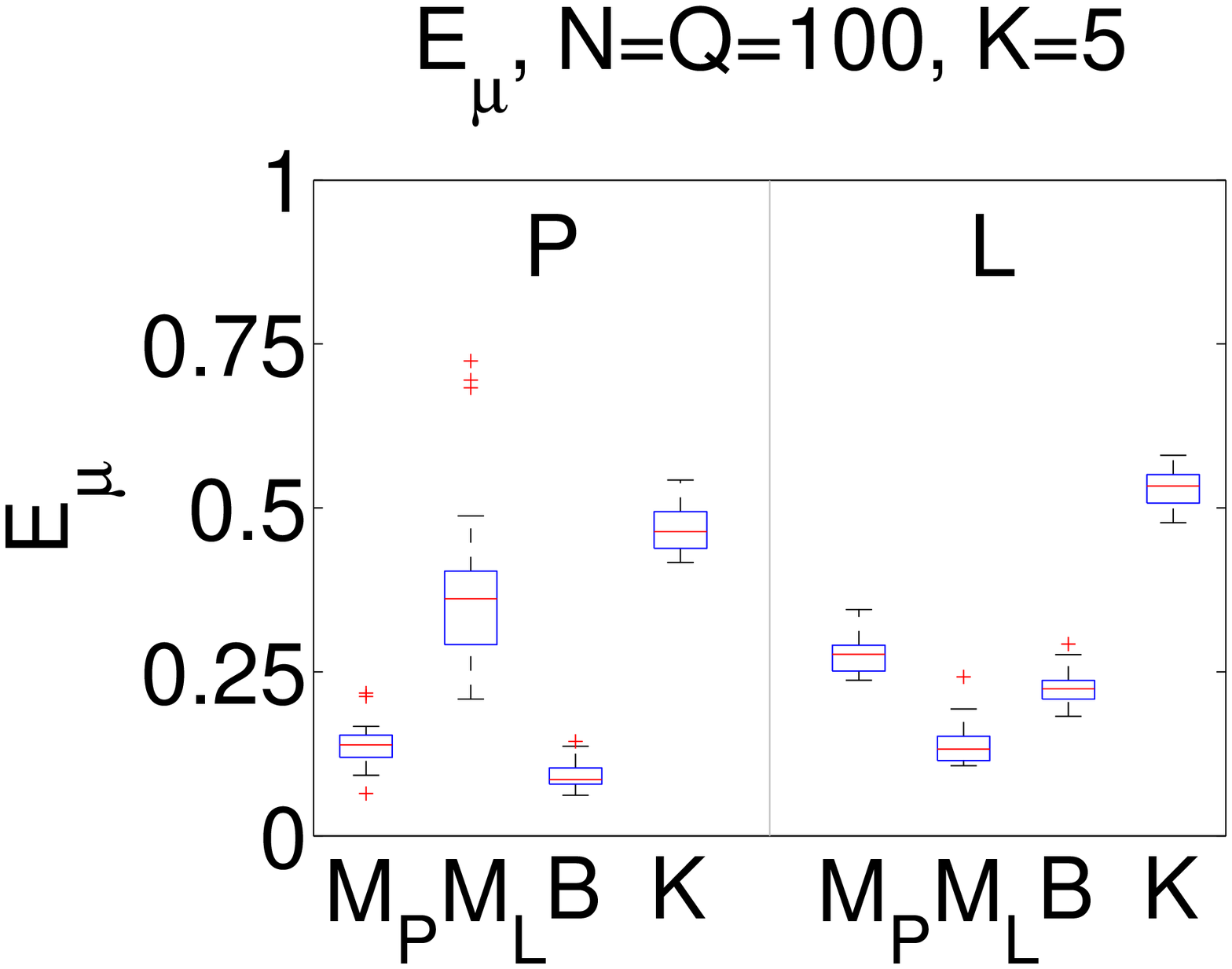}}
\hspace{-0.1cm}
\subfigure{\includegraphics[width=0.245\textwidth]{./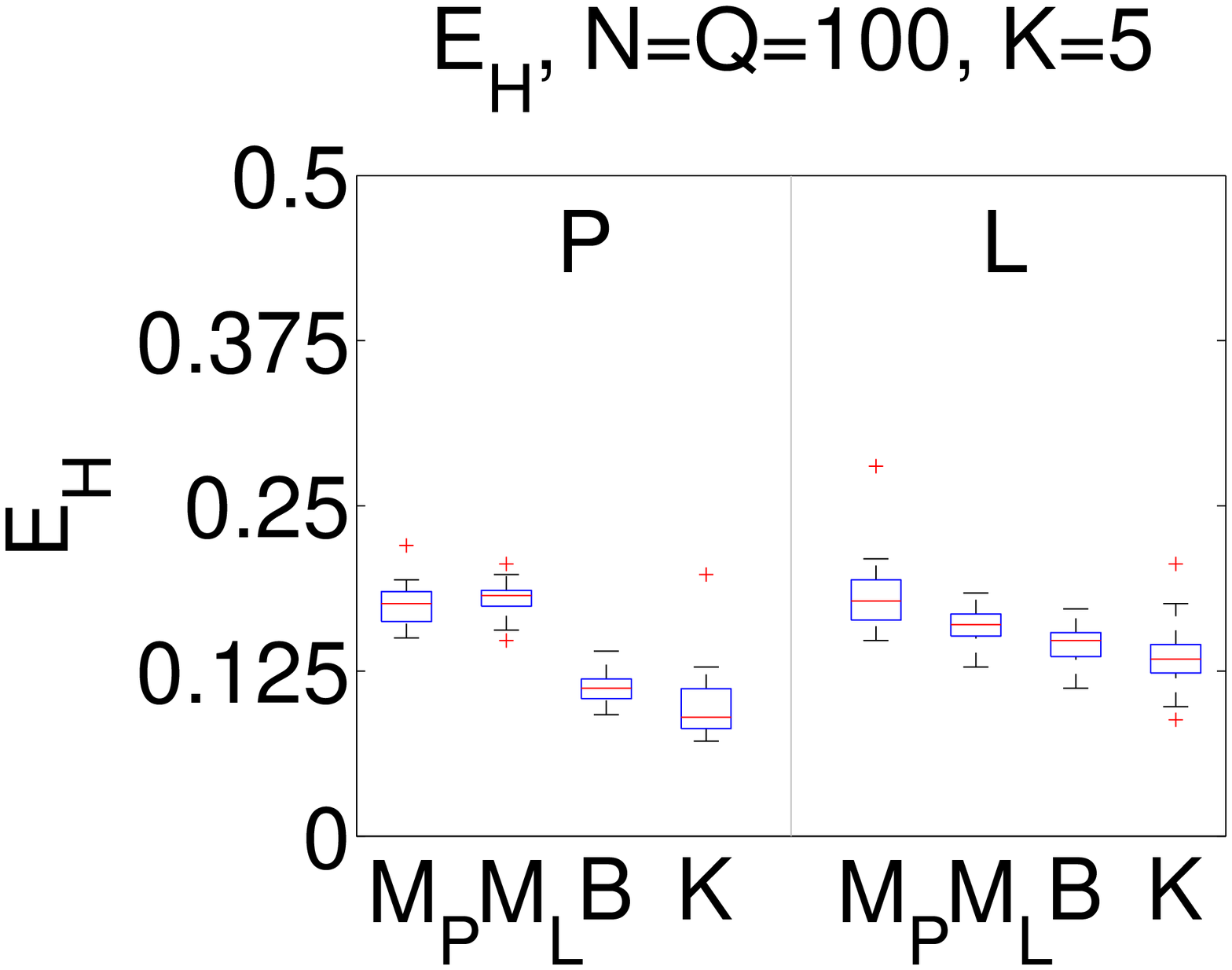}}
\addtocounter{subfigure}{-3}
\vspace{-0.6cm}
  \caption{Performance comparison of SPARFA-M, SPARFA-B, and K-SVD$_+$ with probit/logit model mismatch; $\textsf{M}_\textsf{P}$ and $\textsf{M}_\textsf{L}$ indicate probit and logit SPARFA-M, respectively. In the left/right halves of each box plot, we generate $\bY$ according to the inverse probit/logit link functions. The performance degrades only slightly with mismatch, while both SPARFA algorithms outperform K-SVD$_{+}$.}
\vspace{-0.4cm}
\label{fig:synth_rob_bayes}
\end{figure*}

\subsection{Real Data Experiments}
\label{sec:real}

We next test the SPARFA algorithms on three real-world educational datasets. 
Since all variants of SPARFA-M and SPARFA-B obtained similar results in the sythetic data experiments in \fref{sec:synth}, for the sake of brevity, we will often show the results for only one of the algorithms for each dataset.
In what follows, we select the sparsity penalty parameter~$\lambda$ in SPARFA-M using the BIC as described in \cite{tibsbook} and choose the hyperparameters for SPARFA-B to be largely non-informative. 

\subsubsection{Undergraduate DSP course}
\label{sec:real301}

\paragraph{Dataset}

We analyze a very small dataset consisting of $N=15$ learners answering $Q=44$ questions taken from the final exam of an introductory course on digital signal processing (DSP) taught at Rice University in Fall 2011 (\cite{301website}).  There is no missing data in the matrix $\bY$.

\paragraph{Analysis}

We estimate $\bW$, $\bC$, and $\boldsymbol\mu$ from $\bY$ using the logit version of SPARFA-M assuming $K = 5$ concepts to achieve a concept granularity that matches the complexity of the analyzed dataset.
Since the questions had been manually tagged by the course instructor, we deploy the tag-analysis approach proposed in  \fref{sec:taganalysis}.  Specifically, we form a $44 \times 12$ matrix $\bT$ using the $M=12$ available tags and estimate the $12 \times 5$ concept--tag association matrix $\bA$ in order to interpret the meaning of each retrieved concept. For each concept, we only show the top 3 tags and their relative contributions. We also compute the $12 \times 15$ learner tag knowledge profile matrix $\bU$.

\paragraph{Results and discussion}

\begin{figure}[tp]
\vspace{-1.3cm}
\centering
\subfigure[Question--concept association graph.  Circles correspond to concepts and rectangles to questions; the values in each rectangle corresponds to that question's intrinsic difficulty.]{
\includegraphics[width=0.85\columnwidth]{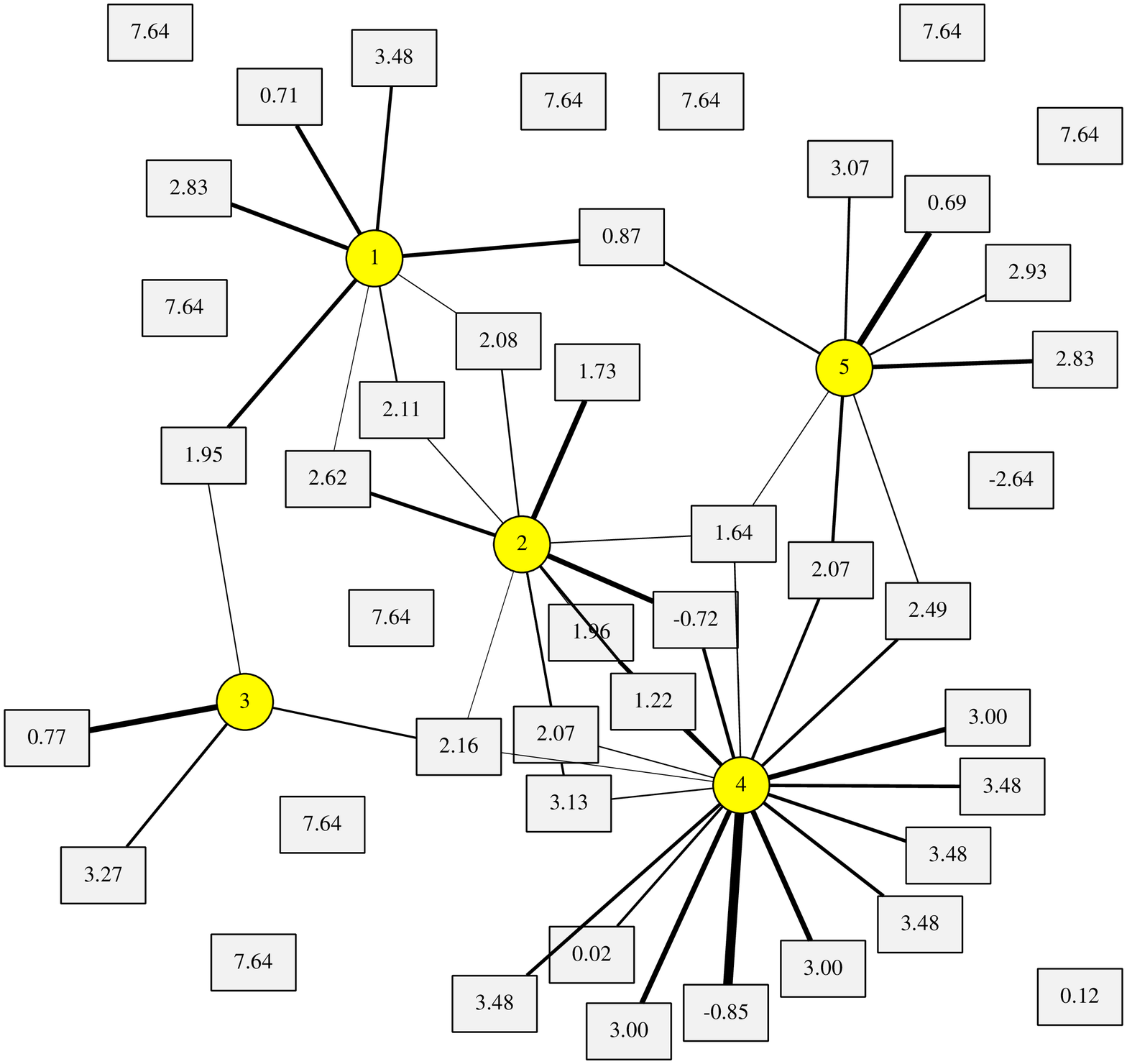}
\label{fig:301W_a}
}\\[0.3cm]
\subfigure[Most important tags and relative weights for the estimated concepts.]{
\scalebox{.9}{%
\begin{tabular}{llllll}
\toprule[0.2em]
Concept 1 && Concept 2 && Concept 3\\
\midrule[0.1em]
Frequency response & (46\%) & Fourier transform & (40\%) & $z$-transform & (66\%)\\
Sampling rate & (23\%) & Laplace transform & (36\%) & Pole/zero plot & (22\%)\\ 
Aliasing & (21\%) & $z$-transform & (24\%) & Laplace transform & (12\%)\\
\midrule[0.2em]
 Concept 4 && Concept 5\\ 
\midrule[0.1em]
Fourier transform & (43\%) & Impulse response & (74\%) \\
Systems/circuits & (31\%) & Transfer function & (15\%) \\ 
Transfer function & (26\%) &  Fourier transform & (11\%)\\ 
\bottomrule[0.2em]\\
\end{tabular}}
\vspace{-1.0cm}
\label{fig:301W_b}
}
\caption{\subref{fig:stems_a} Question--concept association graph and \subref{fig:stems_b} most important tags associated with each concept for an undergraduate DSP course with $N=15$ learners answering $Q=44$ questions.}
\label{fig:301W}
\end{figure}

\fref{fig:301W}(a) visualizes the estimated question--concept association matrix $\widehat{\bW}$ as a bipartite graph consisting of question and concept nodes.\footnote{To avoid the scaling identifiability problem that is typical in factor analysis, we normalize each row of~$\bC$ to unit $\elltwo$-norm and scale each column of $\bW$ accordingly prior to visualizing the bipartite graph. This enables us to compare the strength of question--concept associations across different concepts.}
In the graph, circles represent the estimated concepts and squares represent questions, with thicker edges indicating stronger question--concept associations (i.e., larger entries~$\widehat{W}_{i,k}$).
Questions are also labeled with their estimated intrinsic difficulty $\mu_i$, with larger positive values of $\mu_i$ indicating easier questions.
Note that ten questions are not linked to any concept.  
All $Q=15$ learners answered these questions correctly; as a result nothing can be estimated about their underlying concept structure.
\fref{fig:301W}(b) provides the concept--tag association (top 3 tags) for each of the 5 estimated concepts.

\begin{table}[tb]
\centering
\caption{Selected tag knowledge of Learner~1.} 
\vspace{0.2cm}
\label{tbl:301tag1}
    \begin{tabular}{ccccc}
\toprule[0.1em]
$z$-transform & Impulse response & Transfer function &Fourier transform & Laplace transform \\ 
\midrule[0.05em]
1.09 & $-1.80$ & $-0.50$ & 0.99 & $-0.77$  \\
\bottomrule[0.1em]
\end{tabular}
\end{table}

\begin{table}[tb]
\centering
\caption{Average tag knowledge of all learners.}
\label{tbl:301tag2}
\vspace{0.2cm}
    \begin{tabular}{ccccc}
\toprule[0.1em]
$z$-transform & Impulse response & Transfer function &Fourier transform & Laplace transform \\ 
\midrule[0.05em]
0.04 & $-0.03$ & $-0.10$ & 0.11 & 0.03  \\
\bottomrule[0.1em]
 \end{tabular}
\end{table}

\fref{tbl:301tag1} provides Learner 1's knowledge of the various tags relative to other learners.  
Large positive values mean that Learner~1 has strong knowledge of the tag, while large negative values indicate a deficiency in knowledge of the tag. 
\fref{tbl:301tag2} shows the average tag knowledge of the entire class, computed by averaging the entries of each row in the learner tag knowledge matrix $\bU$ as described in \fref{sec:tagprof}.
\fref{tbl:301tag1} indicates that Learner~1 has particularly weak knowledges of the tag ``Impulse response.'' 
Armed with this information, a PLS could automatically suggest remediation about this concept to Learner~1. 
\fref{tbl:301tag2} indicates that the entire class has (on average) weak knowledge of the tag ``Transfer function.'' With this information, a PLS could suggest to the class instructor that they provide remediation about this concept to the entire class.

\subsubsection{Grade 8 science course}
\label{sec:realstems}

\paragraph{Dataset}

The STEMscopes dataset was introduced in \fref{sec:SPARFASectionref}.  There is substantial missing data in the matrix $\bY$, with only 13.5\% of its entries observed.

\paragraph{Analysis}

We compare the results of SPARFA-M and SPARFA-B on this data set to highlight the pros and cons of each approach.
For both algorithms, we select $K=5$ concepts. 
For SPARFA-B, we fix reasonably broad (non-informative) values for all hyperparameters. 
For $\mu_0$ we calculate the average rate of correct answers $p_s$ on observed graded responses of all learners to all questions and use $\mu_0 = \Phi^{-1}_\text{pro}(p_s)$. 
The variance $v_{\boldsymbol{\mu}}$ is left sufficiently broad to enable adequate exploration of the intrinsic difficulty for each questions. 
Point estimates of $\bW$, $\bC$ and $\boldsymbol{\mu}$ are generated from the SPARFA-B posterior distributions using the methods described in \fref{sec:sparfabpostprocessing}. 
Specifically, an entry $\widehat{W}_{i,k}$ that has a corresponding active probability $\widehat{R}_{i,k} < 0.55$ is thresholded to 0. Otherwise, we set  $\widehat{W}_{i,k}$ to its posterior mean.
On a 3.2\,GHz quad-core desktop PC, SPARFA-M converged to its final estimates in 4\,s, while SPARFA-B required 10 minutes.

\paragraph{Results and discussion}

Both SPARFA-M and SPARFA-B deliver comparable factorizations. The estimated question--concept association graph for SPARFA-B is shown in \fref{fig:stems_a}, with the accompanying concept--tag association in \fref{fig:stems_b}.
Again we see a sparse relationship between questions and concepts.
The few outlier questions that are not associated with any concept are generally those questions with very low intrinsic difficulty or those questions with very few responses.

One advantage of SPARFA-B over SPARFA-M is its ability to provide not only point estimates of the parameters of interest but also reliability information for those estimates. This reliability information can be useful for decision making, since it enables one to tailor actions according to the associated uncertainty. If there is considerable uncertainty regarding learner mastery of a particular concept, for example, it may be a more appropriate use of time of the learner to ask additional questions  that reduce the uncertainty, rather than assigning new material for which the learner may not be adequately prepared.

\begin{figure*}[t]
\centering
\includegraphics[width=0.55\textwidth]{./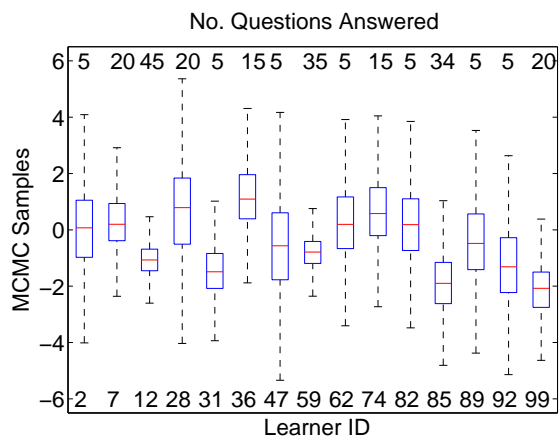}
\vspace{-0.6cm}
\caption{Concept~5 knowledge estimates generated by SPARFA-B for the STEMscopes data for a randomly selected subset of learners.
The box-whisker plot shows the posterior variance of the MCMC samples, with each box-whisker plot corresponding to a different learner in the dataset. Anonymized learner IDs are shown on the bottom, while the number of relevant questions answered by each learner answered is indicated on the top of the plot.}
\vspace{-0.5cm}
\label{fig:randomvariances}
\end{figure*}

We demonstrate the utility of SPARFA-B's posterior distribution information on the learner concept knowledge matrix $\bC$. 
\fref{fig:randomvariances} shows box-whisker plots of the MCMC output samples over 30,000 iterations (after a burn-in period of 30,000 iterations) for a set of learners for Concept~5.  Each box-whisker plot corresponds to the posterior distribution for a different student.
These plots enable us to visualize both the posterior mean and variance associated with the concept knowledge estimates $\hat{\vecc}_j$.  As one would expect, the estimation variance tends to decrease as the number of answered questions increases (shown in the top portion of \fref{fig:randomvariances}).

The exact set of questions answered by a learner also affects the posterior variance of our estimate, as different questions convey different levels of information regarding a learner's concept mastery. An example of this phenomenon is observed by comparing Learners~7 and 28. Each of these two learners answered 20 questions and had a nearly equal number of correct answers (16 and 17, respectively).  A conventional analysis that looked only at the percentage of correct answers would conclude that both learners have similar concept mastery. However, the actual set of questions answered by each learner is not the same, due to their respective instructors assigning different questions. 
While SPARFA-B finds a similar posterior mean for Learner~7 and Learner~28, it finds very different posterior variances, with considerably more variance for Learner~28.
The SPARFA-B posterior samples shed additional light on the situation at hand.  Most of the questions answered by Learner~28 are deemed easy (defined as having intrinsic difficulties~$\hat\mu_i$ larger than one).  Moreover, the remaining, more difficult questions answered by Learner~28 show stronger affinity to concepts other than Concept~5. 
In contrast, roughly half of the questions answered by Learner~7 are deemed hard and all of these questions have stronger affinity to Concept~5.   Thus, the questions answered by Learner~28 convey only weak information about the knowledge of Concept~5, while those answered by Learner~7 convey strong information.  Thus, we cannot determine from Learner 28's responses whether they have mastered Concept~5 well or not. Such SPARFA-B posterior data would enable a PLS to quickly assess this scenario and tailor the presentation of future questions to Learner~28---in this case, presenting more difficult questions related to Concept~5 would reduce the estimation variance on their concept knowledge and allow a PLS to better plan future educational tasks for this particular learner. 

Second, we demonstrate the utility of SPARFA-B's posterior distribution information on the question--concept association matrix $\bW$. Accurate estimation of $\bW$ enables course instructors and content authors to validate the extent to which problems measure knowledge across various concepts.
 In general, there is a strong degree of commonality between the results of SPARFA-M and SPARFA-B, especially as the number of learners answering a question grow.  
We present some illustrative examples of support estimation on~$\bW$ for both SPARFA algorithms in \fref{tbl:qresults}. We use the labels ``Yes''/``No'' to indicate inclusion of a concept by SPARFA-M and show the posterior inclusion probabilities for each concept by SPARFA-B.
Here, both SPARFA-M and SPARFA-B agree strongly on both Question~3 and Question~56. Question~72 is answered by only 6 learners, and SPARFA-M discovers a link between this question and Concept~5. SPARFA-B proposes Concept~5 in $58\%$ of all MCMC iterations, but also Concept~1 in 60\% of all MCMC iterations. Furthermore, the proposals of Concept~1 and Concept~5 are nearly mutually exclusive; in most iterations only one of the two concepts is proposed, but both are rarely proposed jointly. This behavior implies that SPARFA-B has found two competing models that explain the data associated with Question~72. To resolve this ambiguity, a PLS would need to gather more learner responses.

\begin{table}
\centering
\caption{Comparison of SPARFA-M and SPARFA-B for three selected questions and the $K=5$ estimated concepts in the STEMscopes dataset.  
For SPARFA-M, the labels ``Yes'' and ``No'' indicate whether a particular concept was detected in the question. For SPARFA-B, we show the posterior inclusion probability (in percent), which indicates the percentage of iterations in which a particular concept was sampled.}
\label{tbl:qresults}
\begin{tabular}{lllllll}
\toprule[0.2em]
~&~&C1&C2&C3&C4&C5 \\
\midrule[0.15em]
\multirow{2}{*}{Q3 (27 responses)} & M & Yes & No & No & No & Yes \\
~ & B & 94\% & 36\% & 48\% & 18\% & 80\% \\ 
\midrule[0.1em]
\multirow{2}{*}{Q56 (5 responses)} & M & No & No & No & No & No \\
~ & B & 30\% & 30\% & 26\% & 31\% & 31\% \\ 
\midrule[0.1em]
\multirow{2}{*}{Q72 (6 responses)} & M & No & No & No & No & Yes \\
~ & B & 61\% & 34\% & 29\% & 36\% & 58\% \\
\bottomrule[0.2em]
\end{tabular}
\end{table}

\subsubsection{Algebra Test Administered on Amazon Mechanical Turk}
\label{sec:realmturk}

For a final demonstration of the capabilities the SPARFA algorithms, we analyze a dataset from a high school algebra test carried out by Daniel Calder\'{o}n of Rice University on Amazon Mechanical Turk, a crowd-sourcing marketplace (\cite{mechturkwebsite}). 

\paragraph{Dataset}

The dataset consists of $N=99$ learners answering $Q=34$ questions covering topics such as geometry, equation solving, and visualizing function graphs.  Calder\'{o}n manually labeled the questions from a set of $M = 10$ tags.  The dataset is fully populated, with no missing entries. 

\paragraph{Analysis}

We estimate $\bW$, $\bC$, and $\boldsymbol\mu$ from the fully populated $34 \times 99$ binary-valued matrix $\bY$ using  the logit version of SPARFA-M assuming $K = 5$ concepts. 
We deploy the tag-analysis approach proposed in \fref{sec:taganalysis} to interpret each concept. 
Additionally, we calculate the likelihoods of the responses using \fref{eq:qa} and the estimates $\widehat{\bW}$, $\widehat{\bC}$ and $\hat{\boldsymbol{\mu}}$.
The results from SPARFA-M are summarized in \fref{fig:mkturk}.
We detail the results of our analysis for Questions~19--26 in \fref{tbl:ans2} and for Learner~1 in \fref{tbl:stu2}.

\begin{figure}[tp]
\vspace{-1.0cm}
\centering
\subfigure[Question--concept association graph.]{
\includegraphics[width=0.84\columnwidth]{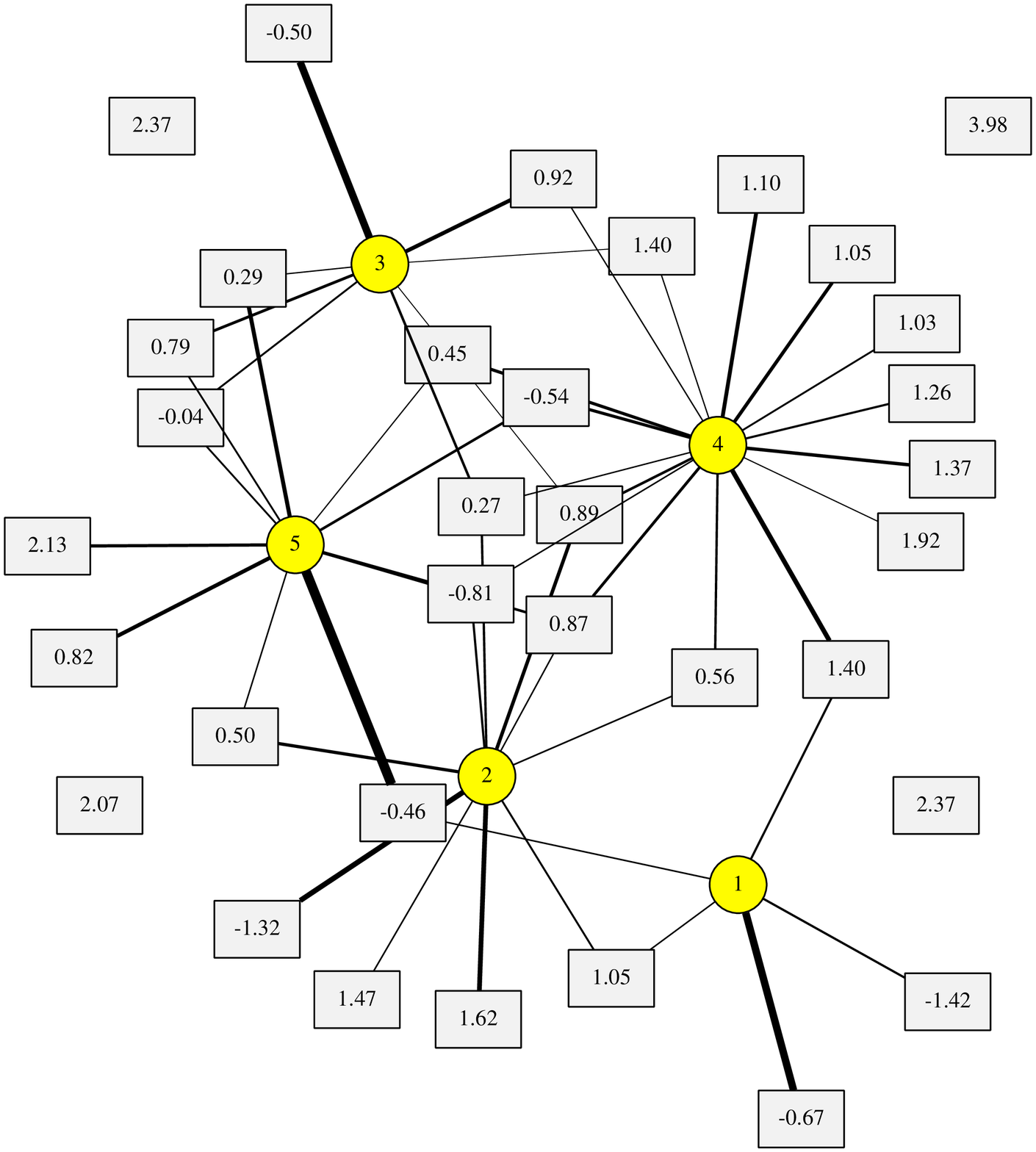}
\label{fig:mturk_a}
}\\[0.3cm]
\subfigure[Most important tags and relative weights for the estimated concepts.]{
\scalebox{.8}{%
\begin{tabular}{llllll}
\toprule[0.2em]
Concept 1 && Concept 2 && Concept 3\\
\midrule[0.1em]
Fractions & (57\%)& Plotting functions & (64\%) & Geometry & (63\%)\\
Solving equations & (42\%) & System of equations & (27\%) & Simplifying expressions & (27\%)\\ 
Arithmetic & (1\%) & Simplifying expressions & (9\%)& Trigonometry & (10\%)\\
\midrule[0.1em]
 Concept 4 && Concept 5\\ 
\midrule[0.1em]
Simplifying expressions & (64\%) & Trigonometry & (53\%) \\
Trigonometry & (21\%) & Slope & (40\%) \\ 
Plotting Functions & (15\%) & Solving equations & (7\%) \\ 
\bottomrule[0.2em]\\
\end{tabular}}
\label{fig:mturk_b}
}
\vspace{-0.2cm}
\caption{\subref{fig:mturk_a} Question--concept association graph and \subref{fig:mturk_b} most important tags associated with each concept  for a high-school algebra test carried out on Amazon Mechanical Turk with $N=99$ users answering $Q=34$ questions.}
\label{fig:mkturk}
\vspace{-0.5cm}
\end{figure}

\paragraph{Results and discussion}

With the aid of SPARFA, we can analyze the strengths and weaknesses of each learner's concept knowledge both individually and relative to other users.
We can also detect outlier responses that are due to guessing, cheating, or carelessness.
The values in the estimated concept knowledge matrix $\widehat{\bC}$ measure each learner's concept knowledge relative to all other learners. 
The estimated intrinsic difficulties of the questions $\hat{\boldsymbol{\mu}}$ provide a relative measure that summarizes how all users perform on each question. 

Let us now consider an example in detail; see \fref{tbl:ans2} and \fref{tbl:stu2}.
Learner~1 incorrectly answered Questions~21 and 26 (see \fref{tbl:ans2}), which involve Concepts~1 and 2.
Their knowledge of these concepts is not heavily penalized, however (see \fref{tbl:stu2}), due to the high intrinsic difficulty of these two questions, which means that most other users also incorrectly answered them.    
User~1 also incorrectly answered Questions~24 and 25, which involve Concepts~2 and 4.
Their knowledge of these concepts is penalized, due to the low intrinsic difficulty of these two questions, which means that most other users correctly answered them.   
Finally, Learner~1 correctly answered Questions~19 and 20, which involve Concepts~1 and 5.  
Their knowledge of these concepts is boosted, due to the high intrinsic difficulty of these two questions.

SPARFA can also be used to identify each user's individual strengths and weaknesses.
Continuing the example, Learner~1 needs to improve their knowledge of Concept~4 (associated with the tags ``Simplifying expressions'', ``Trigonometry,'' and ``Plotting functions'') significantly, while their deficiencies on Concepts~2 and 3 are relatively minor. 

Finally, by investigating the likelihoods of the graded responses, we can detect outlier responses, which would enables a PLS to detect guessing and cheating. 
By inspecting the concept knowledge of Learner~1 in \fref{tbl:stu2}, we can identify insufficient knowledge of  Concept~4. Hence, Learner 1's correct answer to Question~22 is likely due to a random guess, since the predicted likelihood of providing the correct answer is estimated at only~$0.21$.

\begin{table}[tb] 
\caption{Graded responses and their underlying concepts for Learner~1 (1 designates a correct response and 0 an incorrect response).}\vspace{0.2cm}
\centering
\scalebox{.97}{%
\begin{tabular}{lcccccccc}
\toprule[0.15em]
Question number & $19$ & $20$ & $21$ & $22$ & $23$ & $24$ & $25$ & $26$\\
\midrule
Learner's graded response $Y_{i,j}$ & $1$ & $1$ & $0$ & $ 1$ & $1$  & $0$ & $0$ & $0$\\
Correct answer likelihood & $\multirow{2}{*}{0.79}$ & \multirow{2}{*}{$0.71$} & \multirow{2}{*}{$0.11$} & \multirow{2}{*}{$\bf{0.21}$} & \multirow{2}{*}{$0.93$}  & \multirow{2}{*}{$0.23$} & \multirow{2}{*}{$0.43$} & \multirow{2}{*}{$0.00$} \\
\,\,\, $p(Y_{i,j}=1 | \vecw_i, \vecc_j, \mu_i)$ \\
Underlying concepts & $1$ & $1,5$ & $1$ & $ 2,3,4$ & $3,5$ & $2,4$ & $1,4$ & $2,4$\\ 
Intrinsic difficulty $\mu_i$ & $-1.42$ & $-0.46$ & $-0.67$ & $ 0.27$ & $0.79$ & $0.56$ & $1.40$ & $-0.81$\\
\bottomrule[0.15em]
    \end{tabular}
}
\vspace{-0.0cm}
\label{tbl:ans2}
\end{table}

\begin{table}[tb] 
\caption{Estimated concept knowledge for Learner~1.} \vspace{0.2cm}
\centering
\begin{tabular}{lccccc}
\toprule[0.15em]
Concept number & $1$ & $2$ & $3$ & $4$ & $5$ \\ 
\midrule
Concept knowledge  & $0.46$ & $-0.35$ & $0.72$ & $-1.67$ & $0.61$ \\ 
\bottomrule[0.15em]
\end{tabular}
\label{tbl:stu2}
\vspace{-.0cm}
\end{table}

\newpage

\subsection{Predicting Unobserved Learner Responses} \label{sec:realpred}

We now compare SPARFA-M against the recently proposed binary-valued collaborative
filtering algorithm CF-IRT (\cite{logitfa}) in an experiment to predict unobserved learner responses.

\paragraph{Dataset and experimental setup} In this section, we study both the Mechanical Turk algebra test dataset and a portion of the ASSISTment dataset (\cite{assistment}). The ASSISTment dataset consists of $N=403$ learners answering $Q=219$ questions, with $25\%$ of the responses observed (see \cite{tesr} for additional details on the dataset). In each of the 25 trials we run for both datasets, we hold out 20\% of the observed learner responses as a test set, and train both the logistic variant of \mbox{SPARFA-M}\footnote{In order to arrive at a fair comparison, we choose to use the logistic variant of \mbox{SPARFA-M}, since CF-IRT also relies on a logistic model.} and CF-IRT on the rest. The regularization parameters of both algorithms are selected using $4$-fold cross-validation on the training set. We use two performance metrics to evaluate the performance of these algorithms, namely (i) the prediction accuracy, which corresponds to the percentage of correctly predicted unobserved responses, and (ii) the average prediction likelihood $\frac{1}{\mid \bar{\Omega}_\text{obs} \mid} \textstyle \sum_{i,j:(i,j)\in \bar{\Omega}_\text{obs}} p(Y_{i,j} | \vecw_i, \vecc_j)$ of the unobserved responses, as proposed in \cite{dynamickt}, for example.

\begin{figure*}[t]
\vspace{0.51cm}
\centering
\subfigure[Prediction accuracy for the Mechanical Turk algebra test dataset.]{
\includegraphics[width=0.45\textwidth]{./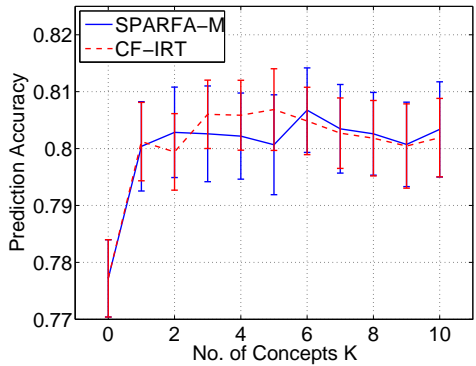}
\label{fig:mturk_a}
}
\hspace{0.6cm}
\subfigure[Average prediction likelihood for the Mechanical Turk algebra test dataset.]{
\includegraphics[width=0.45\textwidth]{./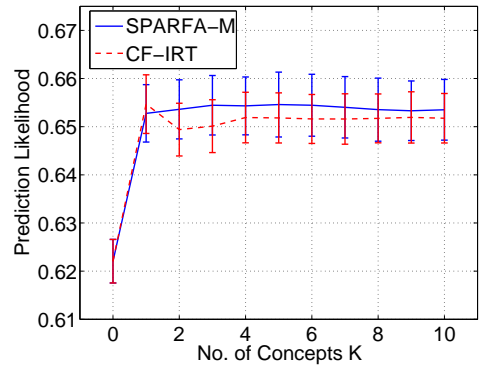}
\label{fig:mturk_l}
}\\[0.3cm]
\vspace{-0.6cm}
\subfigure[Prediction accuracy for the ASSISTment dataset.]{
\includegraphics[width=0.45\textwidth]{./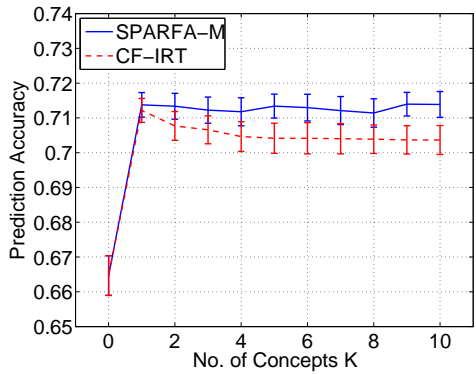}
\label{fig:ass_a}
}\hspace{0.8cm}
\subfigure[Average prediction likelihood for the ASSISTment dataset.]{
\includegraphics[width=0.45\textwidth]{./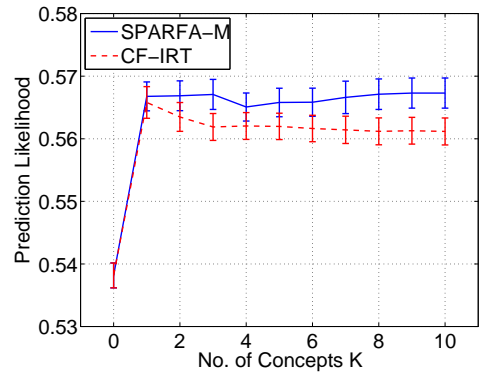}
\label{fig:ass_l}
}
\vspace{-0.3cm}
\caption{Performance comparison of SPARFA-M and CF-IRT on \subref{fig:mturk_a} prediction accuracy and \subref{fig:mturk_l} average prediction likelihood for the Mechanical Turk algebra test dataset, \subref{fig:ass_a} prediction accuracy and \subref{fig:ass_l} average prediction likelihood for the ASSISTment dataset. SPARFA-M achieves comparable or better performance than CF-IRT while enabling interpretability of the estimated latent concepts}
\label{fig:pred}
\vspace{-0.9cm}
\end{figure*}

\paragraph{Results and discussion} Figure~\ref{fig:pred} shows the prediction accuracy and prediction likelihood for both the Mechanical Turk algebra test dataset and the ASSISTment dataset. We see that \mbox{SPARFA-M} delivers comparable (sometimes slightly superior) prediction performance to CF-IRT in predicting unobserved learner responses.

Furthermore, we see from \fref{fig:pred} that the prediction performance varies little over different values of $K$, meaning that the specific choice of $K$ has little influence on the prediction performance within a certain range. This phenomenon agrees with other collaborative filtering results (see, e.g., \cite{svdplus,ordpred}). Consequently, the choice of $K$ essentially dictates the granularity of the abstract concepts we wish to estimate. 
This justifies our choice of $K=5$ in the real data experiments of \fref{sec:real} when we visualize the question--concept associations as bi-partite graphs, as it provides a desirable granularity of the estimated concepts in the datasets.
We emphasize that SPARFA-M is able to provide interpretable estimated factors while achieving comparable (or slightly superior) prediction performance than that achieved by CF-IRT, which does not provide interpretability. This feature of SPARFA is key for the development of PLSs, as it enables an automated way of generating interpretable feedback to learners in a purely data-driven fashion.

\section{Related Work on Machine Learning-based Personalized Learning}
\label{sec:rw}


A range of different machine learning algorithms have been applied in educational contexts.  
Bayesian belief networks have been successfully used to probabilistically model and analyze learner response data (e.g., \cite{gregk2,woolf08,gregk1}).  Such models, however, rely on predefined question--concept dependencies (that are not necessarily the true dependencies governing learner responses) and primarily only work for a single concept.  
In contrast, SPARFA discovers question--concept dependencies from solely the graded learner responses to questions and naturally estimates multi-concept question dependencies. 

\sloppy
Modeling question--concept associations has been studied in \cite{qmatrix}, \cite{viet}, \cite{viettwo}, and \cite{qprobit}.
The approach in \cite{qmatrix} characterizes the underlying question--concept associations using binary values, which ignore the relative strengths of the question--concept associations. 
In contrast, SPARFA differentiates between strong and weak relationships through the real-valued weights $W_{i,k}$.
The matrix and tensor factorization methods proposed in \cite{qmatrix}, \cite{viet}, and \cite{viettwo} treat graded learner responses as real but deterministic values. 
In contrast, the probabilistic framework underlying SPARFA provides a statistically principled model for graded responses; the likelihood of the observed graded responses provides even more explanatory power.  
\fussy

Existing intelligent tutoring systems capable of modeling question--concept relations probabilistically include Khan Academy (\cite{Khan,huweb}) and the system of \cite{minkak1}. 
Both approaches, however, are limited to dealing with a single concept.
In contrast, SPARFA is built from the ground up to deal with multiple latent concepts.  

A probit model for graded learner responses is used in \cite{qprobit} without exploiting the idea of low-dimensional latent concepts.  
In contrast, SPARFA leverages multiple latent concepts and therefore can create learner concept knowledge profiles for personalized feedback.  Moreover, SPARFA-M is compatible with the popular logit model.  

The recent results developed in \cite{predfact} and \cite{logitfa} address the problem of \emph{predicting} the missing entries in a binary-valued graded learner response matrix.  
Both papers use low-dimensional latent factor techniques specifically developed for collaborative filtering, as, e.g., discussed in \cite{amaz} and \cite{cfeval}. 
While predicting missing correctness values is an important task, these methods do not take into account the sparsity and non-negativity of the matrix $\bW$; this inhibits the interpretation of the relationships among questions and concepts. 
In contrast, SPARFA accounts for both the sparsity and non-negativity of $\bW$, which enables the interpretation of the value $C_{k,j}$ as learner $j$'s knowledge of concept $k$.

There is a large body of work on item response theory (IRT), which uses statistical models to analyze and score graded question response data (see, e.g., \cite{lordirt}, \cite{irtest}, and \cite{mirt} for overview articles).  The main body of the IRT literature builds on the model developed by \cite{rasch} and has been applied mainly in the context of adaptive testing (e.g., in the graduate record examination (GRE) and graduate management (GMAT) tests \cite{seqtest}, \cite{abilityest}, and \cite{raschest}). 
While the SPARFA model shares some similarity to the model in \cite{rasch} by modeling question--concept association strengths and intrinsic difficulties of questions, it also models each learner in terms of a multi-dimensional concept knowledge vector. This capability of SPARFA is in stark contrast to the Rasch model, where each learner is characterized by a single, scalar ability parameter. 
Consequently, the SPARFA framework is able to provide stronger explanatory power in the estimated factors compared to that of the conventional Rasch model. 
We finally note that multi-dimensional variants of IRT have been proposed in \cite{mcdonald}, \cite{yao}, and \cite{mirt}. We emphasize, however, that the design of these algorithms leads to poor interpretability of the resulting parameter estimates.

\section{Conclusions}
\label{sec:conclusions}

In this paper, we have formulated a new approach to learning and content analytics, which is based on a new statistical model that encodes the probability that a learner will answer a given question correctly in terms of three factors: 
\begin{inparaenum}[(i)]
\item the learner's knowledge of a set of latent concepts,
\item how the question related to each concept, and 
\item the intrinsic difficulty of the question.  
\end{inparaenum}
We have proposed two algorithms, SPARFA-M and SPARFA-B, to estimate the above three factors given incomplete observations of graded learner question responses.  
SPARFA-M uses an efficient Maximum Likelihood-based bi-convex optimization approach to produce point estimates of the factors, while  SPARFA-B uses Bayesian factor analysis to produce posterior distributions of the factors.  
In practice, SPARFA-M is beneficial in applications where timely results are required; SPARFA-B is favored in situations where posterior statistics are required.
We have also introduced a novel method for incorporating user-defined tags on questions to facilitate the interpretability of the estimated factors.
Experiments with both synthetic and real world education datasets have demonstrated both the efficacy and robustness of the SPARFA algorithms.   

The quantities estimated by SPARFA can be used directly in a range of PLS functions.
For instance, we can identify the knowledge level of learners on particular concepts and diagnose why a given learner has incorrectly answered a particular question or type of question. 
Moreover, we can discover the hidden relationships among questions and latent concepts, which is useful for identifying questions that do and do not aid in measuring a learner's conceptual knowledge. 
Outlier responses that are either due to guessing or cheating can also be detected.
In concert, these functions can enable a PLS to generate personalized feedback and recommendation of study materials, thereby enhancing overall learning efficiency.

\sloppy

Various extensions to the SPARFA framework developed here have been proposed recently. In particular, a variant of \mbox{SPARFA-M} that analyzes ordinal rather than binary-valued responses and  directly utilizes tag information in the probabilistic model has been detailed in \cite{sparfatag}. Another variant of SPARFA-M that improves the interpretability of the underlying concepts via the joint analysis of graded learner responses and question/response text has been proposed in \cite{sparfatop}. A nonparametric Bayesian variant of SPARFA-B that estimates both the number of concepts $K$ as well as the reliability of each student from data has been developed in \cite{nonparamsparfab}.

\fussy

Before closing, we would like to point out a connection between SPARFA and \emph{dictionary learning} that is of independent interest.  
This connection can be seen by noting that \fref{eq:qam} for both the probit and inverse logit functions is statistically equivalent to (see \cite{gpml}): 
\begin{align*} 
Y_{i,j}=[\sign(\bW\bC+\bM+\bN)]_{i,j}, \,\, (i,j)\in\Omega_\text{obs},
\end{align*}
where $\sign(\cdot)$ denotes the entry-wise sign function and the entries of $\bN$ are i.i.d.\ and drawn from either a standard Gaussian or standard logistic distribution. 
Hence, estimating~$\bW$,~$\bC$, and $\bM$ (or equivalently, $\boldsymbol{\mu}$) is equivalent to learning a (possibly overcomplete) dictionary from the data $\bY$.  
The key departures from the dictionary-learning literature (\cite{ksvd,bach}) and algorithm variants capable of handling missing observations (\cite{studi}) are the binary-valued observations and the non-negativity constraint on~$\bW$.  
Note that the algorithms developed in \fref{sec:matrix} to solve the sub-problems by holding one of the factors~$\bW$ or~$\bC$ fixed and solving for the other variable can be used to solve noisy binary-valued (or 1-bit) compressive sensing or sparse signal recovery problems, e.g., as studied in \cite{petros1bit}, \cite{jason1bit}, and \cite{plan1bit}.
Thus, the proposed SPARFA algorithms can be applied to a wide range of applications beyond education, including the analysis of survey data, voting patterns, gene expression, and signal recovery from noisy 1-bit compressive measurements. 

\appendix
\section{Proof of \fref{thm:rrconv}}
\label{app:convRR}

We now establish the convergence of the FISTA algorithms that solve the SPARFA-M subproblems $(\text{RR})_1^+$ and $(\text{RR})_2$.
We start by deriving the relevant Lipschitz constants.

\begin{lem}[Scalar Lipschitz constants] \label{lem:scalarlipschitz}
Let $g_\text{pro}(x) = \frac{\Phi_\text{pro}'(x)}{\Phi_\text{pro}(x)}$ and $g_\text{log}(x) = \frac{\Phi_\text{log}'(x)}{\Phi_\text{log}(x)}$, \mbox{$x \in \mathbb{R}$}, where $\Phi_\text{pro}(x)$ and $\Phi_\text{log}(x)$ are the inverse probit and logit link functions defined in~\fref{eq:probitlink} and~\fref{eq:logitlink}, respectively. Then, for $y,z \in \mathbb{R}$ we have 
\begin{align} 
\abs{g_\text{pro}(y)-g_\text{pro}(z)} & \leq L_\text{pro} \abs{y-z} \label{eq:prolips},\\ 
\abs{g_\text{log}(y)-g_\text{log}(z)} & \leq  L_\text{log} \abs{y-z} \label{eq:loglips},
\end{align}
with the constants $L_\text{pro}=1$ for the probit case  and $L_\text{log}=1/4$ for the logit case.
\end{lem}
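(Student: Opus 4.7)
The plan is to reduce each Lipschitz bound to a uniform bound on the derivative of $g_\text{pro}$ and $g_\text{log}$, and then conclude via the mean value theorem. The logit case is elementary, while the probit case hinges on a classical inequality for the inverse Mills ratio.

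For the logit case, I would start from the identity $\Phi_\text{log}'(x) = \Phi_\text{log}(x)(1-\Phi_\text{log}(x))$, which immediately gives $g_\text{log}(x) = 1-\Phi_\text{log}(x)$. Differentiating once more yields $g_\text{log}'(x) = -\Phi_\text{log}(x)(1-\Phi_\text{log}(x))$. Since $\Phi_\text{log}(x)\in(0,1)$, the elementary inequality $t(1-t)\le \tfrac14$ on $(0,1)$ gives $|g_\text{log}'(x)|\le \tfrac14$ for every $x$, so $L_\text{log}=\tfrac14$ follows from the mean value theorem.

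For the probit case, I would differentiate $g_\text{pro}=\mathcal{N}/\Phi_\text{pro}$ and use $\mathcal{N}'(x)=-x\mathcal{N}(x)$ to obtain the clean identity
\begin{equation*}
g_\text{pro}'(x) \;=\; -x\,g_\text{pro}(x) \;-\; g_\text{pro}(x)^{2} \;=\; (\log\Phi_\text{pro}(x))''.
\end{equation*}
The Lipschitz claim $|g_\text{pro}'(x)|\le 1$ then splits into two inequalities. The easy one is $g_\text{pro}'(x)\le 0$, i.e., log-concavity of $\Phi_\text{pro}$; this is standard (Pr\'ekopa) because $\mathcal{N}$ is a log-concave density and hence its CDF is log-concave. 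The hard one is $g_\text{pro}'(x)\ge -1$, equivalent to the Sampford/Birnbaum-type bound $g_\text{pro}(x)^{2}+x\,g_\text{pro}(x)-1\le 0$, i.e.\ $g_\text{pro}(x)\le \tfrac{1}{2}\bigl(\sqrt{x^{2}+4}-x\bigr)$.

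To establish this upper bound, I would define $\phi(x)=g_\text{pro}(x)^{2}+x\,g_\text{pro}(x)-1$ and show $\phi\le 0$ everywhere. Using the ODE $\Phi_\text{pro}''(x)=-x\,\Phi_\text{pro}'(x)$ satisfied by the normal CDF, one derives a first-order relation for $\phi$ of the form $\phi'(x)=a(x)\phi(x)+b(x)$ with coefficients expressible in $g_\text{pro}$; together with the boundary behavior $\phi(x)\to 0$ as $x\to-\infty$ (using $g_\text{pro}(x)\sim -x$) and $\phi(x)\to -1$ as $x\to +\infty$ (using $g_\text{pro}(x)\to 0$), one concludes $\phi\le 0$ throughout $\mathbb{R}$. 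A cleaner alternative, which I would probably favor in the writeup, is to invoke the representation $1/g_\text{pro}(x)=\int_{0}^{\infty}e^{xs-s^{2}/2}\,ds$ and apply the Cauchy--Schwarz inequality to the derivatives of this integral in $x$ to obtain the same quadratic inequality directly. With $|g_\text{pro}'(x)|\le 1$ in hand, the mean value theorem yields $L_\text{pro}=1$.

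The main obstacle is clearly the Sampford-type upper bound; the other half (log-concavity of $\Phi_\text{pro}$) and the logit case are routine. I expect the ODE/boundary-value argument for $\phi$ to be the cleanest technical device, and all subsequent uses of this lemma in Lemma~\ref{lem:vectorlips} and Theorem~\ref{thm:rrconv} then proceed through chain-rule and block-wise arguments that are decoupled from the scalar analysis carried out here.
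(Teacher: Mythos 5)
Your proposal is correct, and for the probit case it takes a genuinely different route from the paper. The paper also reduces the claim to $-1\le g_\text{pro}'(x)\le 0$ with $g_\text{pro}'(x)=-\frac{\mathcal{N}(x)}{\Phi_\text{pro}(x)}\bigl(x+\frac{\mathcal{N}(x)}{\Phi_\text{pro}(x)}\bigr)$, but it then splits into $x\le 0$ and $x>0$: for $x\le 0$ it multiplies two tabulated two-sided bounds on $\mathcal{N}(x)/\Phi_\text{pro}(x)$ of the form $-\frac{x}{2}+\sqrt{\frac{x^2}{4}+\frac{2}{\pi}}\le \frac{\mathcal{N}(x)}{\Phi_\text{pro}(x)}\le -\frac{x}{2}+\sqrt{\frac{x^2}{4}+1}$, and for $x>0$ it uses $\Phi_\text{pro}(x)\ge 1-\frac{1}{2}e^{-x^2/2}$ and finishes by maximizing an auxiliary function numerically (the maximum $\approx 0.9027\le 1$ at $\hat x\approx 0.4068$). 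You instead observe that both halves of $-1\le g_\text{pro}'\le 0$ are the single quadratic inequality $0\le g_\text{pro}(x)^2+x\,g_\text{pro}(x)\le 1$ (log-concavity of $\Phi_\text{pro}$ plus the Sampford--Birnbaum bound), and your Cauchy--Schwarz route through $h(x)=1/g_\text{pro}(x)=\int_0^\infty e^{xs-s^2/2}\,ds$ closes it cleanly: integration by parts gives $h'=xh+1$ and $h''=xh'+h$, and $(h')^2\le h\,h''$ collapses exactly to $x g_\text{pro}+g_\text{pro}^2\le 1$. This buys a uniform, case-free argument with no numerical verification and no appeal to handbook bounds; the paper's argument, in exchange, yields the slightly sharper information $g_\text{pro}'(x)\le -2/\pi$ on $x\le 0$, which is not needed for the lemma. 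Your logit argument ($g_\text{log}=1-\Phi_\text{log}$, $|g_\text{log}'|=\Phi_\text{log}(1-\Phi_\text{log})\le\frac14$) is the same as the paper's up to the trivial substitution of $t(1-t)\le\frac14$ for the AM--GM step. The only part of your writeup I would tighten is the ODE/boundary-value alternative for $\phi=g_\text{pro}^2+xg_\text{pro}-1$, which is sketched rather than executed; since you already have the Cauchy--Schwarz proof, I would drop that alternative entirely.
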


\begin{proof}
For simplicity of exposition, we omit the subscripts designating the probit and logit cases in what follows.
We first derive $L_\text{pro}$ for the probit case by computing the derivative of $g(x)$ and bounding its derivative from below and above.
The derivative of $g(x)$ is given by 
\begin{align} \label{eq:probitderivative}
g'(x) = -\frac{\mathcal{N}(x)}{\Phi(x)} \left( x+\frac{\mathcal{N}(x)}{\Phi(x)} \right).
\end{align}
where $\mathcal{N}(t) = \frac{1}{\sqrt{2 \pi}} e^{-t^2/2}$ is the PDF of the standard normal distribution.

We first bound this derivative for $x \leq 0$. To this end, we individually bound the first and second factor in \fref{eq:probitderivative} using the following bounds listed in \cite{nist}:
\begin{align*}
-\frac{x}{2} + \sqrt{\frac{x^2}{4}+\frac{2}{\pi}} \leq \frac{\mathcal{N}(x)}{\Phi(x)} \leq -\frac{x}{2} + \sqrt{\frac{x^2}{4}+1}, \quad x\leq 0
\end{align*}
and
\begin{align*}
\frac{x}{2} + \sqrt{\frac{x^2}{4}+\frac{2}{\pi}} \leq x + \frac{\mathcal{N}(x)}{\Phi(x)} \leq \frac{x}{2} + \sqrt{\frac{x^2}{4}+1}, \quad x\leq 0. 
\end{align*}
Multiplying the above inequalities leads to the bounds
\begin{align} \label{eq:onesideok}
-1 \leq g'(x) \leq -\frac{2}{\pi}, \quad \quad x\leq 0.
\end{align}

We next bound the derivative of \fref{eq:probitderivative} for $x>0$.
For $x>0$, $\mathcal{N}(x)$ is a positive decreasing function and $\Phi(x)$ is a positive increasing function; hence $\frac{\mathcal{N}(x)}{\Phi(x)}$ is a decreasing function and $\frac{\mathcal{N}(x)}{\Phi(x)} \leq \frac{\mathcal{N}(0)}{\Phi(0)} = \sqrt{2/\pi}$. 
Thus, we arrive at 
\begin{align*}
g'(x) = -\frac{\mathcal{N}(x)}{\Phi(x)} \left(x + \frac{\mathcal{N}(x)}{\Phi(x)}\right) \geq -\frac{\mathcal{N}(x)}{\Phi(x)} \left(x + \sqrt{2/\pi}\right),
\end{align*}
where we have used the facts that $\Phi(x) \geq 1/2$ and $\mathcal{N}(x) \leq \frac{1}{\sqrt{2\pi}}$ for $x>0$.
According to \fref{eq:probitlink} and the bound of \cite{phibound}, we have
\begin{align}
\Phi(x) = \frac{1}{2} + \int_{0}^x \! \mathcal{N}(t\mid 0,1)  \mathrm{d}t \geq \frac{1}{2} + \frac{1}{2} \sqrt{1-e^{-x^2/2}} \geq 1-\frac{1}{2}e^{-x^2/2}, \label{eq:mrlanstrick}
\end{align}
where the second inequality follows from the fact that $(1-e^{-x^2/2}) \in [0,1]$. Using \fref{eq:mrlanstrick} we can further  bound $g'(x)$ from below as
\begin{align*}
g'(x) \geq - \frac{\mathcal{N}(x)}{1-\frac{1}{2}e^{-x^2/2}} \left(x + \sqrt{2/\pi}\right).
\end{align*}
Let us now assume that  
\begin{align*}
- \frac{\mathcal{N}(x)}{1-\frac{1}{2}e^{-x^2/2}} \left(x + \sqrt{2/\pi}\right) \geq -1.
\end{align*}
In order to prove that this assumption is true, we rearrange terms to obtain
\begin{align} \label{eq:assumptionreformulated}
\left( \frac{x}{\sqrt{2\pi}}+\left({1}/{\pi}+{1}/{2} \right) \right) e^{-x^2/2} \leq 1.
\end{align}
Now we find the maximum of the LHS of \fref{eq:assumptionreformulated} for $x>0$. To this end, we observe that $\frac{x}{\sqrt{2\pi}}+\left({1}/{\pi}+{1}/{2} \right)$ is monotonically increasing and that $e^{-x^2/2}$ monotonically decreasing for $x>0$; hence, this function has a unique maximum in this region. By taking its derivative and setting it to zero, we obtain
\begin{align*}
x^2 + \sqrt{2/\pi}+\sqrt{\pi/2} -1 = 0
\end{align*}
Substituting the result of this equation, i.e., $\hat x \approx 0.4068$, into \fref{eq:assumptionreformulated} leads to  
\begin{align*}
\left(\frac{\hat x}{\sqrt{2\pi}}+\left({1}/{\pi}+{1}/{2}\right)\right) e^{-\hat{x}^2/2} \approx 0.9027 \leq 1,
\end{align*}
which certifies our assumption. Hence, we have
\begin{align*}
-1 \leq g'(x) \leq 0, \quad x > 0.
\end{align*}

Combining this result with the one for $x \leq 0$ in \fref{eq:onesideok} yields
\begin{align*}
-1 \leq g'(x) \leq 0, \quad x \in \mathbb{R}.
\end{align*}
We finally obtain the following bound on the scalar Lipschitz constant \fref{eq:prolips}:
\begin{align*}
\abs{g_\text{pro}(y) - g_\text{pro}(z)} \leq \abs{ \int_y^z \abs{g_\text{pro}'(x)} dx}  \leq \abs{ \int_y^z \,1\, dx} = \abs{y - z},
\end{align*}
which concludes the proof for the probit case. 

We now develop the bound $L_\text{log}$ for the logit case. To this end, we bound the derivative of $g_\text{log}(x) = \frac{1}{1+e^{x}}$ as follows:
\begin{align*}
0 \geq g_\text{log}'(x) = -\frac{e^x}{(1+e^x)^2} = -\frac{1}{e^x + e^{-x} +2} \geq -\frac{1}{4}.
\end{align*}
where we used the inequality of arithmetic and geometric means.
Consequently, we have the following bound on the scalar Lipschitz constant \fref{eq:loglips}:
\begin{align*}
\abs{g_\text{log}(y) - g_\text{log}(z)} \leq \abs{ \int_y^z \abs{g_\text{log}'(x)} dx} \leq | \int_y^z \,\frac{1}{4}\, dx| = \frac{1}{4} |y - z|,
\end{align*}
which concludes the proof for the logit case. 
\end{proof}

The following lemma establishes a bound on the (vector) Lipschitz constants for the individual regularized regression problems $(\text{RR}_1^+)$ and  $(\text{RR}_2)$ for both the probit and the logit case, using the results in \fref{lem:scalarlipschitz}. 
We work out in detail the analysis of $(\text{RR}_1^+)$ for $\vecw_i$, i.e., the transpose of the $i^\text{th}$ row of $\bW$. The proofs for the remaining subproblems for other rows of $\bW$ and all columns of $\bC$ follow analogously. 

\begin{lem}[Lipschitz constants] \label{lem:vectorlips}
For a given $i$ and $j$, let
\begin{align*}
 f_w(\vecw_i) \!& =\! - \sum_j \!\log \, p(Y_{i,j}|\vecw_i, \vecc_j)\! + \frac{\mu}{2} \normtwo{\vecw_i}^2 = - \!\sum_j \!\log \, \Phi((2\, Y_{i,j}-1) \vecw_i^T \vecc_j) + \frac{\mu}{2} \normtwo{\vecw_i}^2, \\
 f_c(\vecc_j) \!& =\! - \sum_i \!\log \, p(Y_{i,j}|\vecw_i, \vecc_j)\! = - \!\sum_i \!\log \, \Phi((2\, Y_{i,j}-1) \vecw_i^T \vecc_j), 
\end{align*} 
where $Y_{i,j}$, $\vecw_i$, and $\vecc_j$ are defined as in \fref{sec:models}. Here, $\Phi(x)$ designates the inverse link function, which can either be \fref{eq:probitlink} or \fref{eq:logitlink}.
Then, for any $\vecx, \vecy \in \mathbb{R}^K$, we have
\begin{align*}
\| \nabla f_w(\vecx)- \nabla f_w(\vecy)\|_2 &\leq (L \sigma_\text{max}^2(\bC) +\mu) \|\vecx - \vecy\|_2, \\
\| \nabla f_c(\vecx)- \nabla f_c(\vecy)\|_2 &\leq L \sigma_\text{max}^2(\bW) \|\vecx - \vecy\|_2,
\end{align*}
where $L = L_\text{pro}=1$ and $L = L_\text{log} = {1}/{4}$ are the scalar Lipschitz constants for the probit and logit cases from \fref{lem:scalarlipschitz}, respectively. 
\end{lem}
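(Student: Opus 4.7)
\textbf{Proof proposal for \fref{lem:vectorlips}.}

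The plan is to write each gradient as a linear combination of the columns of $\bC$ (respectively, the rows of $\bW$), then apply the scalar Lipschitz bound from \fref{lem:scalarlipschitz} componentwise and reassemble the bound through operator norms.

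First I would compute the gradient of $f_w$ explicitly. Introducing the sign variable $s_j = 2Y_{i,j}-1 \in \{-1,+1\}$ and the scalar function $g(x) = \Phi'(x)/\Phi(x)$ used in \fref{lem:scalarlipschitz}, a direct differentiation of $f_w(\vecw_i) = -\sum_j \log\Phi(s_j \vecw_i^T\vecc_j) + \frac{\mu}{2}\|\vecw_i\|_2^2$ yields
\begin{align*}
\nabla f_w(\vecw_i) = -\sum_j s_j\, g(s_j \vecw_i^T\vecc_j)\, \vecc_j + \mu\,\vecw_i.
\end{align*}
Subtracting the same expression evaluated at $\vecx$ and $\vecy$ and using $s_j^2=1$ together with the mean-value theorem applied to the scalar $g$ (whose derivative is bounded in magnitude by $L$ from the proof of \fref{lem:scalarlipschitz}), each difference $s_j[g(s_j\vecx^T\vecc_j)-g(s_j\vecy^T\vecc_j)]$ can be rewritten as $g'(\xi_j)(\vecx-\vecy)^T\vecc_j$ for some intermediate $\xi_j$, with $|g'(\xi_j)|\le L$.

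Next I would recognize the resulting sum as a matrix--vector product. Let $\bD$ be the diagonal matrix with entries $D_{jj}=g'(\xi_j)$, so that $\|\bD\|\le L$ (spectral norm). Then
\begin{align*}
\nabla f_w(\vecx) - \nabla f_w(\vecy) = -\bC\bD\bC^T(\vecx-\vecy) + \mu(\vecx-\vecy).
\end{align*}
Taking $\elltwo$-norms and applying sub-multiplicativity of the spectral norm,
\begin{align*}
\|\nabla f_w(\vecx)-\nabla f_w(\vecy)\|_2 \le \bigl(\|\bC\|\,\|\bD\|\,\|\bC^T\| + \mu\bigr)\|\vecx-\vecy\|_2 \le \bigl(L\,\sigma_{\max}^2(\bC)+\mu\bigr)\|\vecx-\vecy\|_2,
\end{align*}
which is the claimed bound for $f_w$. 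Specializing $L$ to $L_{\text{pro}}=1$ or $L_{\text{log}}=1/4$ gives the two cases.

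The argument for $f_c$ is entirely analogous: its gradient has the form $-\sum_i s_i\, g(s_i\vecw_i^T\vecc_j)\vecw_i$ (no quadratic regularizer), so the same mean-value step produces $-\bW^T\bD'\bW(\vecx-\vecy)$ with $\|\bD'\|\le L$, yielding the bound $L\,\sigma_{\max}^2(\bW)\|\vecx-\vecy\|_2$. I do not anticipate any genuine obstacle; the only care required is in (i) correctly carrying the sign factors $s_j$ through the differentiation so that they square to one, and (ii) invoking the scalar Lipschitz bound \emph{with the correct constant} ($L=1$ for probit, $L=1/4$ for logit), which is already established in \fref{lem:scalarlipschitz}.
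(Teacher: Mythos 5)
Your proposal is correct and follows essentially the same route as the paper's proof: both factor the gradient difference through the sign-adjusted matrix of columns of $\bC$ (you carry the signs $s_j$ explicitly and use $s_j^2=1$; the paper absorbs them into $\bC_{\text{eff},i}$ and notes that sign flips preserve singular values), apply the scalar bound from \fref{lem:scalarlipschitz}, and invoke $\sigma_\text{max}$ twice. The only cosmetic difference is that you package the scalar step as a mean-value-theorem argument with a diagonal matrix $\bD$ satisfying $\opnorm{\bD}\le L$, whereas the paper applies the scalar Lipschitz inequality componentwise; both are justified since the derivative bound $|g'|\le L$ is exactly what the proof of \fref{lem:scalarlipschitz} establishes.
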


\begin{proof}
For the sake of brevity, we only show the proof for $f_w(\vecx)$ in the probit case. The logit cases and the cases for $f_c(\vecx)$ follow analogously.  In what follows, the PDF $\mathcal{N}(x)$ and CDF $\Phi(x)$ of the standard normal density (the inverse probit link function) defined in \fref{eq:probitlink} are assumed to operate element-wise on the vector $\vecx \in \mathbb{R}^K$.

In order to simplify the derivation of the proof, we define the following effective matrix associated to $\vecw_i$ as
 \begin{align*}
\bC_{\text{eff},i} = [\,(2\, Y_{i,1}-1) \vecc_1, \ldots, (2\, Y_{i,N}-1)\vecc_N\,],
\end{align*}
which is equivalent to a right-multiplication $\bC=[\bmc_1, \ldots, \bmc_N]$ with a diagonal matrix containing the binary-valued response variables $(2\, Y_{i,j}-1)\in\{-1,+1\}\;\forall j$.
We can now establish an upper bound of the $\elltwo$-norm of the difference between the gradients at two arbitrary points~$\vecx$ and $\vecy$ as follows:
\begin{align}
\| \nabla f_w(\vecx)- \nabla f_w(\vecy)\|_2 &= \left\| \bC_{\text{eff},i} \frac{\mathcal{N}(\bC_{\text{eff},i}^T \vecx)}{\Phi(\bC_{\text{eff},i}^T \vecx)} - \bC_{\text{eff},i} \frac{\mathcal{N}(\bC_{\text{eff},i}^T \vecy)}{\Phi(\bC_{\text{eff},i}^T \vecy)} + \mu \vecx - \mu \vecy \right\|_2 \notag \\
& \leq \sigma_\text{max}(\bC_{\text{eff},i}) \left\|  \frac{\mathcal{N}(\bC_{\text{eff},i}^T \vecx)}{\Phi(\bC_{\text{eff},i}^T \vecx)} -  \frac{\mathcal{N}(\bC_{\text{eff},i}^T \vecy)}{\Phi(\bC_{\text{eff},i}^T \vecy)} \right \|_2 \label{eq:bound1} + \mu\|     \vecx -  \vecy  \|_2 \\
& \leq L \sigma_\text{max}(\bC_{\text{eff},i}) \|  \bC_{\text{eff},i}^T \vecx -  \bC_{\text{eff},i}^T \vecy\|_2 + \mu \|   \vecx -  \vecy\|_2 \label{eq:bound2} \\
& \leq L  \sigma_\text{max}^2(\bC_{\text{eff},i}) \|   \vecx -  \vecy\|_2 + \mu \|   \vecx -  \vecy\|_2 \label{eq:bound3}  \\
&= (L \sigma_\text{max}^2(\bC) + \mu) \|   \vecx -  \vecy\|_2. \label{eq:bound4} 
\end{align}
Here, \fref{eq:bound1} uses the triangle inequality and the Rayleigh-Ritz theorem of \cite{hornjohnson}, where $\sigma_\text{max}(\bC_{\text{eff},i})$ denotes the principal singular value of $\bC_{\text{eff},i}$. The bound  \fref{eq:bound2} follows from \fref{lem:scalarlipschitz}, and \fref{eq:bound3} is, once more, a consequence of the Rayleigh-Ritz theorem. 
The final equality \fref{eq:bound4} follows from the fact that flipping the signs of the columns of a matrix (as we did to arrive at $\bC_{\text{eff},i}$) does not affect its singular values, which concludes the proof. Note that the proof for $f_c(\cdot)$ follows by omitting $\mu$ and substitute $\bC$ by $\bW$ in \fref{eq:bound4}.
\end{proof}

Note that in all of the above proofs we only considered the case where the observation matrix $\bY$ is fully populated. Our proofs easily adapt to the case of missing entries in $\bY$, by replacing the matrix $\bC$ to $\bC_\mathcal{I}$, where $\bC_\mathcal{I}$ corresponds to the matrix containing the columns of $\bC$ corresponding to the observed entries indexed by the set $\mathcal{I} = \{j: (i,j) \in \Omega_\text{obs} \}$.  We omit the details for the sake of brevity.

\section{Proof of \fref{thm:sparfamconv}}
\label{app:globproof}

\sloppy
Minimizing $F(\vecx)$ as defined in \fref{thm:sparfamconv} using SPARFA-M corresponds to a multi-block coordinate descent problem, where the subproblems $(\text{RR})_1^+$ and $(\text{RR})_2$ correspond to \cite[Problem.~1.2b and 1.2a]{wotao}, respectively. Hence, we can use the results of  \cite[Lemma~2.6, Corrollary~2.7, and Theorem~2.8]{wotao} to establish global convergence of SPARFA-M.
To this end, we must verify that the problem (P) satisfies all of the assumptions in \cite[Assumption~1, Assumption~2, and Lemma~2.6]{wotao}. 
\fussy

\subsection{Prerequisites}

We first show that the smooth part of the cost function in $(\text{P})$, i.e., the negative log-likelihood plus both $\elltwo$-norm regularization terms, is Lipschitz continuous on any bounded set in $\mathbb{R}^{(N+Q)K}$. Then, we show that the probit log-likelihood function is real analytic. Note that the logit log-likelihood function is real analytic as shown in \cite[Section~2.3]{wotao}. Finally, we combine both results to prove \fref{thm:sparfamconv}, which establishes the global convergence of SPARFA-M.
\begin{lem}[Lipschitz continuity] \label{lem:biglips}
Define $\vecx = [\vecw_1^T, \ldots, \vecw_Q^T,\,\,  \vecc_1^T, \ldots, \vecc_N^T\,]^T$, and let
\begin{align*}
f(\vecx) =  -\! \sum_{(i,j)\in\Omega_\text{obs}} \! \log p(Y_{i,j}|\vecw_i , \vecc_j) + \frac{\mu}{2} \sum_{i}  \| \vecw_i \|_2^2 + \frac{\gamma}{2}  \sum_{j} \| \vecc_j \|_2^2.
\end{align*} 
Then, $f(\vecx)$ is Lipschitz continuous on any bounded set $\mathcal{D}= \{ \vecx: \normtwo{\vecx} \leq D \}$.
\end{lem}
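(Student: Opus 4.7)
My plan is to prove Lipschitz continuity of $f$ on $\mathcal{D}$ by the standard route: show that $f$ is continuously differentiable on all of $\reals^{(N+Q)K}$, then uniformly bound $\|\grad f(\vecx)\|_2$ over $\vecx \in \mathcal{D}$, and finally invoke the mean-value inequality using the fact that $\mathcal{D}$ (a closed Euclidean ball) is convex.

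First I would verify that $f \in C^1$. The two quadratic regularization terms are polynomials in the entries of $\vecx$ and hence smooth. For the log-likelihood summands $-\log p(Y_{i,j}\mid \vecw_i,\vecc_j) = -\log \Phi\!\bigl((2Y_{i,j}-1)\vecw_i^T\vecc_j\bigr)$, both the inverse probit and inverse logit link functions are $C^\infty$ and strictly positive on $\reals$, so the composition with the smooth bilinear map $(\vecw_i,\vecc_j)\mapsto \vecw_i^T\vecc_j$ is $C^\infty$ as well. Summation over the (finite) index set $\Omega_\text{obs}$ preserves smoothness, so $\grad f$ exists and is continuous on $\reals^{(N+Q)K}$.

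Next I would bound $\|\grad f(\vecx)\|_2$ uniformly on $\mathcal{D}$. The gradient of the quadratic parts is linear in $\vecx$ with magnitude at most $\max(\mu,\gamma)D$. For the log-likelihood part, the partial derivative with respect to $\vecw_i$ reduces (as in the computations preceding \fref{lem:vectorlips}) to a sum of terms of the form $g\bigl((2Y_{i,j}-1)\vecw_i^T\vecc_j\bigr)\,\vecc_j$, with $g\in\{g_\text{pro},g_\text{log}\}$; the partial with respect to $\vecc_j$ is analogous. On $\mathcal{D}$ we have $\|\vecw_i\|_2,\|\vecc_j\|_2\le D$, so each argument $(2Y_{i,j}-1)\vecw_i^T\vecc_j$ lies in $[-D^2,D^2]$. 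Since $g_\text{pro}$ and $g_\text{log}$ are continuous on $\reals$ (hence bounded on any compact interval), there is a finite constant $G_D$ with $|g(z)|\le G_D$ whenever $|z|\le D^2$; thus each partial gradient block is bounded in norm by $G_D D$ plus the quadratic contribution. Aggregating over the $Q+N$ blocks gives a uniform bound $M_D<\infty$ for $\|\grad f(\vecx)\|_2$ over $\vecx\in\mathcal{D}$. Finally, for any $\vecx,\vecy\in\mathcal{D}$ the convexity of $\mathcal{D}$ ensures that the segment $\{\vecx+t(\vecy-\vecx):t\in[0,1]\}$ lies in $\mathcal{D}$, so the mean-value inequality yields $|f(\vecx)-f(\vecy)|\le M_D\|\vecx-\vecy\|_2$, proving Lipschitz continuity with constant $M_D$.

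The only slightly subtle point is the boundedness of $g_\text{pro}(z)=\mathcal{N}(z)/\Phi(z)$ on compact intervals: \fref{lem:scalarlipschitz} only asserts that $g_\text{pro}$ is $1$-Lipschitz, not that it is globally bounded (indeed $g_\text{pro}(z)\to\infty$ as $z\to-\infty$). However, Lipschitz continuity together with finiteness at a single point immediately gives boundedness on any bounded interval, which is all we need. For the logit case $g_\text{log}(z)=1/(1+e^z)\in(0,1)$ is globally bounded, so this step is trivial. Apart from this observation, the argument is routine.
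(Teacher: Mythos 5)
Your argument is internally sound and does establish the statement as literally worded: $f\in C^1$, $\normtwo{\grad f}$ is uniformly bounded on the ball $\mathcal{D}$, and the mean-value inequality on the convex set $\mathcal{D}$ gives $|f(\vecx)-f(\vecy)|\le M_D\normtwo{\vecx-\vecy}$. The problem is that this is not the property the lemma is actually there to deliver. The paper's own proof bounds
$\normtwo{\grad f(\vecy)-\grad f(\vecz)}\le \bigl(LD^2+\max\{\mu,\gamma\}\bigr)\normtwo{\vecy-\vecz}$,
i.e., it establishes Lipschitz continuity of the \emph{gradient} $\grad f$ on bounded sets, with an explicit constant, by combining the block-wise bounds of \fref{lem:vectorlips} with the inequality $\sigma_\text{max}^2(\bC)\le\normfro{\bC}^2\le D^2$ (and likewise for $\bW$). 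That gradient-Lipschitz property is exactly what is consumed downstream: the hypothesis of the cited result [Lemma~2.6] of \cite{wotao} in the proof of \fref{thm:sparfamconv} requires Lipschitz continuity of $\nabla f$ on bounded sets, not of $f$ itself. Your route---uniformly bounding $\normtwo{\grad f}$ and invoking the mean value theorem---proves only the strictly weaker statement (the paper's bound implies yours, since a function whose gradient is Lipschitz and finite at one point of a bounded set has bounded gradient there, but not conversely), so it cannot be substituted into the convergence argument.

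The fix is essentially a redirection of the machinery you already assembled: instead of bounding $|g(z)|$ on $[-D^2,D^2]$, use the scalar Lipschitz constants of \fref{lem:scalarlipschitz} through \fref{lem:vectorlips} to control the \emph{difference} of block gradients, $\normtwo{\nabla f_w(\vecw_i^y)-\nabla f_w(\vecw_i^z)}\le (L\sigma_\text{max}^2(\bC)+\mu)\normtwo{\vecw_i^y-\vecw_i^z}$ and $\normtwo{\nabla f_c(\vecc_j^y)-\nabla f_c(\vecc_j^z)}\le L\sigma_\text{max}^2(\bW)\normtwo{\vecc_j^y-\vecc_j^z}$, then aggregate over the $Q+N$ blocks and replace the spectral norms by $D^2$ using the Frobenius-norm bound valid on $\mathcal{D}$. (If you do this, note also that the gradient must be Lipschitz jointly in $\vecx$, so the dependence of $\nabla_{\vecw_i}f$ on the $\vecc_j$ blocks, and vice versa, has to be controlled as well; boundedness of all blocks on $\mathcal{D}$ is what makes this work.)
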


\begin{proof}
Let $\vecy, \vecz \in \mathcal{D}$, recall the notation of \fref{lem:vectorlips}, and let $\vecw_i^y$, $\vecw_i^z$, $\vecc_j^y$, and~$\vecc_j^z$ denote the blocks of variables $\vecw_i$ and $\vecc_j$ in $\vecy$ and $\vecz$, respectively. We now have
\begin{align}
\normtwo{\nabla f(\vecy) \! - \! \nabla f(\vecz)} & = \!  \Big(  \!\sum_{i,j} \! \big( \left( \nabla f_w(\vecw_i^y) \! - \! \nabla f_w(\vecw_i^z) \right)^2 \! + \! ( \nabla f_c(\vecc_j^y) \! - \! \nabla f_c(\vecc_j^z) )^2 + \gamma^2 \|\vecc_j^y \! - \! \vecc_j^z\|^2_2 \big) \! \Big)^{\frac{1}{2}}  \notag \\
& \leq \Big( \sum_{i,j} \big( \! \left( \! L \sigma_\text{max}^2(\bC) \! + \! \mu \right)^2 \normtwo{\vecw_i^y \!  - \! \vecw_i^z}^2 + \left( \! L^2 \sigma_\text{max}^4(\bW) \! + \! \gamma^2 \right) \|\vecc_j^y \! - \! \vecc_j^z\|^2_2  \big)  \Big)^{\frac{1}{2}} \label{eq:fromlem4} \\
& \leq \left( L \left( \|\bW\|_F^2 + \|\bC\|_F^2 \right) + \text{max} \{\mu, \gamma\} \right) \normtwo{\vecy - \vecz} \label{eq:nucfro} \\ 
& \leq \left( L D^2 + \text{max}\{\mu, \gamma\} \right) \|\vecy - \vecz\|_2, \notag 
\end{align} 
where \fref{eq:fromlem4} follows from \fref{lem:vectorlips}, and \fref{eq:nucfro} follows from the fact that the maximum singular value of a matrix is no greater than its Frobenius norm (\cite{hornjohnson}), which is bounded by $D$ for $\vecx, \vecy \in \mathcal{D}$.  We furthermore have $L = 1$ for the probit case and $L = {1}/{4}$ for the logit case, shown in \fref{lem:scalarlipschitz}.
Thus, $f(\vecx)$ is Lipschitz continuous on any bounded set.
\end{proof}
\begin{lem}[Real analyticity] \label{lem:analytic}
Define 
$\vecx = [\vecw_1^T, \ldots, \vecw_Q^T, \, \vecc_1^T, \ldots, \vecc_N^T\,]^T$, and let 
\begin{align*}
g(\vecx) &=  - \sum_{(i,j)\in\Omega_\text{obs}} \log p(Y_{i,j}|\vecw_i , \vecc_j) =  - \sum_{(i,j)\in\Omega_\text{obs}} \log \Phi((2 Y_{i,j}-1) \vecw_i^T \vecc_j),
\end{align*} 
where $\Phi_\text{pro}(\cdot)$ is the inverse probit link function defined in \fref{eq:probitlink}. Then, $g(\vecx)$ is real analytic.
\end{lem}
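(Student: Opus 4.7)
The plan is to show that $g$ is real analytic by expressing it as a finite sum of compositions of real analytic functions, using the standard closure properties of the class of real analytic functions on $\mathbb{R}^n$ (closure under finite sums, products, and compositions with real analytic functions on the appropriate domain; see, e.g., Krantz and Parks, \emph{A Primer of Real Analytic Functions}).

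First I would argue that the inner linear form is analytic: for each fixed $(i,j)\in\Omega_\text{obs}$, the map $\vecx\mapsto (2Y_{i,j}-1)\vecw_i^T\vecc_j$ is a polynomial (bilinear) in the entries of $\vecx$ and therefore real analytic on all of $\mathbb{R}^{(N+Q)K}$. Next, I would verify that $\Phi_\text{pro}:\mathbb{R}\to\mathbb{R}$ is real analytic. Two equivalent routes: (i) $\Phi_\text{pro}$ is the antiderivative of the Gaussian density $\mathcal{N}(t)=\frac{1}{\sqrt{2\pi}}e^{-t^2/2}$, which is real analytic on $\mathbb{R}$ (its Taylor series about any point $x_0\in\mathbb{R}$ converges on all of $\mathbb{R}$), and the antiderivative of a real analytic function is real analytic; or (ii) write $\Phi_\text{pro}(x)=\tfrac{1}{2}\bigl(1+\operatorname{erf}(x/\sqrt{2})\bigr)$ and invoke the fact that $\operatorname{erf}$ is an entire function.

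The next step is the composition with $\log$. Since $\Phi_\text{pro}(x)>0$ for every $x\in\mathbb{R}$ (as $\mathcal{N}>0$), the composition $\log\circ\,\Phi_\text{pro}$ is well defined on all of $\mathbb{R}$. Because $\log$ is real analytic on $(0,\infty)$ and the composition of real analytic functions (mapping into an open set where the outer function is analytic) is real analytic, the function $x\mapsto -\log \Phi_\text{pro}(x)$ is real analytic on $\mathbb{R}$. Composing with the analytic map $\vecx\mapsto (2Y_{i,j}-1)\vecw_i^T\vecc_j$ gives that each summand $\vecx\mapsto -\log\Phi_\text{pro}\bigl((2Y_{i,j}-1)\vecw_i^T\vecc_j\bigr)$ is real analytic on $\mathbb{R}^{(N+Q)K}$. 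Finally, a finite sum of real analytic functions is real analytic, so $g(\vecx)$ is real analytic, as claimed.

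I do not anticipate a serious technical obstacle here; the only non-trivial ingredient is the fact that an antiderivative of a real analytic function is real analytic. I would justify this either by appealing to a standard reference, or by noting directly that termwise integration of the power series for $\mathcal{N}(t)$ about any $x_0$ yields a power series for $\Phi_\text{pro}$ with the same (infinite) radius of convergence. All remaining manipulations (closure under finite sums and under composition with analytic functions of an open argument) are standard and require no additional work.
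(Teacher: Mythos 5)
Your proposal is correct and follows essentially the same route as the paper: both arguments establish that the Gaussian density and hence its antiderivative $\Phi_\text{pro}$ are real analytic, then invoke closure of real analytic functions under composition (with $\log$ on $(0,\infty)$ and with the bilinear inner map) and under finite sums. If anything, your version is slightly cleaner in explicitly noting that $\Phi_\text{pro}>0$ everywhere so the logarithm is well defined, and in justifying analyticity of the antiderivative by termwise integration of the power series rather than by the paper's looser derivative-growth remark.
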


\begin{proof}
We first show that the scalar negative log-likelihood function for the inverse probit link function is real analytic. 
To this end, recall the important property established by \cite{ra} that compositions of real analytic functions are real analytic. Therefore, the standard normal density $\mathcal{N}(x)$ is real analytic, since the exponential function and $x^2$ are both real analytical functions. 
Consequently, let $\mathcal{N}^{(k)}(x)$ denote the $k^\text{th}$ derivative of $\mathcal{N}(x)$, then $\left( \frac{\mathcal{N}^{(k)}(x)}{k!} \right)^\frac{1}{k}$ is bounded for all $k$, according to the definition of real analytic functions.

Now we show that $\Phi(x) = \int_{- \infty}^x \mathcal{N}(t) \text{d}t$ is also real analytic. Its $k^\text{th}$ derivative is given by $\Phi^{(k)}(x) = \mathcal{N}^{(k-1)}(x)$, and therefore $\left( \frac{\Phi^{(k)}(x)}{k!} \right)^\frac{1}{k}$ is obviously bounded for all $k$, since $\left( \frac{\mathcal{N}^{(k)}(x)}{k!} \right)^\frac{1}{k}$ is bounded for all $k$ as we have just shown. Thus, $\Phi(x)$ is real analytic. 

Given that $\Phi(x)$ is real-analytic, it follows that the negative log-probit-likelihood $- \text{log} \Phi(x)$ is real analytic, since both the logarithm function and the inverse probit link function are real analytic. 
Finally, extending the proof from scalar functions to vector functions preserves analyticity according to \cite[Section~2.2]{wotao}.
\end{proof}

\subsection{Proof of \fref{thm:sparfamconv}}
\label{sec:proof2}

We are finally armed to  prove \fref{thm:sparfamconv}.
We begin by showing that our problem (P) meets \cite[Assumptions~1 and 2]{wotao}. Then, we show that (P) meets all the additional assumptions needed for the convergence results in \cite[Lemma~2.6]{wotao}, through which we can establish convergence of the sequence  $\{ \vecx^t \}$  from certain starting points to some finite limit point. Finally, we use \cite[Theorem~2.8]{wotao} to show global convergence of SPARFA-M from any starting point. %

\subsubsection{Assumption 1}

We start by showing that (P) meets \cite[Assumption~1]{wotao}. Since every term in our objective function in $(\text{P})$ is non-negative, we have $F(\vecx) > -\infty$.
It is easy to verify that (P) is also block multi-convex in the variable $\vecx$, with the rows of $\bW$ and columns of~$\bC$ forming the blocks. Consequently, the problem $(\text{P})$ has at least one critical point, since $F(\vecx)$ is lower bounded by $0$. Therefore, Assumption~$1$ is met.

\subsubsection{Assumption 2}

Problem (P) also meets \cite[Assumption~2]{wotao} regarding the strong convexity of the individual subproblems. 
Due to the presence of the quadratic terms $\frac{\mu}{2}\normtwo{\vecw_i }^2$ and $\frac{\gamma}{2}\normtwo{\vecc_j}^2$, the smooth part of the objective functions of the individual subproblems $(\text{RR}_1^+)$ and $(\text{RR}_2)$ are strongly convex with parameters $\mu$ and $\gamma$, respectively.
Consequently, Assumption~2 is satisfied.

\subsubsection{Assumptions in \cite[Lem.~2.6]{wotao}}

Problem (P) also meets the assumptions in \cite[Lem.~2.6]{wotao} regarding the Lipschitz continuity of the subproblems and the Kurdyka-\L ojasiewicz inequality.
\fref{lem:biglips} shows that $f(\vecx) =  -\! \sum_{(i,j)\in\Omega_\text{obs}} \! \log p(Y_{i,j}|\vecw_i , \vecc_j) +  \frac{\mu}{2} \sum_{i}  \| \vecw_i \|_2^2 + \frac{\gamma}{2}  \sum_{j} \| \vecc_j \|_2^2$, satisfies the Lipschitz continuous requirement in \cite[Lemma~2.6]{wotao}.
As shown in \fref{lem:analytic} for the probit case and as shown in \cite[Section~2.2]{wotao} for the logit case, the negative log-likelihood term in $(\text{P})$ is real analytic, therefore also sub-analytic. 
All the regularizer functions in $F(\vecx)$ defined in \fref{thm:sparfamconv} are semi-algebraic and therefore sub-analytic, a consequence of \cite[Section~2.1]{semialg} and \cite[Section~2.2]{wotao}.
Using \cite[Theorems~1.1 and~1.2]{subsum}, the objective function $F(\vecx)$ is also sub-analytic, since all of its parts are sub-analytic and bounded below (non-negative), therefore satisfying the Kurdyka-\L{}ojasiewicz inequality at any point $\vecx$, as shown in \cite[Theorem~3.1]{semialg}. 
Finally, the SPARFA-M algorithm uses $\omega_i^{k-1} \equiv 0$ and $\ell = \text{min}\{\mu,\gamma\}$ where $\omega_i^{k-1}$ and $\ell$ as defined in \cite[Lemma~2.6]{wotao}.

Up to this point, we have shown that $(\text{P})$ satisfies all assumptions and requirements in \cite[Lemma~2.6]{wotao}. 
Now, SPARFA-M follows \cite[Lemma~2.6]{wotao} in the sense that, if $\vecx^0$ is sufficiently close to some critical point $\hat{\vecx}$ of $(\text{P})$, (more specifically, $\vecx^0 \in \mathcal{B}$ for some $\mathcal{B} \subset \mathcal{U}$ where $\mathcal{U}$ is a neighborhood of $\hat{\vecx}$ in which Kurdyka-\L{}ojasiewicz inequality holds), then $\{ \vecx^k \}$ converges to a point in $\mathcal{B}$. This establishes the convergence of SPARFA-M to a local minimum point from certain starting points.

\subsubsection{Global Convergence}

Finally, we can use \cite[Lemma~2.6]{wotao} to establish global convergence of \mbox{SPARFA-M}. It is obvious that the objective function $(\text{P})$ is bounded on any bounded set.
Hence, the sequence $\{ \vecx^k \}$ will always have a finite limit point and meet the assumptions in \cite[Theorem~2.8]{wotao}. The final statement of \fref{thm:sparfamconv} now directly follows from \cite[Theorem~2.8]{wotao}. 
Moreover, if the starting point is in close proximity to a global minimum, then SPARFA-M is guaranteed to converge to a global minimum.
This is a consequence of \cite[Corollary~2.7]{wotao}.

\section{Proof of Theorem \ref{thm:wpost}}
\label{app:sparfabd}

\begin{proof}
To prove Theorem \ref{thm:wpost}, we first define some notation. Let $\mathcal{N}(x | m,s) = \frac{1}{\sqrt{2 \pi s}} e^{-(x-m)^2/2s}$ define the normal PDF with mean $m$ and variance $s$. Furthermore, let $\textit{Exp}(m|\lambda) = \lambda e^{- \lambda m}$, $m \geq 0$ define the PDF of the exponential distribution with rate parameter $\lambda$. 

We are ultimately concerned with identifying whether the factor $W_{i,k}$ is active given our current beliefs about all other parameters in the model. Given the probit model, this is equivalent to determining whether or not an exponential random variable is present in Gaussian noise.  Let $x|m,s \sim \mathcal{N}(0|m,s)$ with $m \sim r \, \textit{Exp}(m|\lambda) + (1-r) \, \delta_0$ and $\delta_0$ the Dirac delta function located at $0$.
The posterior distribution $p(m = 0 | x)$ can be derived via Bayes' rule as follows:
\begin{align}
p(m = 0 | x) &= \frac{\mathcal{N}(x|m=0,s) (1-r)}{\mathcal{N}(x|m=0,s) (1-r) + r \int\mathcal{N}(x|m,s) \, \textit{Exp}(m|\lambda) \text{d}m}, \notag \\
 &=\frac{ \frac{\mathcal{N}(x|0,s)}{\int \mathcal{N}(x|m,s) \, \textit{Exp}(m|\lambda) dm} (1-r)}{ \frac{\mathcal{N}(x|0,s)}{\int \mathcal{N}(x|m,s) \, \textit{Exp}(m|\lambda) \text{d}m} (1-r) + r },  \notag \\
 &=\frac{ \frac{ \textit{Exp}(0|\lambda)\mathcal{N}(x|0,s)}{\int \mathcal{N}(x|m,s) \, \textit{Exp}(m|\lambda) \text{d}m} (1-r)}{ \frac{\textit{Exp}(0|\lambda)\mathcal{N}(x|0,s)}{\int \mathcal{N}(x|m,s) \, \textit{Exp}(m|\lambda) \text{d}m} (1-r) + r\textit{Exp}(0|\lambda)}. \label{eq:lastthing}
\end{align}
Here, it is important to recognize that $\frac{\textit{Exp}(m|\lambda) \mathcal{N}(x|m,s)}{\int \mathcal{N}(x|m,s) \, \textit{Exp}(m|\lambda) d m }$ denotes the posterior under the continuous portion of the prior (i.e. $m \neq 0$).  Since the exponential prior we have chosen is not conjugate to the normal likelihood, we must compute this distribution in closed form.  
To this end, let $\mathcal{N}^r(x|m,s,\lambda) \propto \mathcal{N}(x|m,s) \textit{Exp}(m|\lambda) = C_0 e^{-(x-m)^2/2s - \lambda m}$ denote a rectified normal distribution with normalization constant $C_0$.   Completing the square and carrying out the integration, we find $C_0 =  \frac{e^{\lambda m - \lambda^2 s / 2}}{\sqrt{2 \pi s} \Phi\bigl(\frac{m-\lambda s}{\sqrt{s}} \bigr)}$, which leads to 
\begin{align*}
\mathcal{N}^r(x|m,s,\lambda) =  \frac{e^{\lambda m - \lambda^2 s / 2}}{\sqrt{2 \pi s} \Phi\bigl(\frac{m-\lambda s}{\sqrt{s}} \bigr)} e^{-(x-m)^2/2s - \lambda m}.
\end{align*}
We can now rewrite \eqref{eq:lastthing} as
\begin{align*} 
p(m=0|x) = \frac{\frac{\mathcal{N}^r(0|\hat{m},\hat{s},\lambda)}{\textit{Exp}(0|\lambda)} (1-r)}{\frac{\mathcal{N}^r(0|\hat{m},\hat{s},\lambda)}{\textit{Exp}(0|\lambda)} (1-r) + r} 
\end{align*}
or, alternatively, as
\begin{eqnarray} \label{eq: selectprobalt}
\hat{r} = p(m \neq 0 | x) = 1-p(m = 0 | x) = \frac{\frac{\textit{Exp}(0|\lambda)}{\mathcal{N}^r(0|\hat{m},\hat{s},\lambda)}}{\frac{\textit{Exp}(0|\lambda)}{\mathcal{N}^r(0|\hat{m},\hat{s},\lambda)} + \frac{1-r}{r}}.
\end{eqnarray}

All that remains now is to determine $\hat{m}$ and $\hat{s}$ in \fref{eq: selectprobalt} for our full factor analysis scenario. Recall that our probabilistic model corresponds to \mbox{$\bZ = \bW \bC + \bM$}. Further recall our definition of the observation set $\Omega_{\text{obs}} = \{(i,j) : Y_{i,j} \;\; \text{is observed} \}$. We can now calculate the posterior on each coefficient $W_{i,k} \neq 0$ as follows: 
\begin{align}
p(W_{i,k}| \bZ, \bC, \bimud) \!&\propto p(W_{i,k}) \, p(\bZ | \bW_{-(i,k)}, \bC, \bimud) \nonumber \\
&\propto e^{-\lambda W_{i,k}} \, e^{-\frac{1}{2 \sigma^2} \sum_{\{j : (i,j) \in \Omega_{\text{obs}} \}} \left((Z_{i,j} - \mu_i) - \sum_{k = 1}^{K} W_{i,k} C_{k,j} \right)^2} \nonumber \\
&= e^{-\lambda W_{i,k}} \, e^{-\frac{1}{2 \sigma^2} \sum_{\{j : (i,j) \in \Omega_{\text{obs}} \}} \left((Z_{i,j} - \mu_i) - \sum_{k^\prime \neq k} W_{i,k^\prime} C_{k^\prime, j} - W_{i,k} C_{k,j} \right)^2} \nonumber\\
&\propto e^{-\lambda W_{i, k}} \, e^{-\frac{1}{2 \sigma^2} \sum_{\{j : (i,j) \in \Omega_{\text{obs}} \}} \left( W_{i, k}^2 C_{k,j}^2 - 2 ((Z_{i,j} - \mu_i)  - \sum_{k^\prime \neq k} W_{i,k^\prime} C_{k^\prime, j} )  W_{i,k} C_{k,j}\right)} \nonumber \\
&\propto  e^{-\lambda W_{i, k}} e^{\!-\frac{\sum_{\{j : (i,j) \in \Omega_{\text{obs}} \}} \!C_{k, j}^2}{2 \sigma^2} \!\Big( \!W_{i, k}  -   \frac{\sum_{\{j : (i,j) \in \Omega_{\text{obs}} \}}\! \big( \! (Z_{i,j} - \mu_i)  - \sum_{k^\prime \neq k} \!W_{i, k^\prime} C_{k^\prime, j} \!\big) \!C_{k, j}}{\sum_{\{j : (i,j) \in \Omega_{\text{obs}} \}} C_{k, j}^2}\! \Big)^2\!,  \label{eq:finalform}} 
\end{align}
where the last step is obtained by completing the square in $W_{i, k}$. 

The final result in \fref{eq:finalform} implies that $W_{i, k} \sim \mathcal{N}^\textit{r}(\hat{m},\hat{s},\lambda)$, where 
\begin{align*}
\hat{m} = \frac{\sum_{\{j : (i,j) \in \Omega_{\text{obs}} \}} \bigl( (Z_{i,j} - \mu_i)  - \sum_{k^\prime \neq k} W_{i, k^\prime} C_{k^\prime, j} \bigr) C_{k, j}}{\sum_{\{j : (i,j) \in \Omega_{\text{obs}} \}} C_{k, j}^2}
\end{align*} 
and $\hat{s} = \frac{\sigma^2}{\sum_{\{j : (i,j) \in \Omega_{\text{obs}} \}} C_{k, j}^2}$. Combining the results of \fref{eq: selectprobalt} and \fref{eq:finalform}, recognizing that $\sigma^2 = 1$ in the standard probit model, and adopting the notation $\widehat{R}_{i,k}$, $\widehat{M}_{i,k}$ and $\widehat{S}_{i,k}$ for the values of $\hat{r}$, $\hat{m}$ and $\hat{s}$ corresponding to each $\widehat{W}_{i,k}$, furnishes the final sampling result.
\end{proof}

\section*{Acknowledgments}

Thanks to Wotao Yin and Yangyang Xu for helpful discussions on the convergence proof of \mbox{SPARFA-M}, Genevera Allen for insights into probit regression, Marina Vannucci for helpful discussions regarding Bayesian factor analysis, Daniel Calder\'{o}n for organizing and administering the Amazon Mechanical Turk experiments in \fref{sec:realmturk}, and Carlos Monroy and Reid Whitaker for providing the STEMscopes data. We furthermore thank the anonymous reviewers for their valuable comments which improved the exposition of our results.

This work was supported by the National Science Foundation under Cyberlearning grant IIS-1124535, the Air Force Office of Scientific Research under grant FA9550-09-1-0432, the Google Faculty Research Award program.

Please see our website \url{www.sparfa.com}, where you can learn more about the project and purchase SPARFA t-shirts and other merchandise.

\bibliography{sparfaclustbib}

\end{document}